\documentclass{article}

\usepackage[preprint]{neurips_2026}

\usepackage[utf8]{inputenc}     
\usepackage[T1]{fontenc}        
\usepackage{hyperref}           
\hypersetup{hypertexnames=false}
\usepackage{url}                
\usepackage{booktabs}           
\usepackage{amsfonts}           
\usepackage{nicefrac}           
\usepackage{microtype}          
\usepackage{caption}            
\usepackage[dvipsnames]{xcolor} 
\definecolor{darkgreen}{rgb}{0,0.5,0}
\usepackage{colortbl}
\usepackage{graphicx}
\usepackage{wrapfig}
\usepackage{subfigure}
\usepackage{multirow}
\usepackage{float}
\usepackage{placeins}
\usepackage{adjustbox}
\usepackage{tabularx}
\usepackage{amsmath}
\usepackage{amssymb}
\usepackage{mathtools}
\usepackage{amsthm}
\usepackage{listings}
\usepackage{tcolorbox}
\usepackage{makecell}
\tcbuselibrary{breakable,skins}
\usepackage{enumitem}
\usepackage{multicol}
\usepackage{algorithm}
\usepackage{algorithmic}
\newcommand{\TO}{\textbf{to}~}
\newcommand{\RETURN}{\STATE \textbf{return}~}
\usepackage{pifont}
\newcommand{\cmark}{\ding{51}}
\newcommand{\xmark}{\ding{55}}
\newcommand{\std}[1]{{\footnotesize$_{\pm#1}$}}

\newcommand{\flowr}{FlowSteer}
\newcommand{\canvas}{Workflow Canvas}
\newcommand{\director}{Flow-Director}

\makeatletter
\def\thm@space@setup{%
  \thm@preskip=3pt plus 1pt minus 1pt
  \thm@postskip=3pt plus 1pt minus 1pt
}
\makeatother

\newtheorem{proposition}{Proposition}

\usepackage[capitalize,noabbrev]{cleveref}

\theoremstyle{plain}

\theoremstyle{definition}

\theoremstyle{remark}

\makeatletter
\renewenvironment{proof}[1][\proofname]{\par
  \pushQED{\qed}%
  \normalfont \topsep3\p@\@plus1\p@\@minus1\p@
  \trivlist
  \item[\hskip\labelsep
        \itshape
    #1\@addpunct{.}]\ignorespaces
}{%
  \popQED\endtrivlist\@endpefalse
}
\makeatother

\title{FlowSteer: Towards Agents Designing Agentic Workflows via Reinforced Progressive Canvas Editing}

\author{%
  Mingda Zhang$^{1}$, Wenjin Liu$^{2}$, Tiesunlong Shen$^{3}$, Qika Lin$^{3}$ \\
  Rui Mao$^{2}$, Erik Cambria$^{2}$, Xiaoying Tang$^{1}$, Haoran Luo$^{2}$ \\[6pt]
  $^{1}$The Chinese University of Hong Kong, Shenzhen \\
  $^{2}$Nanyang Technological University \quad
  $^{3}$National University of Singapore
}

\begin{document}

\setlength{\textfloatsep}{8pt plus 2pt minus 2pt}
\setlength{\floatsep}{8pt plus 2pt minus 2pt}
\setlength{\intextsep}{8pt plus 2pt minus 2pt}
\setlength{\dbltextfloatsep}{8pt plus 2pt minus 2pt}
\setlength{\dblfloatsep}{8pt plus 2pt minus 2pt}

\maketitle
\suppressfloats[t] 

\raggedbottom

\makeatletter
\renewcommand{\section}{\@startsection{section}{1}{\z@}%
  {-1.6ex \@plus -0.3ex \@minus -0.2ex}%
  { 1.0ex \@plus  0.1ex}%
  {\large\bf\raggedright}}
\renewcommand{\subsection}{\@startsection{subsection}{2}{\z@}%
  {-1.4ex \@plus -0.2ex \@minus -0.1ex}%
  { 0.5ex \@plus  0.1ex}%
  {\normalsize\bf\raggedright}}
\renewcommand{\subsubsection}{\@startsection{subsubsection}{3}{\z@}%
  {-1.2ex \@plus -0.2ex \@minus -0.1ex}%
  { 0.4ex \@plus  0.1ex}%
  {\normalsize\bf\raggedright}}
\makeatother

\begin{abstract}
In recent years, agentic workflows have been widely applied to solve complex human tasks. However, existing workflow construction still faces key challenges, including human-dependent workflow construction, the lack of graph-level execution feedback, and the inability to repair errors in-loop during long-horizon construction. To address these challenges, we propose \flowr{}, a new paradigm of Agent Designing Agentic Workflows — a single agent itself end-to-end designs the workflow that a downstream executor runs. To support this paradigm, we introduce the Workflow Canvas, a novel executable graph-state environment that returns syntax-checked execution feedback for every atomic edit. Built on the canvas, we further propose Reinforced Progressive Canvas Editing, in which a lightweight policy agent issues one atomic edit per turn conditioned on real canvas feedback, and is trained end-to-end via reinforcement learning. Moreover, \flowr{} provides a plug-and-play framework that supports diverse operator libraries and interchangeable LLM backends. Experimental results on twelve datasets show that \flowr{} significantly outperforms baselines across various tasks. Our code is available at \url{https://anonymous.4open.science/r/FlowSteer-9B2E}.
\end{abstract}

\section{Introduction}

In recent years, a variety of powerful agentic systems have been applied to solve a wide range of human problems~\citep{tptu2023,hugginggpt2023,metagpt2024}, gradually moving beyond single-turn question answering (QA) toward executable end-to-end task completion.

\begin{wrapfigure}{r}{0.5\textwidth}
\vspace{-4.5mm}
\centering
\includegraphics[width=6.75cm]{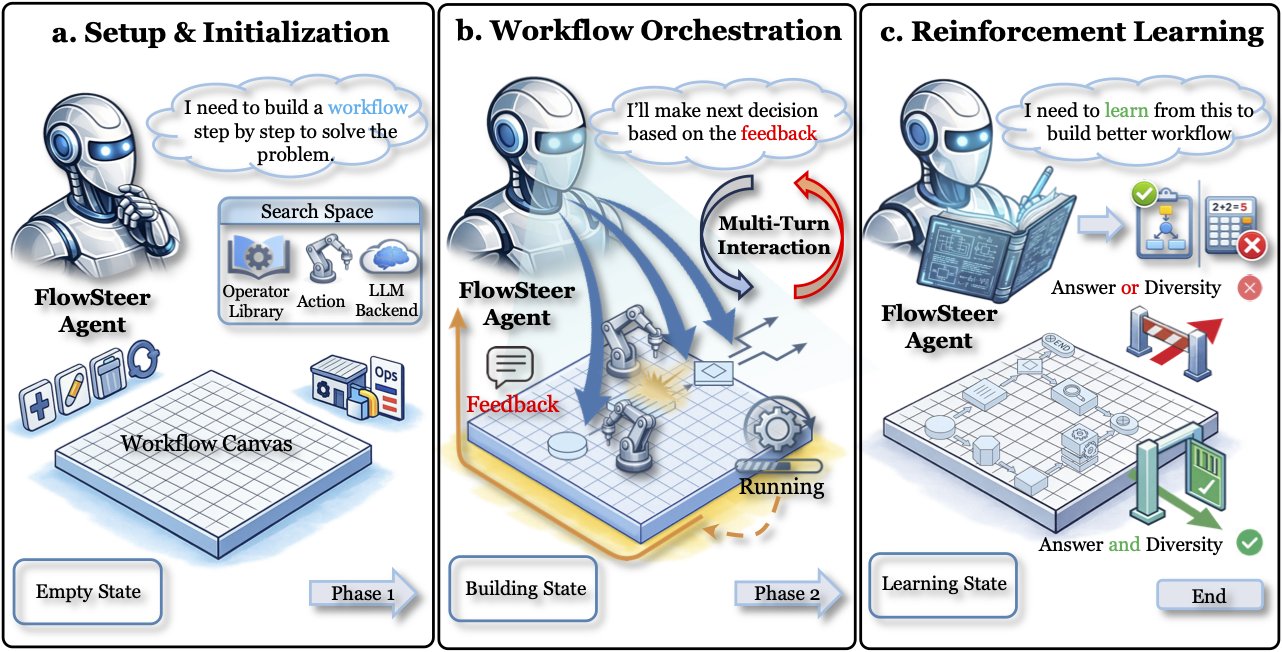}
\caption{Overview of the \flowr{} framework. The agent initializes with the task, then through multi-turn interaction with the canvas iteratively builds and refines the workflow, finally learning from diversity-constrained rewards.}
\label{fig:example}
\vspace{-4mm}
\end{wrapfigure}

\begin{figure}[t]
\centering
\includegraphics[width=\linewidth]{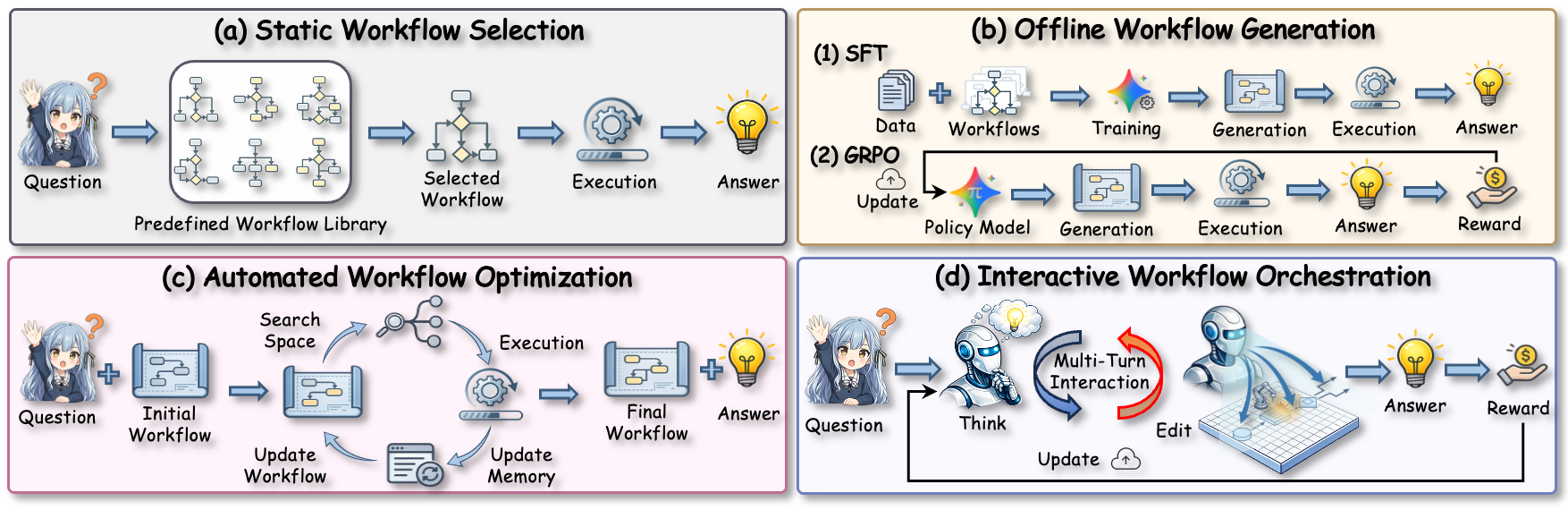}
\caption{Comparison of different workflow orchestration paradigms: static workflow selection, offline workflow generation, automated workflow optimization, and our interactive workflow orchestration framework \flowr{}.}
\label{fig:comparison}
\end{figure}

In this process, workflow orchestration has become a key bridge from task goals to reproducible execution: by organizing operators into an executable workflow graph, systems can complete complex tasks with improved controllability, debuggability, and reusability~\citep{flowmind2023,chatdev2024}, as shown in Figure~\ref{fig:example}. However, in practice, workflow construction still heavily relies on manual drag-and-drop and rule-based configuration~\citep{aflow2024,scoreflow2025}, making it costly to transfer across new tasks, new operator libraries, new model backends, or different application domains.

To address these issues, three main paradigms of workflow orchestration have emerged, as shown in Figure~\ref{fig:comparison}. First, static workflow selection retrieves pre-defined workflows from a library based on task similarity~\citep{awm2024}. Second, offline workflow generation trains policy models via supervised fine-tuning (SFT) or group relative policy optimization (GRPO)~\citep{deepseekmath2024} to generate workflows. Third, automated workflow optimization---such as AFlow~\citep{aflow2024}, GPTSwarm~\citep{gptswarm2024}, RobustFlow~\citep{robustflow2025}, and LATS~\citep{lats2023}---combines search and execution feedback to iteratively improve workflow structures.

However, these methods still face several challenges:
\textbf{(i) Human-dependent workflow construction}---workflow graphs are still produced primarily through manual drag-and-drop, rule templates, or one-shot LLM prompting~\citep{aflow2024,awm2024}, requiring continuous human effort to maintain and adapt across new tasks, which severely limits scalability and generalization;
\textbf{(ii) Lack of graph-level execution feedback}---existing agent environments operate at the level of text generation or external tool calls~\citep{react2022,toolformer2023}, lacking an executable abstraction that can provide structured feedback on the workflow graph itself; without such feedback, the policy cannot diagnose or repair structural errors during construction;
\textbf{(iii) Inability to repair errors in-loop during long-horizon construction}---one-shot generation or external search-based construction cannot iteratively repair errors as they emerge across many editing steps~\citep{steptool2024,deepseekr12025}, leading to sparse, delayed learning signals, shortcut behaviors (e.g., premature termination, oversimplified graphs), and unstable long-horizon credit assignment.

To address these challenges, we propose \flowr{}, a new paradigm of \emph{Agent Designing Agentic Workflows}, in which a single agent end-to-end designs the workflow that a downstream executor runs---replacing the manual-design loop. To support this paradigm, we introduce the \emph{\canvas{}}, a novel executable graph-state environment that maintains the workflow graph and returns syntax-checked execution feedback for every atomic edit, lifting agent--environment interaction from prompt rewriting to graph-level operations. Built on this canvas, we further propose \emph{Reinforced Progressive Canvas Editing}: a lightweight policy agent (\director{}) issues one atomic edit per turn---inserting, deleting, modifying, or configuring an operator node---conditioned on real canvas feedback, decomposing long-horizon workflow construction into checkable and repairable local decisions. The policy is trained end-to-end via reinforcement learning on canvas trajectories. Moreover, \flowr{} naturally supports plug-and-play deployment across diverse operator libraries and interchangeable LLM backends.

We perform experiments on twelve datasets covering mathematical reasoning, question answering, and code generation, across six modern LLM backbones. Experimental results demonstrate that \flowr{} outperforms static-workflow, search-based, and agent-RL baselines in task accuracy, orchestration efficiency (token consumption and interaction turns), and cross-backbone transferability. As shown in Figure~\ref{fig:comparison}, the reinforced progressive canvas editing strategy guides the agent through iterative graph-level edits with real execution feedback, effectively bridging the gap between human-designed workflows and self-designing agentic systems. This work lays a foundation for building the next generation of agent-designing-agent systems for autonomous agentic-workflow construction.

\section{Related Work}

\textbf{Agent Workflows.} Before the rise of LLM agents, workflow automation was largely rule- or template-driven, with operators and control flow specified by hand. In the era of LLM agents, agent workflows improve long-horizon reliability via a plan--act--feedback loop~\citep{planandact2025,hiplan2025,metagpt2024}. Existing studies can be grouped into three lines: single-agent decision making, which models tool use as sequential decisions~\citep{planandact2025} or interleaved reasoning and acting~\citep{agentoccam2025}; orchestration, which uses LLM controllers for tool/model routing~\citep{agentsquare2025,xrouter2025} or constrained API planning to ground intent~\citep{awm2024}; and multi-agent collaboration, which relies on standard operating procedures (SOPs)/roles~\citep{metagpt2024} or cross-team orchestration~\citep{agentverse2023}. For sustained capability, agentic tool reasoning~\citep{agenticreasoning2025} and reusable workflow memory further improve efficiency~\citep{awm2024,legomem2025}, while modular agent search automates compositional design~\citep{agentswift2025}.

\textbf{Reinforcement Learning for Agents.} Recent agent RL frames interaction as a long-horizon Markov decision process (MDP), where coupling value learning with token-level policy learning eases delayed credit assignment in hierarchical multi-turn settings~\citep{archer2024,multiturnrl2025}. For tool-chain control, step-grained shaping~\citep{steptool2024,istar2025} and outcome feedback improve tool selection and self-correction~\citep{retool2025,verltool2025}. When retrieval is treated as an action, search rollouts learn think-then-search behaviors~\citep{searchr12025}. Moreover, large-scale RL motivates GRPO-style objectives based on verifiable or group-relative signals~\citep{deepseekr12025,deepseekmath2024,treegrpo2025}.

\section{Preliminaries}
\label{sec:preliminaries}

\textbf{Definition 1: Workflow Graph.} A workflow graph is a directed acyclic graph $\mathcal{G}=(V,E,\mathrm{attr})$, where $V=\{v_1,\dots,v_n\}$ is a set of $n$ operator nodes, $E \subseteq V \times V$ encodes data dependencies and execution order, and $\mathrm{attr}(v) = (\mathrm{op}(v), \mathrm{param}(v), \mathrm{prompt}(v))$ specifies the operator type from library $\mathcal{O}$, parameter configuration, and execution prompt for each node $v \in V$ in the workflow.

\textbf{Definition 2: Orchestration Trajectory.} An orchestration trajectory is a complete sequence of $T$ interaction steps that uniquely determines a workflow graph $\mathcal{G}_\tau$:
\begin{equation}
\tau = \{(a_t^{\mathrm{think}}, a_t, o_t^{\mathrm{exec}})\}_{t=1}^{T} \quad \Rightarrow \quad \mathcal{G}_\tau,
\label{eq:trajectory}
\end{equation}
where $T$ is the total number of interaction steps before termination, $a_t^{\mathrm{think}}$ denotes the reasoning reflection at step $t$, $a_t = (\alpha_t, a_t^{\mathrm{out}})$ is the editing action with type $\alpha_t \in \mathcal{A}_{\mathrm{type}}$ and content $a_t^{\mathrm{out}}$, and $o_t^{\mathrm{exec}}$ is the execution feedback from the canvas environment. The complete definitions of operators and actions are summarized in Table~\ref{tab:operators}.

\textbf{Problem Statement.} Given a task $q$ drawn from a task distribution $\mathcal{D}_Q$, an operator library $\mathcal{O}$, and a pluggable backend LLM $\mathcal{M}_{\mathrm{exec}}$, the workflow orchestration problem aims to learn a policy $\pi_\theta$ that generates trajectory $\tau$. The corresponding workflow $\mathcal{G}_\tau$ is then executed to produce the answer:
\begin{equation}
y_q = \mathrm{Execute}(\mathcal{G}_\tau, q, \mathcal{M}_{\mathrm{exec}}).
\label{eq:execute}
\end{equation}
The overall learning objective is to maximize expected reward over the task distribution:
\begin{equation}
\max_\theta \ \mathcal{J}(\theta) = \mathbb{E}_{q \sim \mathcal{D}_Q} \ \mathbb{E}_{\tau \sim P_\theta(\cdot \mid q)} \left[ R(\tau) \right],
\label{eq:objective}
\end{equation}
where $P_\theta(\tau \mid q)$ is the trajectory distribution induced by policy $\pi_\theta$, and $R(\tau)$ is the trajectory-level reward measuring both structural quality and answer correctness.

\begin{table}[t]
\centering
\caption{Operator library $\mathcal{O}$ and action space $\mathcal{A}$ in \flowr{}. Each row shows a category of operators, their outputs, action types, and the corresponding graph update operations.}
\label{tab:operators}
\fontsize{8pt}{10pt}\selectfont
\setlength{\tabcolsep}{3pt}
\begin{adjustbox}{max width=\linewidth}
\begin{tabular}{@{}llllll@{}}
\toprule
\textbf{Category} & \textbf{Operator} $o \in \mathcal{O}$ & \textbf{Output} & \textbf{Type} & \textbf{Action} & \textbf{Graph Update} \\
\midrule
Planning & \texttt{Plan}, \texttt{Decompose} & strategy, sub-problems & Editing & \texttt{add} & $V_t \leftarrow V_{t-1} \cup \{v\}$ \\
Solving & \texttt{Programmer}, \texttt{Custom}, \texttt{AnswerGen} & code, response, answer & Editing & \texttt{delete} & $V_t \leftarrow V_{t-1} \setminus \{v\}$ \\
Verification & \texttt{Test}, \texttt{Review}, \texttt{Verify} & correct, feedback & Editing & \texttt{modify} & $\mathrm{op}(v) \leftarrow o'$ \\
Revision & \texttt{Revise} & revised solution & Config & \texttt{set\_prompt} & $\mathrm{prompt}(v) \leftarrow p$ \\
Ensemble & \texttt{ScEnsemble}, \texttt{Aggregate} & voted, combined & Terminal & \texttt{finish} & $y_q = \mathrm{Execute}(\mathcal{G}_T, q)$ \\
Formatting & \texttt{Format} & final answer $y_q$ & Control & \texttt{parallel}, \texttt{cond}, \texttt{loop} & branch/merge in $E_t$ \\
\textit{User-defined} & \textit{$o^{+} \in \mathcal{O}, \dots$} & \textit{task-specific} & \textit{Editing/Config} & \textit{\texttt{add}, \texttt{modify}, \texttt{set\_prompt}} & \textit{(as above)} \\
\bottomrule
\end{tabular}
\end{adjustbox}
\end{table}

\section{Methodology: \flowr{}}
\label{sec:method}

As illustrated in Figure~\ref{fig:framework}, \flowr{} comprises three components, each addressing one of the three challenges identified in the introduction: the canvas environment and operator/action interface that supply graph-level execution feedback (Section~\ref{subsec:canvas}), the \emph{Agent Designing Agentic Workflows} paradigm that replaces human-driven workflow construction with a designer agent (Section~\ref{subsec:paradigm}), and \emph{Reinforced Progressive Canvas Editing} that turns long-horizon construction into a sequence of in-loop, repairable edits (Section~\ref{subsec:rl}).

\begin{figure}[!t]
\centering
\includegraphics[width=\linewidth]{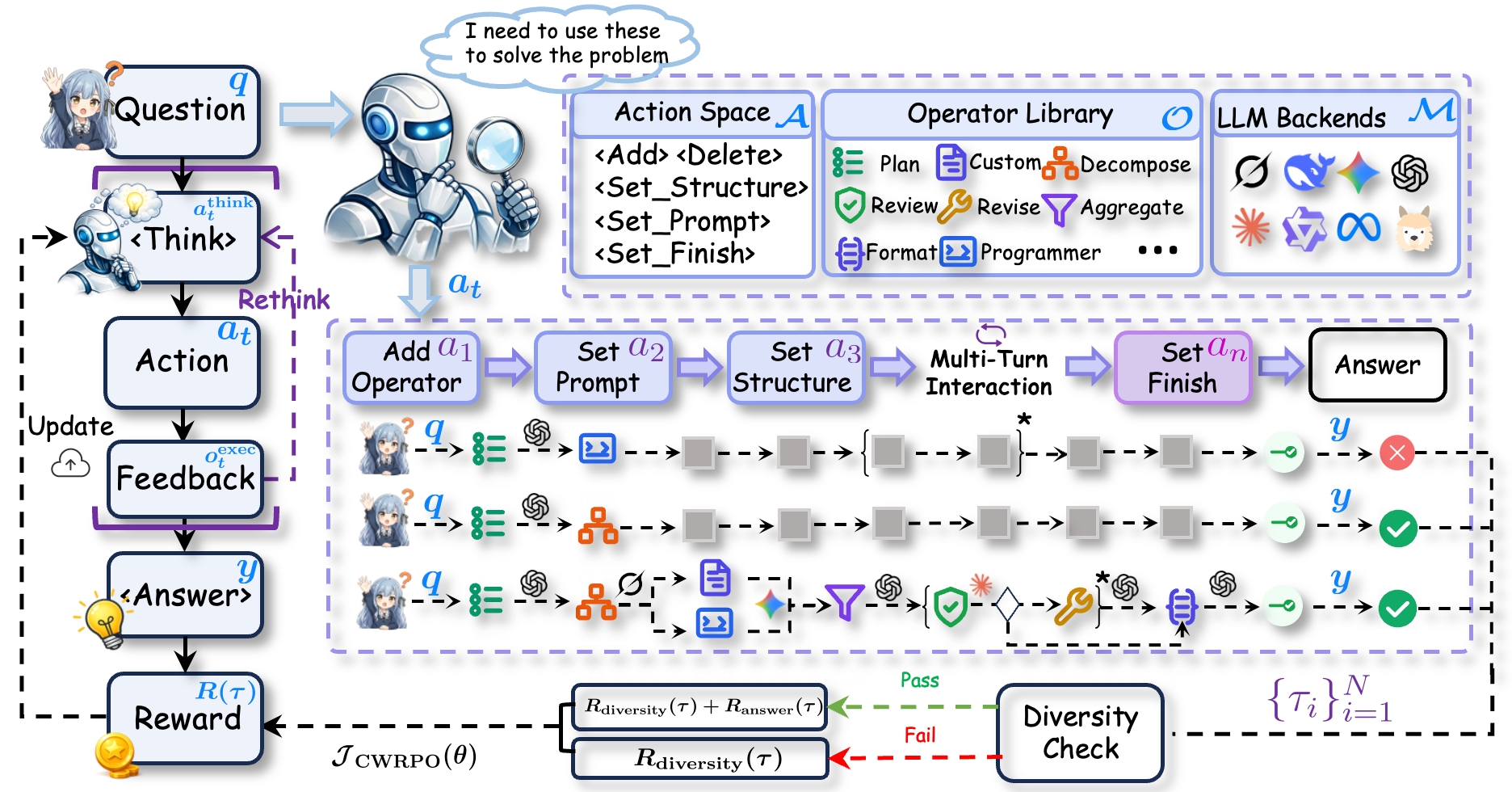}
\caption{An overview of the \flowr{} framework. The designer agent progressively edits the workflow graph in \canvas{} through atomic actions, conditioned on canvas execution feedback, and is trained end-to-end with reinforcement learning over the trajectory-level outcome reward $R(\tau)$.}
\label{fig:framework}
\end{figure}

\subsection{\canvas{} and Agent Initialization}
\label{subsec:canvas}

\flowr{} follows a ReAct-based agent paradigm~\citep{react2022}: a lightweight policy model (\director{}) interacts with an executable canvas environment (\canvas{}) to construct the workflow graph $\mathcal{G}$ (Definition 1). The canvas, operator library, action space, and state space below jointly fix the agent--environment interface that the rest of the section builds on.

\textbf{\canvas{} $\mathcal{C}$.} The \canvas{} is the environment that maintains the workflow graph state $\mathcal{G}_t$ and provides executable feedback at each orchestration step:
\begin{equation}
\mathcal{C} = (\mathcal{G}_t, \mathcal{O}, \mathcal{M}_{\mathrm{exec}}, d^{\mathrm{lib}}),
\label{eq:canvas}
\end{equation}
where $\mathcal{G}_t = (V_t, E_t, \mathrm{attr})$ is the workflow graph at step $t$, $\mathcal{O}=\{o_1,\dots,o_K\}$ is the operator library with $K$ operators, $\mathcal{M}_{\mathrm{exec}}$ is the pluggable LLM backend, and $d^{\mathrm{lib}}$ is the operator library description that enables the policy to learn available operators.

\textbf{Operator Library $\mathcal{O}$ and Action Space $\mathcal{A}$.} The 12 functional operators and the corresponding editing/control actions are summarized in Table~\ref{tab:operators}. The library $\mathcal{O}$ is plug-and-play: operators can be freely added, removed, or replaced without affecting the framework, since the policy interacts with each operator only through its standardized action interface and structured output. At each step $t$, the agent emits a reflection $a_t^{\mathrm{think}}$ followed by an editing action $a_t = (\alpha_t, a_t^{\mathrm{out}})$ with type $\alpha_t \in \mathcal{A}_{\mathrm{type}}$; orchestration terminates when $\alpha_t = \texttt{finish}$.

\textbf{State Space $\mathcal{S}$.} The agent state $H_t \in \mathcal{S}$ is defined by its complete interaction history. Given task $q$, operator description $d^{\mathrm{lib}}$, and template $a^{\mathrm{tmpl}}$, the initial state is $H_0 = [q \oplus d^{\mathrm{lib}} \oplus a^{\mathrm{tmpl}}]$, where $\oplus$ denotes sequence concatenation. The state updates as:
\begin{equation}
H_t = H_{t-1} \oplus (a_t^{\mathrm{think}}, a_t, o_t^{\mathrm{exec}}).
\label{eq:state_update}
\end{equation}

\textbf{Orchestration Target.} The agent interacts with canvas $\mathcal{C}$ until reaching \texttt{finish} or maximum turns $T_{\max}$. Following Definition 2, the complete trajectory $\tau$ determines workflow $\mathcal{G}_\tau$, and the answer is obtained via execution (Eq.~\ref{eq:execute}).

\begin{proposition}
The operator-action space $(\mathcal{O}, \mathcal{A})$ covers diverse workflow patterns across task types through canvas-grounded orchestration.
\end{proposition}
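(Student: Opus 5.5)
The statement is informal, so I will first turn it into a precise expressivity claim and then prove it constructively. Call a workflow pattern a labeled DAG $\mathcal{G}=(V,E,\mathrm{attr})$ in the sense of Definition 1 with $\mathrm{op}(v)\in\mathcal{O}$. The pattern classes I would cover are the standard motifs of the agentic-workflow literature: sequential chains (plan $\to$ solve $\to$ format), parallel fan-out/fan-in with ensembling (\texttt{ScEnsemble}, \texttt{Aggregate}), conditional branches (verify-then-revise), and bounded iterative refinement loops (solve $\to$ test/review $\to$ revise). I would then claim two things. First, \emph{reachability}: every such graph is the image $\mathcal{G}_\tau$ of some finite trajectory $\tau$ (Definition 2) over $\mathcal{A}$. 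Second, \emph{validity}: every intermediate canvas state along that trajectory is syntax-checkable, so the canvas returns a well-defined feedback $o_t^{\mathrm{exec}}$ at each step.

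For reachability I will argue by induction on $|V|+|E|$, using the graph-update rules in Table~\ref{tab:operators}. The base case is the empty canvas. For the inductive step, fix a topological order of $\mathcal{G}$ and add the next node $v$ with \texttt{add}. Set its operator through the action content, or correct it afterwards with \texttt{modify}, and set its prompt with \texttt{set\_prompt}. Wire its incoming edges through the default sequential attachment. Where $v$ has several predecessors or successors, use \texttt{parallel} to realise the branch/merge in $E_t$; where a node gates execution on a \texttt{Verify}/\texttt{Test} outcome, use \texttt{cond}. A loop is a cycle, so it is not a DAG as it stands. I will handle it as a \texttt{loop} control node whose semantics unrolls to a DAG of bounded depth $k$. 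This keeps Definition 1 intact while still covering iterative refinement. The construction ends with \texttt{finish}, and the resulting trajectory length is $O(|V|+|E|)$, which stays within $T_{\max}$ for workflows of bounded size.

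For coverage across task types, I will map each task family (math, QA, code) to one motif built from the operator categories. Math uses \texttt{Plan}/\texttt{Decompose}, then \texttt{Programmer}, then \texttt{Verify}, then \texttt{Format}. QA uses parallel \texttt{AnswerGen}, then \texttt{ScEnsemble}. Code uses \texttt{Custom}, then \texttt{Test}, with a \texttt{cond}/\texttt{loop} into \texttt{Revise}. Each of these is an instance of the inductive construction above. Plug-and-play extensibility follows directly: a user-defined $o^{+}$ enters only through \texttt{add}/\texttt{modify}/\texttt{set\_prompt}, so the induction goes through unchanged for $\mathcal{O}\cup\{o^+\}$.

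The main obstacle is validity of the intermediate states: a half-built graph, such as a \texttt{Revise} node with no upstream solution, may fail the canvas syntax check. My first attempt will be to order the additions topologically, so that every prefix graph is a well-formed DAG whose sinks can be read as provisional outputs. This means each prefix is itself an executable workflow, and the canvas feedback is defined at every step. If the canvas imposes stricter arity constraints, for example that \texttt{ScEnsemble} needs at least two inputs, I will instead add the required predecessors before the constrained node, which a topological order already guarantees.
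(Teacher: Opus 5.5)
Your overall plan (reachability of a graph class from the empty canvas, then one motif per task family) matches parts (ii) and (iii) of the paper's proof. However, the reachability induction as written does not go through.

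As specified in Table~\ref{tab:operators}, the actions create edges in only two ways. One is the sequential attachment of \texttt{add}; the other is the block-structured \texttt{parallel}/\texttt{cond}/\texttt{loop} (``branch/merge in $E_t$''). In Case Study~2, for example, three \texttt{Custom} branches are inserted as one structure. There is no action that wires a new node to an arbitrary set of earlier nodes.

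A node-by-node induction on $|V|+|E|$ in topological order therefore cannot build even your own fan-out motif. After $\texttt{Decompose}\to C_1$, an \texttt{add} of $C_2$ chains it after $C_1$, and ``use \texttt{parallel}'' is not a one-node step.

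For arbitrary labeled DAGs the reachability claim is false. Sequential composition and block fan-out/fan-in yield only series-parallel graphs, and conditionals and unrolled loops have the same shape. So the $N$-shaped dependency $a\to c$, $b\to c$, $b\to d$ is unreachable.

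The repair is to define the pattern class inductively:
\begin{itemize}
\item a single operator node;
\item series composition;
\item a parallel block closed by \texttt{Aggregate}/\texttt{ScEnsemble};
\item a conditional block;
\item a bounded loop.
\end{itemize}
Then argue by structural induction, with each block action counted as one step. This is how the paper scopes its claim: $\mathfrak{G}(\mathcal{O})$ is the set of graphs ``composed using structured control constructs.'' They are built by node creation and annotation (\texttt{add}, \texttt{set\_prompt}, \texttt{modify}, \texttt{delete}), then insertion of control structures, then \texttt{finish}.

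Once that is repaired, the remaining differences are of scope.
\begin{itemize}
\item You omit the paper's part (i). It defines seven cognitive primitives, exhibits a surjection $\psi:\mathcal{O}\to\mathcal{C}$, and invokes Assumption~1 to pass from an arbitrary task $q$ to some target graph. Your motifs show only that a representative pattern exists for each family, not the ``for any task $q$'' form.
\item In the other direction, your bounded unrolling of loops respects the acyclicity in Definition~1, whereas the paper simply allows back-edges. Your $T_{\max}$ budget check also has no counterpart in the paper.
\item Your validity ``obstacle'' is unnecessary. The canvas returns $o_t^{\mathrm{exec}}$, including failure reasons, for every edit, and the proposition concerns only the terminal graph.
\item If \texttt{finish} stays as your final step, the canvas gates it on the presence of a checker operator (\texttt{Verify}/\texttt{Test}/\texttt{Review}); see Appendix~\ref{app:implementation}. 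Your QA motif (parallel \texttt{AnswerGen} then \texttt{ScEnsemble}) lacks one.
\end{itemize}
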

\begin{proof}
We provide experimental results in Section~\ref{subsec:main-results} and Section~\ref{subsec:generalization}, and theoretical proofs in Appendix~\ref{app:proof1}.
\end{proof}

\subsection{Agent Designing Agentic Workflows}
\label{subsec:paradigm}

A defining feature of \flowr{} is the explicit separation between \emph{workflow design} and \emph{workflow execution}: the trained agent $\pi_\theta$ is responsible only for designing a workflow, while task execution is delegated to a separate backbone LLM operating on the designed workflow. We formalize this designer--executor paradigm below.

\textbf{Designer--Executor Decoupling.} Given a task $q \in \mathcal{D}_Q$, the policy $\pi_\theta$ (\director{}) samples an orchestration trajectory $\tau$, which is decoded into a workflow graph $\mathcal{G}_\tau$; the answer is produced by an executor backbone $\mathcal{M}_{\mathrm{exec}}$ running the graph,
\begin{equation}
y_q = \mathrm{Execute}(\mathcal{G}_\tau,\ q,\ \mathcal{M}_{\mathrm{exec}}),\quad \mathcal{G}_\tau = \mathrm{Decode}(\tau),\quad \tau \sim \pi_\theta(\cdot \mid q),
\label{eq:design-execute}
\end{equation}
where $\mathrm{Decode}(\cdot)$ deterministically extracts the final workflow graph from the editing trajectory $\tau$ by sequentially applying its atomic edits, and $\mathrm{Execute}(\cdot)$ runs the resulting graph $\mathcal{G}_\tau$ on the executor backbone $\mathcal{M}_{\mathrm{exec}}$ to produce the answer $y_q$. Throughout training, $\pi_\theta$ is optimized only to produce $\tau$, while $\mathcal{M}_{\mathrm{exec}}$ is held fixed.

\textbf{Marginal Likelihood over Workflow Designs.} Equivalently, the answer distribution induced by \flowr{} marginalizes over all possible workflow designs sampled from $\pi_\theta$,
\begin{equation}
P\!\left(y_q \mid q;\,\pi_\theta, \mathcal{M}_{\mathrm{exec}}\right) = \mathbb{E}_{\tau \sim \pi_\theta(\cdot \mid q)}\!\Big[\,P_{\mathrm{exec}}\!\left(y_q \mid \mathcal{G}_\tau, q, \mathcal{M}_{\mathrm{exec}}\right)\Big],
\label{eq:marginal-likelihood}
\end{equation}
where $P_{\mathrm{exec}}(y_q \mid \mathcal{G}_\tau, q, \mathcal{M}_{\mathrm{exec}})$ is the executor's stochastic answer distribution when running workflow $\mathcal{G}_\tau$ on backbone $\mathcal{M}_{\mathrm{exec}}$ for task $q$. The unit of decision is a workflow design $\tau$ at the level of operator composition; $\pi_\theta$ thereby learns orchestration strategies that transfer across operator implementations and backbone LLMs.

\textbf{Design Space and Decoupled Gradient.} For every node $v \in V_\tau$, a trajectory $\tau$ specifies the per-node attributes $\mathrm{attr}_\tau(v) = (\mathrm{op}(v),\,\mathrm{param}(v),\,\mathrm{prompt}(v))$ through the editing actions \texttt{add}, \texttt{modify}, and \texttt{set\_prompt} (Table~\ref{tab:operators}); $\mathrm{prompt}(v)$ in particular can be rewritten by $\pi_\theta$ online during the editing loop. The training objective and its policy gradient are
\begin{equation}
\mathcal{J}(\theta) = \mathbb{E}_{q \sim \mathcal{D}_Q,\,\tau \sim \pi_\theta(\cdot|q)}\!\big[R(\tau)\big], \qquad \nabla_\theta \mathcal{J}(\theta) = \mathbb{E}_{q,\,\tau \sim \pi_\theta}\!\big[\nabla_\theta \log \pi_\theta(\tau \mid q)\cdot R(\tau)\big].
\label{eq:designer-grad}
\end{equation}
The executor $\mathcal{M}_{\mathrm{exec}}$ influences the gradient solely through its scalar contribution to $R(\tau)$ via the executed answer $y_q$. Consequently, swapping $\mathcal{M}_{\mathrm{exec}}$, editing the library $\mathcal{O}$, or rewriting an operator's prompt preserves the form of Eq.~\ref{eq:designer-grad} and only updates the realized reward of a given workflow design---this underlies the transferability of $\pi_\theta$ across operator libraries and backbones.

\begin{table}[!t]
\centering
\fontsize{9pt}{11pt}\selectfont
\renewcommand{\arraystretch}{1.0}
\caption{The system prompt template utilized by \director{} to interact with the \canvas{}.}
\label{tab:prompt}
\begin{tabularx}{\linewidth}{@{}X@{}}
\toprule
Build a workflow step by step to solve the problem. In each turn, output \textbf{EXACTLY ONE} XML action (\texttt{add}/\texttt{delete}/\texttt{modify}/\texttt{set\_prompt}/\texttt{finish} or a structure action). Keep thinking brief; let operators do the computation. Format every turn as: \textcolor{orange}{\textbf{<think>}}(reasoning, conditioned on the latest \textcolor{red}{\textbf{<feedback>}}...\textcolor{red}{\textbf{</feedback>}})\textcolor{orange}{\textbf{</think>}} \textcolor{green!50!black}{\textbf{<action>}}(one editing action that builds on previous edits)\textcolor{green!50!black}{\textbf{</action>}}. Let the workflow evolve naturally (e.g., plan $\to$ solve $\to$ verify $\to$ format). When the workflow is complete, output \textcolor{green!50!black}{\textbf{<action>}}\textcolor{blue}{\textbf{finish}}\textcolor{green!50!black}{\textbf{</action>}}. Task: \textcolor{purple}{\textbf{\{task\}}}. \tabularnewline
\bottomrule
\end{tabularx}
\end{table}

\begin{proposition}
The designer--executor decoupling allows the trained policy $\pi_\theta$ to generalize across operator libraries $\mathcal{O}$ and backbone LLMs $\mathcal{M}_{\mathrm{exec}}$ without retraining.
\end{proposition}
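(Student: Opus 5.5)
The plan is to turn the informal claim into a precise statement that follows from the decoupling equations (Eq.~\ref{eq:design-execute}--\ref{eq:designer-grad}), and then prove that statement. The claim I will aim for has two parts. First, the trained policy $\pi_\theta$ remains a well-defined designer when $\mathcal{M}_{\mathrm{exec}}$ is replaced by $\mathcal{M}'_{\mathrm{exec}}$, or $\mathcal{O}$ by a library $\mathcal{O}'$ sharing the standardized action interface; no change of parameters or architecture is needed. Second, the performance of $\pi_\theta$ under the new configuration is controlled by how well its designed workflows transfer. I will \emph{not} claim unconditional optimality, because that would be false in general.

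\textbf{Step 1 (interface invariance).} By the state definition $H_0=[q\oplus d^{\mathrm{lib}}\oplus a^{\mathrm{tmpl}}]$ and Eq.~\ref{eq:state_update}, the input to $\pi_\theta$ consists only of text: the task, the library description, prior actions, and canvas feedback. The feedback $o_t^{\mathrm{exec}}$ is syntax-level, and the action types $\mathcal{A}_{\mathrm{type}}$ in Table~\ref{tab:operators} are independent of which operators are present or which executor is used. Hence the map $\tau\mapsto\mathcal{G}_\tau=\mathrm{Decode}(\tau)$ is unchanged, and $\pi_\theta(\cdot\mid q)$ is defined over the same trajectory space. Swapping $\mathcal{M}_{\mathrm{exec}}$ leaves $\pi_\theta$'s input distribution exactly invariant. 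Editing $\mathcal{O}$ only changes the token string $d^{\mathrm{lib}}$, and this is absorbed by the same parameters. Neither change requires a gradient step.

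\textbf{Step 2 (performance transfer).} Using the marginal likelihood (Eq.~\ref{eq:marginal-likelihood}), I will write the accuracy under a new executor as $\mathbb{E}_{\tau\sim\pi_\theta}[P_{\mathrm{exec}}(y_q^\star\mid\mathcal{G}_\tau,q,\mathcal{M}'_{\mathrm{exec}})]$. Subtracting the original-executor version gives a gap bounded by $\sup_{\mathcal{G}}|P_{\mathrm{exec}}(\cdot\mid\mathcal{G},\mathcal{M})-P_{\mathrm{exec}}(\cdot\mid\mathcal{G},\mathcal{M}')|$ restricted to the support of $\pi_\theta$. Because the policy gradient (Eq.~\ref{eq:designer-grad}) sees the executor only through the scalar $R(\tau)$, an ordering-preservation assumption suffices: if $\mathcal{M}'$ ranks workflow designs the same way as $\mathcal{M}$ (monotone transform of expected reward), then the mode of $\pi_\theta$ remains a high-reward design. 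This makes transfer a consequence of rank stability over designs rather than of executor identity. For libraries, I treat $\mathcal{O}'\supseteq\mathcal{O}$, or a relabeling of $\mathcal{O}$, as preserving the set of realizable graphs, and argue in the same way.

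\textbf{Main obstacle.} The hardest step is the transfer bound, because nothing in the formal setup forces different executors to agree on which workflows are good. Generalization therefore cannot be proved unconditionally. My first approach will be to state the rank-preservation (or bounded total-variation) condition explicitly as a hypothesis, prove the bound under it, and then defer the empirical verification that real backbones approximately satisfy the condition to Section~\ref{subsec:generalization}. For new operators unseen in training, I would rely on the text-described interface and similarly treat this as an empirical rather than a formal claim.
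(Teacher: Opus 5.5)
Your Step~1 makes a false claim, and Step~2 depends on it. Swapping $\mathcal{M}_{\mathrm{exec}}$ does \emph{not} leave the Director's input distribution invariant.

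In the paper the canvas is $\mathcal{C}=(\mathcal{G}_t,\mathcal{O},\mathcal{M}_{\mathrm{exec}},d^{\mathrm{lib}})$, and feedback is sampled from a stochastic kernel $\mathcal{C}_{\mathrm{exec}}$. It is syntax-checked \emph{execution} feedback, and it carries executor outputs. Examples are the Programmer's result 610 in Case Study~1, the failing-test details and ``Answer correct'' signal in Case Study~3, and ``Answer: 89'' in Figure~\ref{fig:interaction_example}.

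The trajectory law $P_\theta(\tau\mid q)$ of Eq.~\ref{eq:trajectory-dist} therefore depends on the backbone through its $\mathcal{C}_{\mathrm{exec}}$ factors. The shorthand $\tau\sim\pi_\theta(\cdot\mid q)$ in Eq.~\ref{eq:marginal-likelihood} hides this. A new backbone changes \emph{which workflows get designed}, not only how a fixed design is scored. Your bound evaluates both executors under one and the same design distribution, so it drops exactly this term. The rank-preservation route has the same defect. Even granting it, it only says something qualitative about the mode of a policy that, under KL regularization, is not a point mass.

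The repair is a standard simulation-lemma argument. Add the hypothesis that, along trajectories of $P^{\mathcal{M}}_\theta$, the per-step feedback discrepancy satisfies $\mathbb{E}\,\mathrm{TV}\big(\mathcal{C}_{\mathrm{exec}}(\cdot\mid\mathcal{G},a),\mathcal{C}'_{\mathrm{exec}}(\cdot\mid\mathcal{G},a)\big)\le\delta_{\mathrm{fb}}$. The policy factors in Eq.~\ref{eq:trajectory-dist} are identical under both executors. Telescoping over the at most $T_{\max}$ canvas factors therefore gives $\mathrm{TV}(P^{\mathcal{M}}_\theta,P^{\mathcal{M}'}_\theta)\le T_{\max}\delta_{\mathrm{fb}}$. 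Define $\mathrm{Acc}(\mathcal{M})=\mathbb{E}_{\tau\sim P^{\mathcal{M}}_\theta}[P_{\mathrm{exec}}(y_q^\star\mid\mathcal{G}_\tau,q,\mathcal{M})]$, and let $\delta_{\mathrm{ans}}$ be your answer-level discrepancy. Then
\[
\big|\mathrm{Acc}(\mathcal{M}')-\mathrm{Acc}(\mathcal{M})\big|\le T_{\max}\,\delta_{\mathrm{fb}}+\delta_{\mathrm{ans}}.
\]

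The library half has the same problem in stronger form. Editing $\mathcal{O}$ changes $H_0$ through $d^{\mathrm{lib}}$ as well as the feedback. Removing operators also shrinks, rather than preserves, the set of realizable graphs. So Step~1 establishes only that $\pi_\theta$ is still \emph{defined} on the new configuration, which is the trivial part of ``without retraining.''

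The proof the paper cites, Appendix~\ref{app:proof2}, actually proves a different proposition. It shows monotone decrease of a Bayes-risk potential over turns, via the posterior martingale, Jensen's inequality, and the inclusion of single-turn policies among multi-turn ones. It never considers changing $\mathcal{M}_{\mathrm{exec}}$ or $\mathcal{O}$. The paper's only argument for this statement is the remark after Eq.~\ref{eq:designer-grad} that the executor enters the gradient only through the scalar $R(\tau)$, which is essentially your Step~1. Beyond that it relies on the cross-backend experiments of Section~\ref{subsec:transferability}.

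Your explicitly conditional formulation, and your recognition that no unconditional proof is possible, are more careful than the paper. You have, however, inherited its blind spot of treating the design distribution as executor-independent, which the feedback-discrepancy term above corrects.
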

\begin{proof}
We provide experimental results in Section~\ref{subsec:transferability} and theoretical proofs in Appendix~\ref{app:proof2}.
\end{proof}

\subsection{Reinforced Progressive Canvas Editing}
\label{subsec:rl}

We instantiate the above paradigm by training $\pi_\theta$ via \emph{progressive canvas editing}: an interaction loop where the agent commits one atomic edit per turn and the canvas returns execution feedback that conditions the next edit, following the prompt template in Table~\ref{tab:prompt}.

\textbf{Progressive Graph Construction.} Each turn submits one atomic editing action, decomposing long-horizon design into checkable and repairable local decisions. Starting from the empty initial graph $\mathcal{G}_0 = (\emptyset, \emptyset, \emptyset)$, after $T$ atomic edits the workflow is the cumulative composition
\begin{equation}
\mathcal{G}_T = \mathcal{G}_0 \,\oplus\, a_1 \,\oplus\, a_2 \,\oplus\, \cdots \,\oplus\, a_T, \qquad \oplus:\ \mathcal{G}\times \mathcal{A} \to \mathcal{G},
\label{eq:graph-compose}
\end{equation}
where $\oplus$ denotes the graph-update operator instantiated by Table~\ref{tab:operators} (i.e.\ \texttt{add}, \texttt{delete}, \texttt{modify}, \texttt{set\_prompt}, control structures), which maps a graph and an atomic edit to a new graph.

\textbf{Hierarchical Editing Policy.} At each step $t$, \director{} factorizes the editing action into a reflection, an action type, and a structured output, conditioned on the interaction history $H_{t-1}$:
\begin{equation}
\log \pi_\theta(a_t^{\mathrm{think}}, a_t \mid H_{t-1}) =
\begin{cases}
\begin{aligned}[t]
& \log \pi_\theta(a_t^{\mathrm{think}} \mid H_{t-1}) \\
& \,+\, \log \pi_\theta(\alpha_t \mid a_t^{\mathrm{think}}, H_{t-1}) \\
& \,+\, \log \pi_\theta(a_t^{\mathrm{out}} \mid \alpha_t, a_t^{\mathrm{think}}, H_{t-1}),
\end{aligned} & \alpha_t \neq \texttt{finish}, \\[10pt]
\begin{aligned}[t]
& \log \pi_\theta(a_t^{\mathrm{think}} \mid H_{t-1}) \\
& \,+\, \log \pi_\theta(\texttt{finish} \mid a_t^{\mathrm{think}}, H_{t-1}),
\end{aligned} & \alpha_t = \texttt{finish}.
\end{cases}
\label{eq:hierarchical-policy}
\end{equation}

\textbf{Canvas-Feedback-Driven Trajectory.} Given action $a_t$, \canvas{} executes it and returns feedback $o_t^{\mathrm{exec}}$, forming a closed-loop ``diagnose--edit--verify'': \textit{(i)} the canvas performs syntax parsing and constraint checking, sampling $o_t^{\mathrm{exec}} \sim \mathcal{C}_{\mathrm{exec}}(\cdot \mid \mathcal{G}_{t-1}, a_t)$, where $\mathcal{C}_{\mathrm{exec}}$ denotes the canvas's stochastic feedback distribution conditioned on the current graph and edit, carrying success status, failure reasons, and repair suggestions; \textit{(ii)} the graph updates as $\mathcal{G}_t = \mathrm{Update}(\mathcal{G}_{t-1}, a_t, o_t^{\mathrm{exec}})$ and the state evolves via Eq.~\ref{eq:state_update}; \textit{(iii)} \director{} continues until $\alpha_t = \texttt{finish}$ or $t = T_{\max}$, the maximum number of editing turns. The full trajectory distribution factorizes as
\begin{equation}
\resizebox{\linewidth}{!}{$
P_\theta(\tau \mid q) = \underbrace{\pi_\theta(a_1^{\mathrm{think}}, a_1 \mid H_0)}_{\text{first edit}} \cdot \prod_{t=2}^{T}\underbrace{\Big(\,\mathcal{C}_{\mathrm{exec}}(o_{t-1}^{\mathrm{exec}} \mid \mathcal{G}_{t-2}, a_{t-1}) \cdot \pi_\theta(a_t^{\mathrm{think}}, a_t \mid H_{t-1})\,\Big)}_{\text{feedback-conditioned subsequent edits}},
$}
\label{eq:trajectory-dist}
\end{equation}
where only $\pi_\theta$ is optimized.

\textbf{End-to-End Reinforcement Learning.} We adopt the standard GRPO objective~\citep{deepseekmath2024}, modified by one canvas-specific adaptation: a \emph{token-level mask} $\mathrm{mask}_t^{(i)} \in \{0,1\}$ that restricts the policy gradient to tokens generated by \director{}, excluding canvas-feedback tokens that share the same context window. Given $N$ rollouts $\{\tau_i\}_{i=1}^{N}$ for task $q$ sampled under behavior policy $\pi_{\theta_{\mathrm{old}}}$, the masked objective is
\begin{equation}
\resizebox{\linewidth}{!}{$
\mathcal{J}(\theta) = \mathbb{E}_{q \sim \mathcal{D}_Q,\,\{\tau_i\} \sim \pi_{\theta_{\mathrm{old}}}}\!\Bigg[\frac{1}{N}\sum_{i=1}^{N}\frac{1}{|\tau_i|_{\mathrm{mask}}}\sum_{t=1}^{|\tau_i|}\mathrm{mask}_t^{(i)} \cdot \min\!\Big(\rho_\theta^{(i,t)}\,\hat{A}_i,\ \mathrm{clip}\!\big(\rho_\theta^{(i,t)},\,1\pm\epsilon\big)\,\hat{A}_i\Big) - \beta\, D_{\mathrm{KL}}\!\big(\pi_\theta\,\|\,\pi_{\mathrm{ref}}\big)\Bigg],
$}
\label{eq:rl-objective}
\end{equation}
where $|\tau_i|_{\mathrm{mask}} = \sum_t \mathrm{mask}_t^{(i)}$ counts policy-generated tokens in $\tau_i$, and the importance ratio $\rho_\theta^{(i,t)}$, the group-standardized trajectory-level advantage $\hat{A}_i$ computed from the trajectory-level reward $R(\tau_i)$, the clip range $\epsilon$, the KL strength $\beta$, the reference policy $\pi_{\mathrm{ref}}$, and the behavior policy $\pi_{\theta_{\mathrm{old}}}$ all follow the standard GRPO formulation~\citep{deepseekmath2024}. The outcome reward $R(\tau)$ combines a structural reward over the workflow graph and an answer reward that is released only when the structural reward is satisfied; full forms are deferred to Appendix~\ref{app:reward}.

\begin{proposition}
End-to-end training of the canvas-feedback-driven multi-turn loop yields stable orchestration policies, suppresses shortcut behaviors, and improves long-horizon credit assignment.
\end{proposition}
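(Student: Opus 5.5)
The statement is informal, so the first step is to make each of its three claims a precise property of the masked GRPO objective in Eq.~\ref{eq:rl-objective} under the trajectory factorization of Eq.~\ref{eq:trajectory-dist}. The plan is to prove three separate lemmas:
\begin{enumerate}
\item[(a)] \emph{Stability:} the masked, clipped update has bounded per-step variance and bounded policy change.
\item[(b)] \emph{Shortcut suppression:} under the gated reward $R(\tau)$ (Appendix~\ref{app:reward}), degenerate trajectories such as premature \texttt{finish} or oversimplified graphs are strictly suboptimal.
\item[(c)] \emph{Credit assignment:} the per-edit factorization of Eq.~\ref{eq:hierarchical-policy} gives a gradient estimator whose variance grows at most linearly in $T$. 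The comparison point is a one-shot generator that must emit the whole graph $\mathcal{G}_\tau$ at once.
\end{enumerate}
Empirical confirmation would be deferred to the experiments, as with the earlier propositions. The argument below is only the theoretical part.

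\textbf{Lemma (a).} First I will show that the mask removes the canvas terms from the score function. Since $\mathcal{C}_{\mathrm{exec}}$ does not depend on $\theta$, we have
\[
\nabla_\theta \log P_\theta(\tau\mid q)=\sum_t \nabla_\theta \log \pi_\theta(a_t^{\mathrm{think}},a_t\mid H_{t-1}).
\]
Restricting the gradient to $\mathrm{mask}_t^{(i)}=1$ therefore gives an unbiased estimator of Eq.~\ref{eq:designer-grad}. Without the mask, feedback tokens would enter as spurious likelihood terms and bias the estimator. Stability then follows from standard PPO/GRPO facts:
\begin{itemize}
\item group standardization gives $|\hat A_i|\le \sqrt{N-1}$;
\item clipping bounds each token's contribution by $(1+\epsilon)|\hat A_i|$;
\item normalizing by $|\tau_i|_{\mathrm{mask}}$ makes the update length-invariant;
\item the KL term, via Pinsker, bounds the drift of $\pi_\theta$ from $\pi_{\mathrm{ref}}$.
\end{itemize}

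\textbf{Lemma (b).} Write $R=R_{\mathrm{struct}}+\mathbb{1}[R_{\mathrm{struct}}\text{ satisfied}]\,R_{\mathrm{ans}}$. Any trajectory that terminates before meeting the structural constraints gets reward at most $\sup R_{\mathrm{struct}}$ over unsatisfied graphs. I will compare this with any valid graph whose executor accuracy is positive. Then $\mathbb{E}[\hat A]$ is negative for the shortcut trajectories within each group, so the gradient lowers the probability of choosing \texttt{finish} in those states. The conclusion is that shortcut policies are not stationary points whenever a valid, better-than-trivial design has positive probability under $\pi_\theta$.

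\textbf{Lemma (c), expected to be the main obstacle.} Two points need care here: the advantage is trajectory-level, and the canvas provides intermediate feedback $o_t^{\mathrm{exec}}$. My first attempt is to argue that feedback conditioning shortens the effective horizon. Invalid edits are rejected and repaired locally, so the set of states reachable at each step is restricted to valid partial graphs. This reduces the entropy of $P_\theta(\tau\mid q)$, which in turn reduces the variance of $\sum_t\nabla\log\pi_\theta\cdot\hat A$. The resulting bound is
\[
\mathrm{Var}\le T_{\max}\,G^2\,\mathrm{Var}(R),
\]
where $G$ bounds the per-step score norm. The one-shot alternative has a score over the joint output whose norm scales with total token length and gets no intermediate correction.

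The unresolved difficulty is that $G$ is a model-dependent constant. I expect only a comparative statement under explicit assumptions, such as bounded per-step score norms. A minimax or unconditional rate seems out of reach, so the claim should be stated as holding under those assumptions.
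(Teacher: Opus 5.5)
Your Lemmas (a) and (b) follow essentially the paper's route in Appendix~\ref{app:proof3}: the mask is exact because canvas tokens are independent of $\theta$, clipping plus KL give bounded updates, and the gated reward separates feasible from infeasible trajectories. Dropping the paper's $-1$ offset is harmless under group standardization. Your point that an unmasked estimator is \emph{biased}, not merely noisier, is sharper than the paper's token-counting variance claim.

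The genuine gap is Lemma (c), and it goes beyond $G$ being model-dependent.
\begin{itemize}
\item The factorization in Eq.~\ref{eq:hierarchical-policy} is just the chain rule over the same tokens. So the masked score $\sum_t\nabla_\theta\log\pi_\theta(a_t^{\mathrm{think}},a_t\mid H_{t-1})$ has exactly the structure of a one-shot generator's token-level score.
\item In Eq.~\ref{eq:rl-objective}, one trajectory-level scalar $\hat A_i$ is broadcast to every masked token. The feedback $o_t^{\mathrm{exec}}$ shapes later actions through conditioning but is never a per-turn learning signal. Nothing in the estimator distinguishes the multi-turn loop from one-shot generation; if anything, it carries more policy tokens (one think block per turn).
\item The entropy step is a non sequitur. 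A rejected edit still contributes its score terms, and lower entropy of $P_\theta(\tau\mid q)$ does not bound the second moment of a score-function estimator.
\item The displayed bound does not follow. Bounded per-turn scores give only $T^2G^2\,\mathbb{E}[\hat A^2]$ via the triangle inequality. Linear growth needs the martingale-difference property $\mathbb{E}[g_t\mid H_{t-1}]=0$ of the per-turn scores $g_t$. Because $\hat A$ is correlated with the $g_t$, you can only pull out $\sup|\hat A|^2\le N-1$. That gives $T_{\max}G^2(N-1)$, not $T_{\max}G^2\,\mathrm{Var}(R)$.
\item That corrected bound holds verbatim for a one-shot generator, so it is not comparative.
\end{itemize}

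The paper attempts no rate in $T$ and no one-shot comparison for this clause. The closest it comes is part (iv), the identity $\nabla_\theta\log p_\theta(w_{1:|\tau|})=\sum_t\mathrm{mask}_t\nabla_\theta\log\pi_\theta(w_t\mid w_{<t})$, so the outcome signal lands only on tokens the policy chose. It pairs this with part (ii), where conditional release stages learning into feasibility first and correctness second. Recast (c) along those lines. If you want a genuine comparison with open-loop generation, make it at the level of the policy class, not estimator variance: feedback-conditioned policies contain open-loop ones, so the optimal value is no worse, as in part (v) of Appendix~\ref{app:proof2}.

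There are also two smaller slips.
\begin{itemize}
\item In (a), clipping does not bound a token's contribution by $(1+\epsilon)|\hat A_i|$ when $\hat A_i<0$ and $\rho>1+\epsilon$. The $\min$ then selects the unclipped $\rho\hat A_i$, so you need dual clipping or the near-on-policy regime $\rho\approx1$.
\item In (b), an individual premature-finish trajectory need not have negative advantage in its group. In an all-infeasible group, a shortcut that passes more structural checks sits above the mean. State the sign claim as the paper does: for the mean advantage over $\mathcal{F}$ versus $\mathcal{F}^c$ in groups containing both. Add the dominance fact that every $\tau\in\mathcal{F}$ outscores every $\tau\in\mathcal{F}^c$.
\end{itemize}
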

\begin{proof}
We provide experimental results in Section~\ref{subsec:ablation} and Section~\ref{subsec:rl-compare}, and theoretical proofs in Appendix~\ref{app:proof3}.
\end{proof}

\begin{table}[t]
\centering
\small
\caption{Main results on twelve IID and OOD benchmarks. SFT and GRPO are applied on Qwen3-8B; all workflow methods use GPT-4o-mini as backend. $\Delta\uparrow$ is the gap to GPT-4o-mini.}
\label{tab:main-results}
\begin{adjustbox}{max width=\linewidth}\tabcolsep=3pt
\renewcommand{\arraystretch}{1.18}
\begin{tabular}{ll|cccccccc|c}
\toprule
\textbf{} & \textbf{} & \multicolumn{2}{c}{\textbf{Baseline}} & \textbf{SFT} & \textbf{GRPO} & \textbf{AFlow} & \multicolumn{3}{c}{\textbf{Agent+RL (4o-mini)}} & \textbf{Ours (4o-mini)} \\
\cmidrule(lr){3-4}\cmidrule(lr){5-5}\cmidrule(lr){6-6}\cmidrule(lr){7-7}\cmidrule(lr){8-10}\cmidrule(lr){11-11}
\textbf{Dataset} & \textbf{Metric} & \textbf{Qwen3-8B} & \textbf{4o-mini} & \textbf{Qwen3-8B} & \textbf{Qwen3-8B} & \textbf{4o-mini} & \textbf{Agentflow} & \textbf{Router-R1} & \textbf{Orchestrator} & \textbf{\flowr{} ($\Delta\uparrow$)} \\
\midrule
\multicolumn{11}{l}{\cellcolor{gray!10}\textit{Part I: In-Distribution (IID) Benchmarks}} \\
\midrule
\textbf{HotPotQA} & \textbf{EM} & 67.19\std{0.8} & 63.28\std{0.4} & 70.31\std{0.6} & 59.38\std{0.7} & 68.75\std{0.3} & 67.19\std{0.9} & 72.00\std{0.5} & 67.97\std{0.4} & \textbf{78.12} \textbf{\textcolor{darkgreen}{(+14.84)}} \\
\textbf{} & \textbf{F1} & 74.05\std{0.3} & 73.03\std{0.8} & 75.25\std{0.5} & 64.95\std{0.6} & 77.90\std{0.7} & 77.88\std{0.4} & 79.84\std{0.6} & 75.61\std{0.8} & \textbf{84.98} \textbf{\textcolor{darkgreen}{(+11.95)}} \\
\textbf{SQuAD v2} & \textbf{EM} & 54.69\std{0.6} & 47.66\std{0.7} & 73.44\std{0.4} & 66.41\std{0.5} & 73.44\std{0.9} & 64.06\std{0.3} & 59.84\std{0.8} & 70.34\std{0.7} & \textbf{78.12} \textbf{\textcolor{darkgreen}{(+30.46)}} \\
\textbf{} & \textbf{F1} & 61.54\std{0.5} & 59.42\std{0.6} & 77.31\std{0.8} & 72.00\std{0.4} & 82.41\std{0.3} & 72.45\std{0.9} & 65.29\std{0.5} & 75.24\std{0.6} & \textbf{83.67} \textbf{\textcolor{darkgreen}{(+24.25)}} \\
\textbf{GSM8K} & \textbf{Acc.} & 91.41\std{0.4} & 92.97\std{0.6} & 92.19\std{0.3} & 92.97\std{0.8} & 94.53\std{0.5} & 93.75\std{0.7} & 94.01\std{0.4} & 93.94\std{0.9} & \textbf{96.09} \textbf{\textcolor{darkgreen}{(+3.12)}} \\
\textbf{MATH} & \textbf{Acc.} & 66.41\std{0.7} & 60.94\std{0.5} & 61.72\std{0.9} & 68.75\std{0.4} & 70.31\std{0.8} & 71.87\std{0.6} & 76.56\std{0.3} & 72.26\std{0.5} & \textbf{81.25} \textbf{\textcolor{darkgreen}{(+20.31)}} \\
\textbf{MBPP} & \textbf{Pass@1} & 63.28\std{0.4} & 64.84\std{0.9} & 57.03\std{0.6} & 77.34\std{0.8} & 83.20\std{0.4} & 79.69\std{0.5} & 73.43\std{0.7} & 74.22\std{0.3} & \textbf{84.38} \textbf{\textcolor{darkgreen}{(+19.54)}} \\
\textbf{HumanEval} & \textbf{Pass@1} & 81.25\std{0.8} & 82.81\std{0.3} & 61.72\std{0.7} & 86.72\std{0.6} & 90.62\std{0.5} & 87.50\std{0.4} & 85.15\std{0.9} & 89.06\std{0.5} & \textbf{92.96} \textbf{\textcolor{darkgreen}{(+10.15)}} \\
\cmidrule(lr){1-11}
\textbf{Avg. (IID)} & \textbf{EM} & 60.94\std{0.5} & 55.47\std{0.6} & 71.88\std{0.4} & 62.90\std{0.7} & 71.10\std{0.8} & 65.63\std{0.5} & 65.92\std{0.4} & 69.16\std{0.6} & \textbf{78.12} \textbf{\textcolor{darkgreen}{(+22.65)}} \\
\textbf{} & \textbf{F1} & 67.80\std{0.4} & 66.23\std{0.5} & 76.28\std{0.7} & 68.48\std{0.8} & 80.16\std{0.6} & 75.17\std{0.4} & 72.57\std{0.5} & 75.43\std{0.7} & \textbf{84.33} \textbf{\textcolor{darkgreen}{(+18.10)}} \\
\textbf{} & \textbf{Acc./Pass} & 75.59\std{0.6} & 75.39\std{0.4} & 68.17\std{0.8} & 81.45\std{0.5} & 84.67\std{0.7} & 83.20\std{0.6} & 82.29\std{0.3} & 82.37\std{0.4} & \textbf{88.67} \textbf{\textcolor{darkgreen}{(+13.28)}} \\
\midrule
\multicolumn{11}{l}{\cellcolor{gray!10}\textit{Part II: Out-of-Distribution (OOD) Benchmarks}} \\
\midrule
\textbf{TriviaQA} & \textbf{EM} & 60.16\std{0.5} & 71.09\std{0.8} & 60.94\std{0.4} & 59.38\std{0.6} & 73.44\std{0.7} & 75.00\std{0.3} & 75.78\std{0.9} & 77.36\std{0.5} & \textbf{79.69} \textbf{\textcolor{darkgreen}{(+8.60)}} \\
\textbf{} & \textbf{F1} & 69.17\std{0.9} & 81.40\std{0.4} & 69.88\std{0.6} & 69.23\std{0.5} & 82.50\std{0.8} & 81.47\std{0.7} & 80.43\std{0.3} & 83.23\std{0.6} & \textbf{84.11} \textbf{\textcolor{darkgreen}{(+2.71)}} \\
\textbf{NaturalQuestions} & \textbf{EM} & 39.84\std{0.6} & 39.84\std{0.5} & 46.09\std{0.8} & 43.75\std{0.4} & 42.97\std{0.9} & 45.70\std{0.7} & 49.22\std{0.6} & 50.00\std{0.3} & \textbf{54.69} \textbf{\textcolor{darkgreen}{(+14.85)}} \\
\textbf{} & \textbf{F1} & 50.75\std{0.4} & 51.42\std{0.9} & 53.40\std{0.5} & 53.24\std{0.8} & 49.92\std{0.6} & 55.98\std{0.4} & 52.79\std{0.5} & 55.41\std{0.7} & \textbf{62.56} \textbf{\textcolor{darkgreen}{(+11.14)}} \\
\textbf{MathQA} & \textbf{Acc.} & 75.00\std{0.8} & 79.69\std{0.4} & 61.71\std{0.7} & 60.15\std{0.6} & 83.59\std{0.3} & 82.81\std{0.9} & 80.47\std{0.5} & 82.03\std{0.4} & \textbf{88.67} \textbf{\textcolor{darkgreen}{(+8.98)}} \\
\textbf{AIME 2025} & \textbf{Acc.} & 16.66\std{0.7} & 10.00\std{0.6} & 0.00\std{0.0} & 8.33\std{0.5} & 13.33\std{0.9} & 10.00\std{0.4} & 10.00\std{0.8} & 20.00\std{0.7} & \textbf{26.67} \textbf{\textcolor{darkgreen}{(+16.67)}} \\
\textbf{APPS} & \textbf{Pass@1} & 39.84\std{0.5} & 40.62\std{0.8} & 26.56\std{0.6} & 34.38\std{0.7} & 42.97\std{0.4} & 41.41\std{0.9} & 42.95\std{0.5} & 44.53\std{0.6} & \textbf{49.21} \textbf{\textcolor{darkgreen}{(+8.59)}} \\
\textbf{DS-1000} & \textbf{Pass@1} & 34.38\std{0.6} & 45.31\std{0.5} & 25.78\std{0.8} & 38.28\std{0.4} & 53.91\std{0.7} & 46.88\std{0.5} & 42.97\std{0.9} & 51.56\std{0.6} & \textbf{58.59} \textbf{\textcolor{darkgreen}{(+13.28)}} \\
\cmidrule(lr){1-11}
\textbf{Avg. (OOD)} & \textbf{EM} & 50.00\std{0.4} & 55.47\std{0.6} & 53.52\std{0.5} & 51.57\std{0.7} & 58.21\std{0.8} & 60.35\std{0.4} & 62.50\std{0.6} & 63.68\std{0.5} & \textbf{67.19} \textbf{\textcolor{darkgreen}{(+11.72)}} \\
\textbf{} & \textbf{F1} & 59.96\std{0.5} & 66.41\std{0.4} & 61.64\std{0.8} & 61.24\std{0.6} & 66.21\std{0.5} & 68.73\std{0.7} & 66.61\std{0.4} & 69.32\std{0.6} & \textbf{73.34} \textbf{\textcolor{darkgreen}{(+6.93)}} \\
\textbf{} & \textbf{Acc./Pass} & 41.47\std{0.7} & 43.91\std{0.5} & 28.51\std{0.6} & 35.29\std{0.9} & 48.45\std{0.4} & 45.28\std{0.8} & 44.10\std{0.5} & 49.53\std{0.7} & \textbf{55.79} \textbf{\textcolor{darkgreen}{(+11.88)}} \\
\bottomrule
\end{tabular}
\end{adjustbox}
\label{tab:ood-results}
\end{table}

\section{Experiments}
\label{sec:experiments}

We evaluate \flowr{} through the following research questions (RQs):
\textbf{RQ1:} Can \flowr{} outperform existing workflow orchestration methods?
\textbf{RQ2:} How does \flowr{} generalize to out-of-distribution benchmarks?
\textbf{RQ3:} How transferable is \flowr{} across different LLM backends?
\textbf{RQ4:} What are the contributions of core components such as the multi-turn interaction paradigm and RL modules?
\textbf{RQ5:} How does our canvas-aware RL objective compare against other RL algorithms (GRPO, DAPO)?

\subsection{Experimental Setup}

We evaluate \flowr{} on diverse in-distribution (IID) and out-of-distribution (OOD) benchmarks, compare against strong baselines, and report standard task-specific metrics.

\textbf{Datasets.}
We use six IID benchmarks covering three task categories: \textit{(i)} mathematical reasoning (GSM8K~\citep{cobbe2021gsm8k}, MATH~\citep{hendrycks2021math}), \textit{(ii)} question answering (HotPotQA~\citep{yang2018hotpotqa}, SQuAD v2~\citep{rajpurkar2018squad2}), and \textit{(iii)} code generation (MBPP~\citep{austin2021mbpp}, HumanEval~\citep{chen2021humaneval}). To assess generalization, we also test on six OOD benchmarks: TriviaQA~\citep{joshi2017triviaqa}, NaturalQuestions~\citep{kwiatkowski2019nq}, MathQA~\citep{amini2019mathqa}, AIME 2025, APPS~\citep{hendrycks2021apps}, and DS-1000~\citep{lai2022ds1000}. More details in Appendix~\ref{app:datasets}.

\textbf{Baselines.}
We compare \flowr{} against the following baselines: \textit{(i)} direct LLM baselines (Qwen3-8B~\citep{qwen3}, GPT-4o-mini~\citep{openai2024gpt4o}), \textit{(ii)} standard fine-tuning methods (SFT, GRPO~\citep{deepseekmath2024}), \textit{(iii)} workflow-based methods (AFlow~\citep{aflow2024}), and \textit{(iv)} agent with RL methods (AgentFlow~\citep{agentflow2025}, Router-R1~\citep{routerr12025}, Orchestrator~\citep{orchestrator2025}). More details are provided in Appendix~\ref{app:baselines}.

\textbf{Evaluation Metrics.}
Following standard practice, we use Exact Match (EM) and F1 score for question answering tasks, Accuracy (Acc.) for mathematical reasoning tasks, and Pass@1 for code generation tasks. More details of the evaluation metrics are shown in Appendix~\ref{app:metrics}.

\subsection{Main Results (RQ1)}
\label{subsec:main-results}

As illustrated in Table~\ref{tab:main-results}, \flowr{} achieves consistent improvements over GPT-4o-mini across six IID benchmarks and outperforms multiple baseline categories. We attribute these gains to three interlocking design choices. First, the \canvas{} exposes graph-level execution feedback after every atomic edit, so structural errors are repaired \emph{in-loop} rather than only at the end of generation, which drives the largest gain on long-horizon mathematical reasoning. Second, the \director{}--executor decoupling lets a lightweight policy focus on transferable orchestration while the pluggable backend handles task-specific computation, allowing a small policy to \emph{amplify} a strong backend without backend fine-tuning. Third, progressive canvas editing decomposes long construction into checkable, repairable local decisions, yielding cleaner credit assignment and consistent Pass@1 gains on code generation.

\begin{table}[!t]
\centering
\fontsize{8.5pt}{10pt}\selectfont
\caption{Ablation of \flowr{} on twelve datasets (GPT-4o-mini backend). Component-level rows remove the agent, multi-turn interaction, canvas, or RL; RL-objective rows remove the token mask, diversity-constrained reward, or conditional release.}
\label{tab:ablation}
\begin{adjustbox}{max width=\linewidth}\tabcolsep=2.5pt
\begin{tabular}{@{} l | cccccccc | cccccccc @{}}
\toprule
\textbf{Method} & \multicolumn{8}{c|}{\textbf{IID}} & \multicolumn{8}{c}{\textbf{OOD}} \\
\cmidrule(lr){2-9}\cmidrule(lr){10-17}
\textbf{} & \textbf{GSM8K} & \textbf{MATH} & \multicolumn{2}{c}{\textbf{HotPotQA}} & \multicolumn{2}{c}{\textbf{SQuAD v2}} & \textbf{MBPP} & \textbf{HEval} & \multicolumn{2}{c}{\textbf{TriviaQA}} & \multicolumn{2}{c}{\textbf{NQ}} & \textbf{MathQA} & \textbf{AIME} & \textbf{APPS} & \textbf{DS-1k} \\
\cmidrule(lr){2-9}\cmidrule(lr){10-17}
\textbf{} & \textbf{Acc.} & \textbf{Acc.} & \textbf{EM} & \textbf{F1} & \textbf{EM} & \textbf{F1} & \textbf{Pass@1} & \textbf{Pass@1} & \textbf{EM} & \textbf{F1} & \textbf{EM} & \textbf{F1} & \textbf{Acc.} & \textbf{Acc.} & \textbf{Pass@1} & \textbf{Pass@1} \\
\midrule
\multicolumn{17}{l}{\textit{Component-level ablations}} \\
\midrule
\textbf{w/o Agent}        & 94.53 & 70.31 & 72.66 & 78.91 & 66.41 & 70.63 & 60.94 & 83.59 & 69.53 & 75.08 & 53.12 & 57.43 & 80.47 & 23.33 & 41.40 & 40.63 \\
\textbf{w/o Multi-turn}   & 91.41 & 75.00 & 72.66 & 81.09 & 56.25 & 66.48 & 57.81 & 88.28 & 63.28 & 68.28 & 43.75 & 55.00 & 78.13 & 20.00 & 38.84 & 41.41 \\
\textbf{w/o Canvas}       & 94.53 & 73.44 & 70.31 & 79.53 & 59.38 & 69.53 & 63.28 & 85.16 & 57.81 & 65.63 & 53.13 & 61.48 & 78.91 & 10.00 & 42.96 & 45.31 \\
\textbf{w/o RL}           & 91.41 & 71.81 & 72.66 & 82.19 & 59.38 & 67.66 & 64.06 & 89.06 & 64.84 & 71.80 & 53.13 & 60.72 & 80.47 & 20.00 & 38.28 & 49.22 \\
\midrule
\multicolumn{17}{l}{\textit{RL-objective ablations}} \\
\midrule
\textbf{w/o Token Mask}   & 93.75 & 67.97 & 70.31 & 76.32 & 75.78 & 80.23 & 81.25 & 89.84 & 71.88 & 76.31 & 53.12 & 59.94 & 78.12 & 23.33 & 39.06 & 46.88 \\
\textbf{w/o Div.\ Reward} & 94.53 & 70.31 & 69.53 & 75.44 & 75.00 & 79.43 & 82.81 & 88.28 & 71.09 & 75.05 & 51.56 & 55.43 & 78.91 & 16.67 & 37.50 & 46.09 \\
\textbf{w/o Cond.\ Rel.}  & 91.41 & 74.21 & 72.66 & 78.79 & 75.00 & 79.69 & 82.03 & 89.84 & 70.31 & 75.40 & 46.88 & 50.69 & 83.59 & 23.33 & 42.19 & 50.78 \\
\midrule
\textbf{\flowr{} (Full)}  & \textbf{96.09} & \textbf{81.25} & \textbf{78.12} & \textbf{84.98} & \textbf{78.12} & \textbf{83.67} & \textbf{84.38} & \textbf{92.96} & \textbf{79.69} & \textbf{84.11} & \textbf{54.69} & \textbf{62.56} & \textbf{88.67} & \textbf{26.67} & \textbf{49.21} & \textbf{58.59} \\
\bottomrule
\end{tabular}
\end{adjustbox}
\end{table}

\subsection{Generalization Results (RQ2)}
\label{subsec:generalization}
As shown in Table~\ref{tab:main-results}, \flowr{} maintains its superiority over baselines on six OOD benchmarks, with stable improvements in question answering and mathematical reasoning. The OOD generalization stems from the same decoupled design that drives the IID gains: because the \director{} learns to compose task-agnostic cognitive operators rather than task-specific solution patterns, the orchestration strategy fitted on IID training carries over naturally to novel task families such as AIME~2025 and DS-1000. In contrast, large-scale LLM backends show strong zero-shot capability but lack transferable orchestration, while search-based workflows are inefficient in OOD settings. These results show that \flowr{} achieves robust cross-task and cross-distribution gains without task-specific fine-tuning of the backend LLM.

\begin{figure}[!t]
\centering
\begin{minipage}[c]{0.712\linewidth}
\centering
\vspace{0pt}
\subfigure[Radar charts on different backends]{
\label{fig:rq3:radar}
\includegraphics[width=\linewidth,height=0.671217166\linewidth]{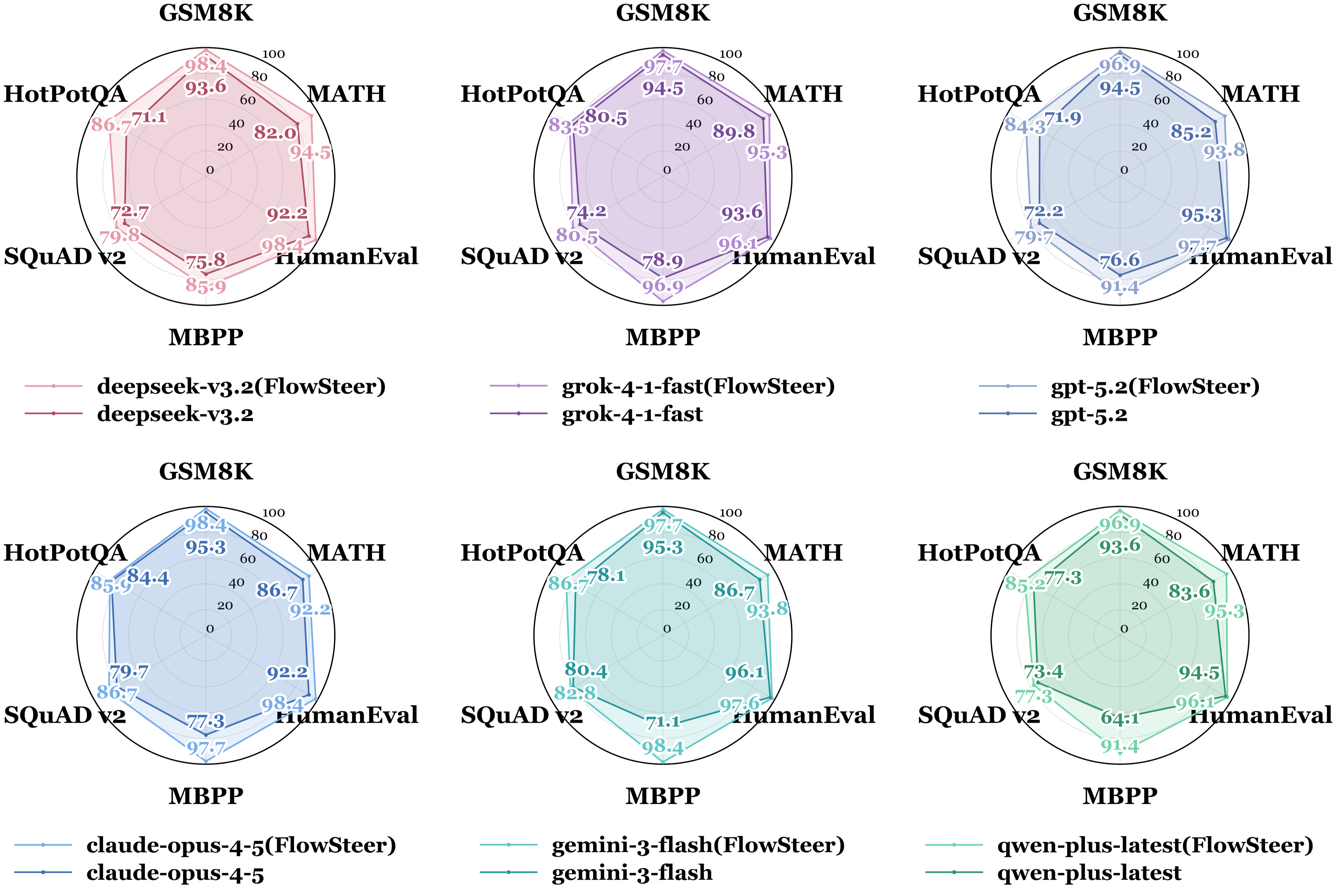}}
\end{minipage}\hfill
\begin{minipage}[c]{0.277\linewidth}
\centering
\subfigure[Training dynamics]{
\label{fig:rq3:dynamics}
\begin{tabular}{@{}c@{}}
\includegraphics[width=\linewidth]{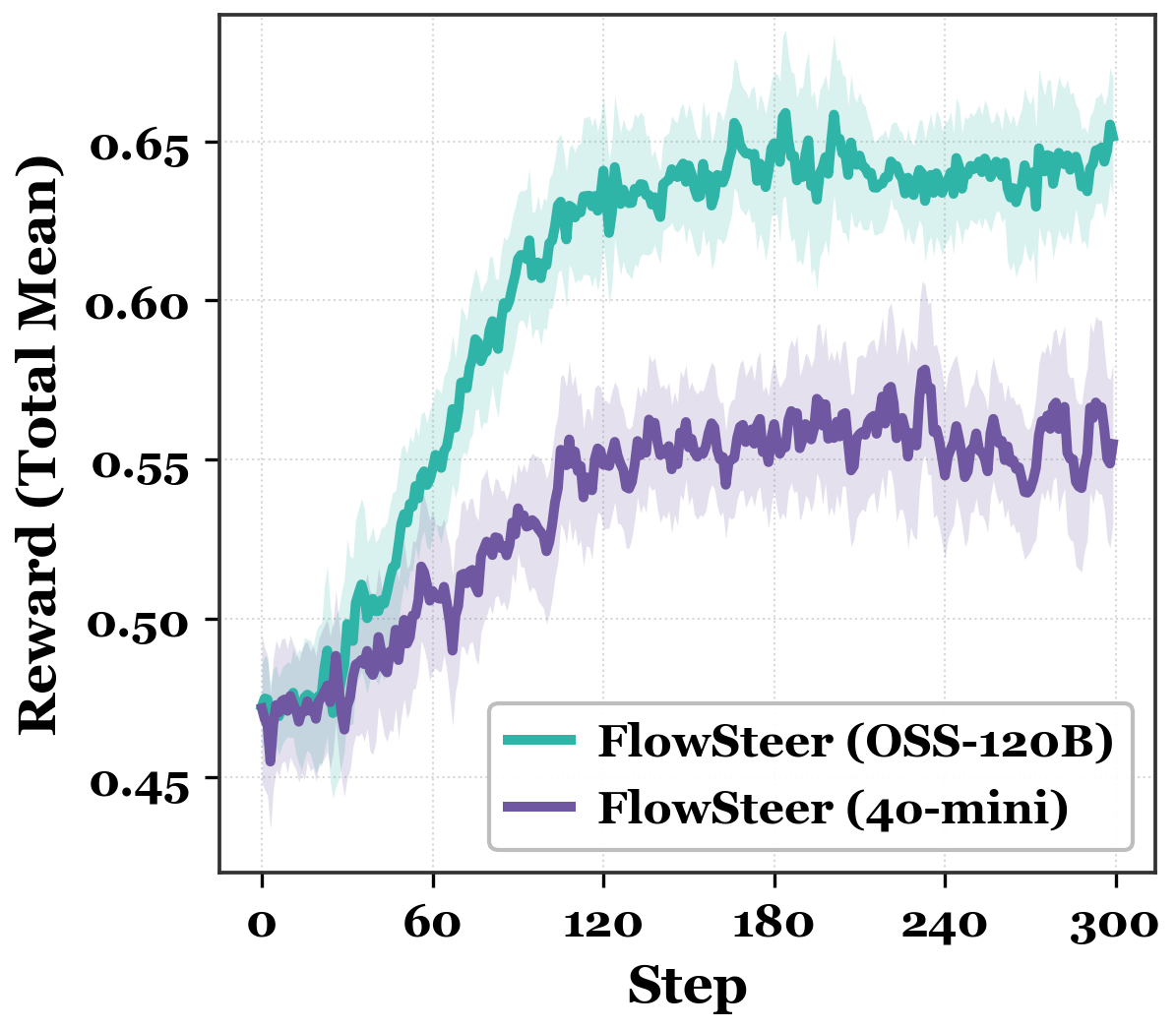}\\[-1pt]
\includegraphics[width=\linewidth]{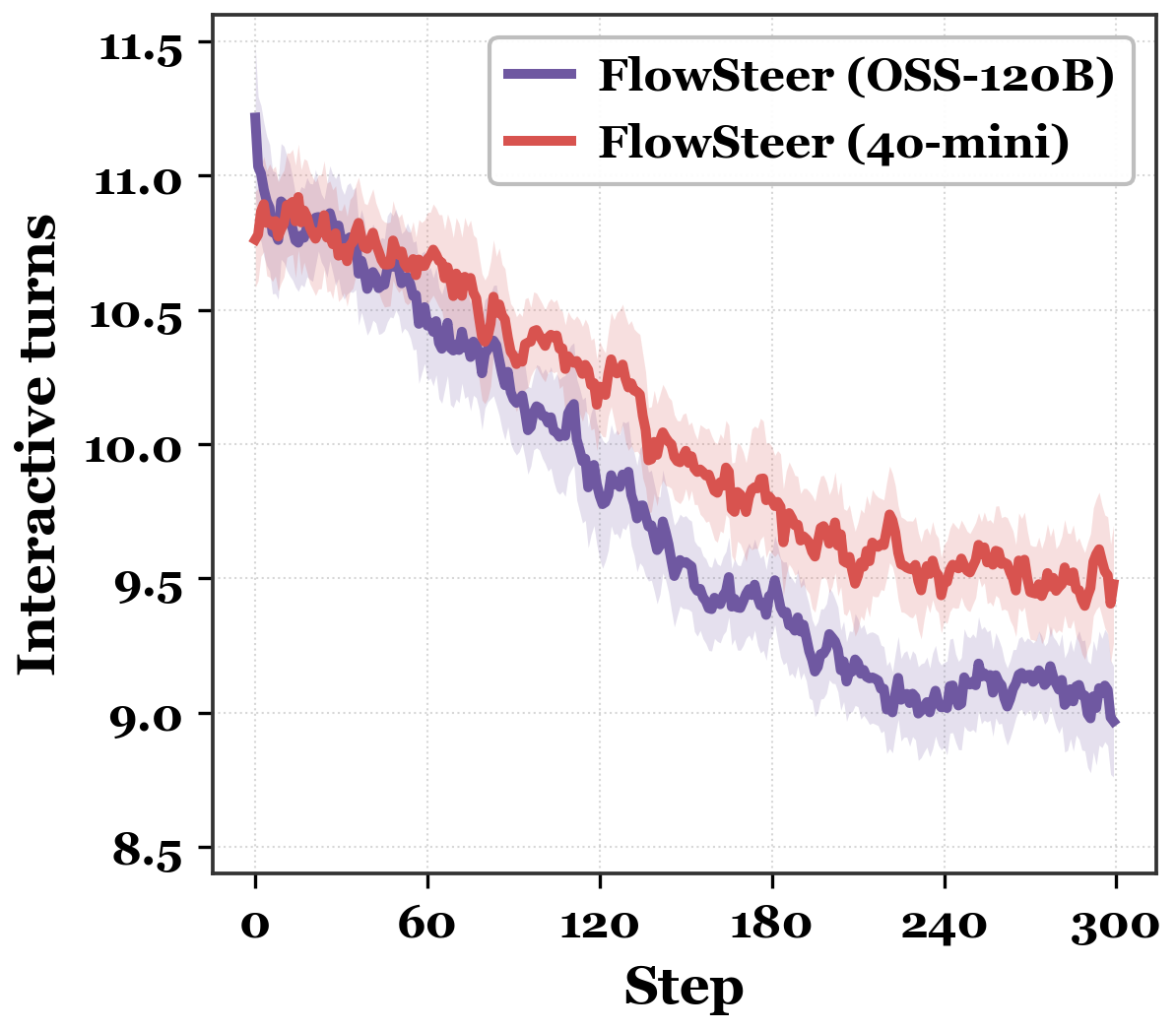}
\end{tabular}}
\end{minipage}\\[2pt]
\subfigure[Aggregated performance by task type]{
\label{fig:rq3:bar}
\includegraphics[width=\linewidth,height=0.180976220\linewidth]{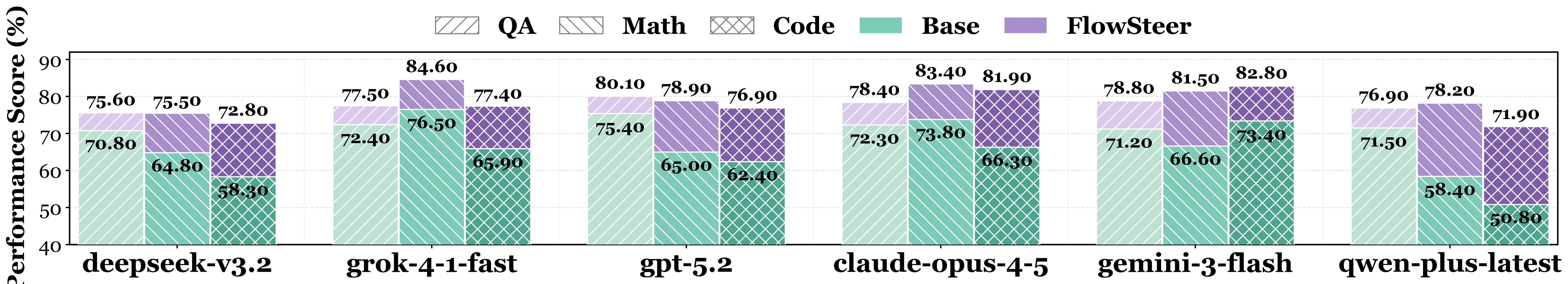}}
\caption{Transferability analysis of \flowr{} across six LLM backends (RQ3). (a) Per-backend radar comparison on six IID benchmarks. (b) Training dynamics on GPT-4o-mini vs.\ GPT-OSS-120B. (c) Aggregated performance by task type.}
\label{fig:rq3}
\end{figure}

\subsection{Transferability across LLM Backends (RQ3)}
\label{subsec:transferability}
As shown in Figure~\ref{fig:rq3}, \flowr{} yields consistent improvements across all six backends on IID benchmarks (Figure~\ref{fig:rq3}(a)), with weaker baselines benefiting more, and stable gains across math, QA, and code on the aggregated OOD evaluation (Figure~\ref{fig:rq3}(c)). The training dynamics (Figure~\ref{fig:rq3}(b)) further show that locally deployed GPT-OSS-120B reaches higher reward with more stable optimization than GPT-4o-mini while converging to similar interaction-turn counts, indicating that zero-cost local models can effectively replace API backbones for training.

\subsection{Ablation Study (RQ4)}
\label{subsec:ablation}

\begin{figure}[t]
\centering
\captionsetup[subfigure]{font=footnotesize}
\setlength{\subfigcapskip}{-3pt}
\begin{minipage}[c]{0.68\linewidth}
\centering
\renewcommand{\arraystretch}{1.30}
\setlength{\tabcolsep}{1.5pt}
\fontsize{11.4pt}{13pt}\selectfont
\begin{adjustbox}{max width=\linewidth}
\begin{tabular}{@{}l@{\hspace{2pt}}l@{\hspace{2.5pt}}cccccccccccc@{}}
\toprule
\textbf{Group} & \textbf{Setting} & \textbf{GSM8K} & \textbf{MATH} & \textbf{HotPot} & \textbf{SQuAD2} & \textbf{MBPP} & \textbf{HEval} & \textbf{TrivQA} & \textbf{NQ} & \textbf{MathQA} & \textbf{AIME} & \textbf{APPS} & \textbf{DS1K} \\
\midrule
\textbf{Full} & --- & \textbf{96.09} & \textbf{81.25} & \textbf{78.12} & \textbf{78.12} & \textbf{84.38} & \textbf{92.96} & \textbf{79.69} & \textbf{54.69} & \textbf{88.67} & \textbf{26.67} & \textbf{49.21} & \textbf{58.59} \\
\midrule
\multirow{4}{*}{\textit{Removal}}
 & $-$Review/Revise   & 96.09 & 81.25 & 71.09 & 73.44 & 84.38 & 92.96 & 79.69 & 50.78 & 88.67 & 26.67 & 49.21 & 58.59 \\
 & $-$ScEnsem/Aggr    & 96.09 & 81.25 & 75.78 & 78.12 & 84.38 & 92.96 & 79.69 & 53.12 & 88.67 & 20.00 & 49.21 & 58.59 \\
 & $-$Verify/Test     & 94.53 & 70.31 & 78.12 & 75.78 & 78.91 & 86.72 & 79.69 & 54.69 & 85.16 & 20.00 & 35.16 & 38.28 \\
 & Minimal (4 ops)    & 90.62 & 68.75 & 65.63 & 71.09 & 78.91 & 86.72 & 78.12 & 48.44 & 85.16 & 16.67 & 35.16 & 38.28 \\
\midrule
\multirow{3}{*}{\textit{Subst.}}
 & Prog$\to$Jupyter      & 96.09 & 84.38 & 78.12 & 78.12 & 85.94 & 92.96 & 79.69 & 54.69 & 88.67 & 30.00 & 52.34 & 62.50 \\
 & Custom$\to$GenSkills  & 96.09 & 81.25 & 84.38 & 78.91 & 84.38 & 92.96 & 79.69 & 57.03 & 88.67 & 26.67 & 49.21 & 58.59 \\
 & Review$\to$SelfCrit   & 96.09 & 81.25 & 71.88 & 77.34 & 84.38 & 92.96 & 73.44 & 50.00 & 88.67 & 26.67 & 49.21 & 58.59 \\
\midrule
\multirow{3}{*}{\textit{Addition}}
 & $+$Search       & 96.09 & 81.25 & 71.88 & 77.34 & 84.38 & 92.96 & 85.16 & 63.28 & 88.67 & 26.67 & 49.21 & 58.59 \\
 & $+$Calculator   & 96.09 & 80.47 & 78.12 & 78.12 & 84.38 & 92.96 & 79.69 & 54.69 & 88.28 & 30.00 & 49.21 & 58.59 \\
 & $+$Debugger     & 96.09 & 81.25 & 78.12 & 78.12 & 84.38 & 92.96 & 79.69 & 54.69 & 88.67 & 33.33 & 51.56 & 62.50 \\
\bottomrule
\end{tabular}
\end{adjustbox}
\\[5pt]
{\footnotesize (a) Operator-level ablation across 12 datasets}
\end{minipage}\hfill
\begin{minipage}[c]{0.31\linewidth}
\centering
\addtocounter{subfigure}{1}%
\subfigure[Overall pairwise]{\includegraphics[width=\linewidth]{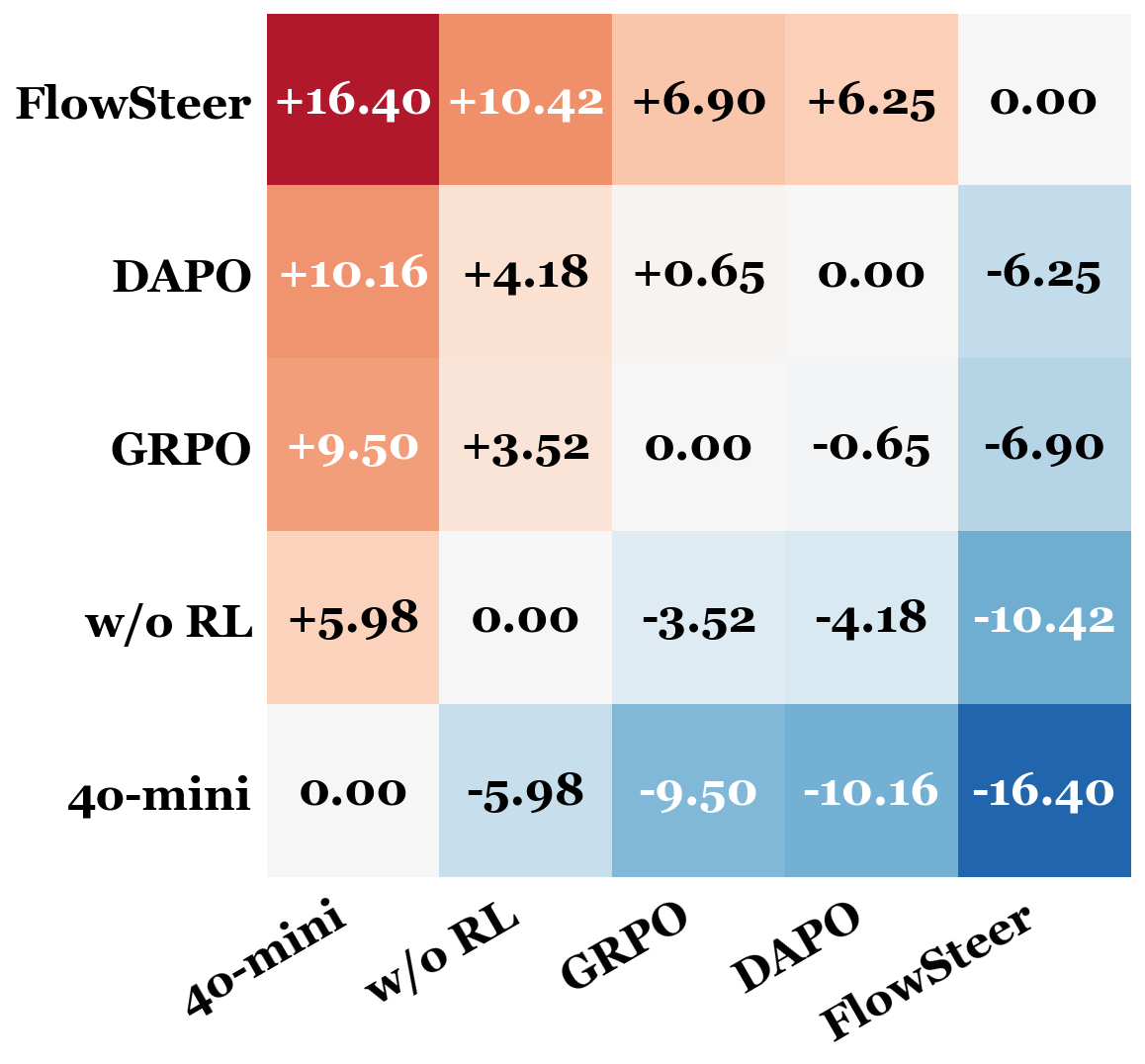}}
\end{minipage}\\[-2pt]
\noindent
\subfigure[Token consumption]{\includegraphics[width=0.333\linewidth]{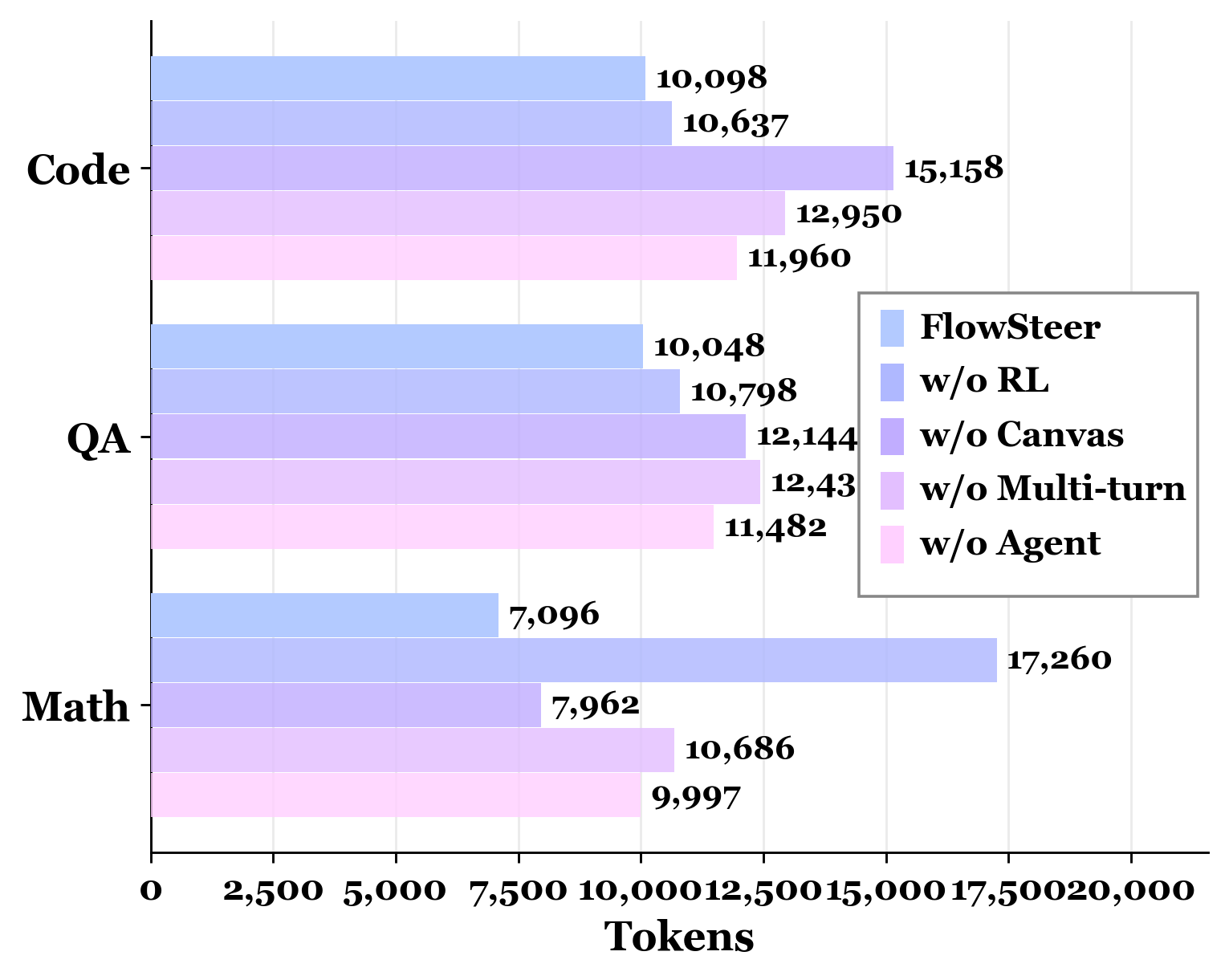}}\hfill%
\subfigure[Interactive turns]{\includegraphics[width=0.333\linewidth]{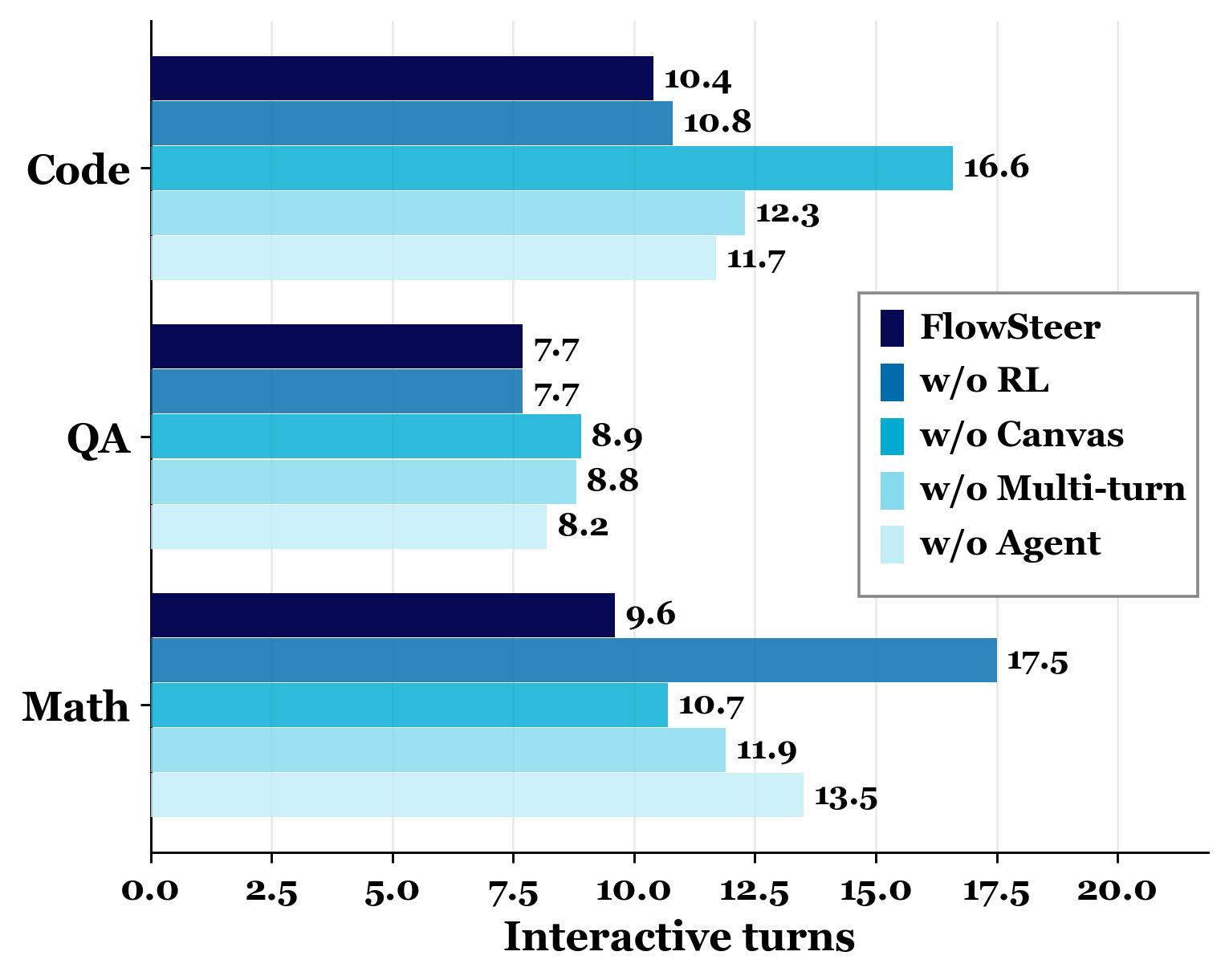}}\hfill%
\subfigure[Performance scaling]{\includegraphics[width=0.333\linewidth]{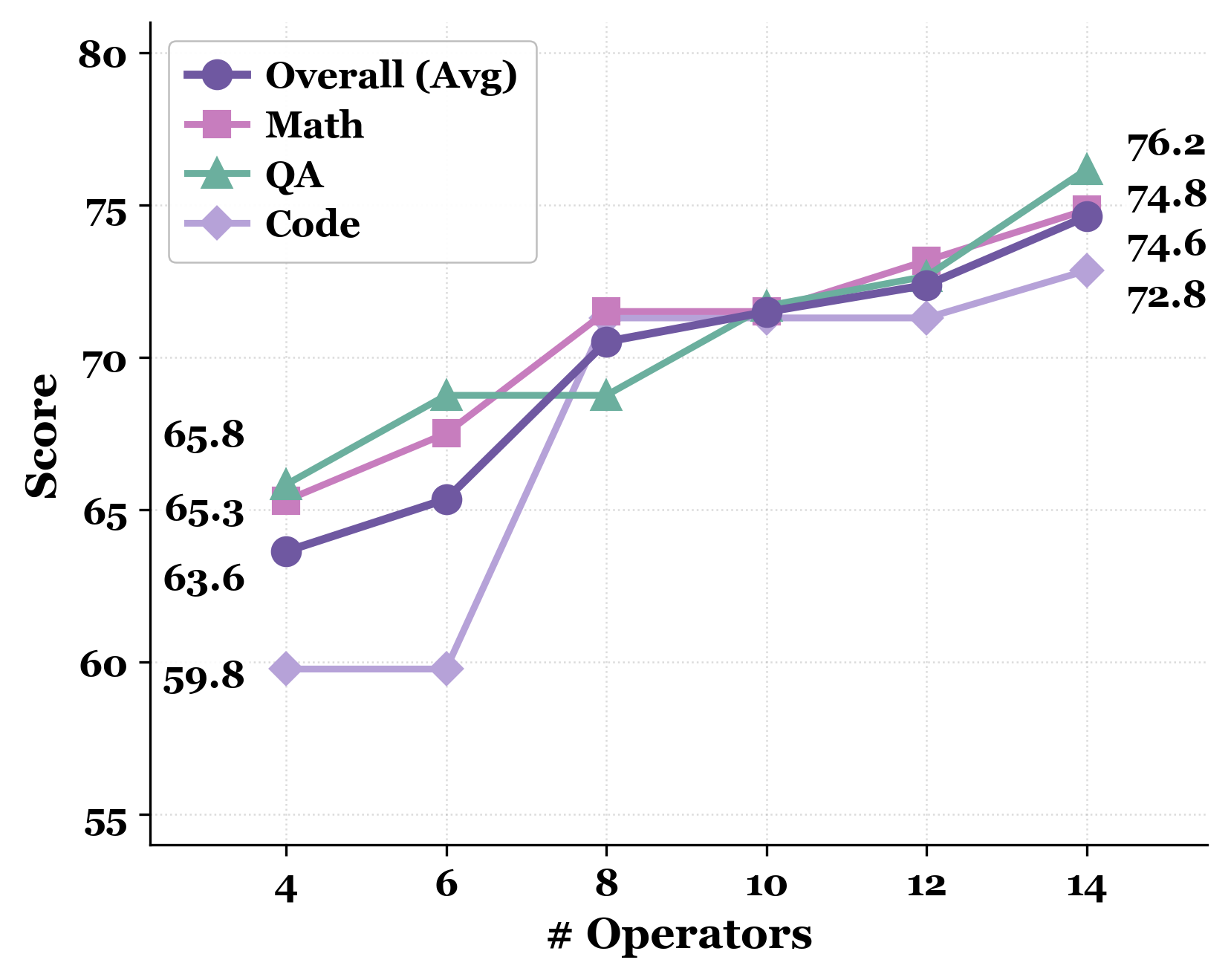}}
\caption{Ablation, scaling, and RL algorithm analysis. (a) Operator-level ablation. (b) Pairwise comparison across RL algorithms (averaged across IID and OOD benchmarks; row $-$ col). (c--d) Token consumption and interaction turns. (e) Performance scaling with operator-library size.}
\label{fig:rq5}
\end{figure}

As shown in Table~\ref{tab:ablation} and Figure~\ref{fig:rq5}, removing any component (agent, multi-turn interaction, canvas, or RL) degrades performance across all twelve datasets, with the largest drops on math/code (RL) and QA (canvas feedback); the three RL-objective adaptations (token mask, diversity-constrained reward, conditional release) are likewise complementary and each removal substantially hurts one or more task families. Operator-level ablations (Figure~\ref{fig:rq5}(a)) show that \flowr{} is robust to operator removal, substitution, and addition; scaling the operator library from 4 to 14 operators steadily lifts Overall score from 63.6 to 74.6 (Figure~\ref{fig:rq5}(e)). The full model also reduces token consumption and interaction turns relative to all ablation variants (Figure~\ref{fig:rq5}(c--d)) while maintaining accuracy.

\subsection{Comparison of RL Algorithms (RQ5)}
\label{subsec:rl-compare}

As shown in Figure~\ref{fig:rq5}(b), \flowr{} outperforms GRPO and DAPO under identical training settings. The canvas-aware token mask removes biased gradient signals from environment-feedback tokens, and the structural-then-answer reward gate suppresses shortcut behaviors --- two failure modes GRPO and DAPO inherit from treating the trajectory as a structurally opaque sequence.

\section{Conclusion}

In this work, we propose \flowr{}, an end-to-end reinforcement learning framework for interactive workflow orchestration. By training a lightweight policy model to interact with an executable \canvas{}, \flowr{} learns transferable orchestration strategies from real execution feedback.

\bibliographystyle{plainnat}
\bibliography{example_paper}

\clearpage
\appendix

\begin{figure}[ht]
\centering
\begin{tcolorbox}[breakable, colback=gray!5!white, colframe=gray!75!black, title=System Prompt for Flow-Director, fonttitle=\bfseries\small, boxrule=0.5pt, arc=2pt, left=3pt, right=3pt, top=1pt, bottom=1pt]
\small

You are building a workflow step by step to solve the problem. In each turn, output EXACTLY ONE XML action (add/delete/modify/set\_prompt/finish or a structure add). The goal is to build a reliable workflow, not a single-shot answer. Keep your thinking brief (under 200 words) and focus on choosing the next action, not solving the whole problem yourself. Let the operators do the computation. If you use $<$think$>$...$<$/think$>$, you MUST output an $<$action$>$ tag AFTER it.

\vspace{0.2em}
\textbf{Available Operators (12 total).} Programmer: write and execute Python code. Plan: create solution strategy. Custom: natural language reasoning. Decompose: break into sub-problems. Test: run test cases. Review: evaluate quality. Verify: double-check result. Revise: fix issues. ScEnsemble: multiple solutions voting. Aggregate: combine results. AnswerGenerate: format final answer. Format: extract concise answer.

\vspace{0.2em}
\textbf{Actions (8 types).} The system supports 8 action types: add (add a single operator), finish (complete workflow building), parallel (add parallel branches), conditional (add conditional branch), loop (add loop structure), delete (remove a node), modify (change operator type), and set\_prompt (set custom prompt for an operator). All actions use XML format for reliable parsing.

\vspace{0.2em}
\textbf{Finish Policy.} Always add Format as the last step before finishing to extract concise answer. Before finishing, add Format to extract concise answer from the solution. When Format has extracted the answer and you are satisfied, output $<$action$>$finish$<$/action$>$. When the result is wrong or needs improvement, add more operators.

\end{tcolorbox}
\caption{The system prompt template for \director{}. The prompt instructs the agent to build workflows step by step through multi-turn interactions with the \canvas{}.}
\label{fig:system_prompt}
\end{figure}

\begin{table}[ht]
\centering
\caption{Summary of the 12 operators in $\mathcal{O}$ with input/output specifications.}
\label{tab:operator_summary}
\scriptsize
\renewcommand{\arraystretch}{0.95}
\setlength{\tabcolsep}{4pt}
\begin{tabular}{@{}p{1.7cm}p{1.2cm}p{1.8cm}p{2.0cm}p{6.5cm}@{}}
\toprule
\textbf{Operator} & \textbf{Category} & \textbf{Input} & \textbf{Output} & \textbf{Description} \\
\midrule
Plan & Planning & problem & approach, plan & Creates high-level solution strategy by identifying the overall approach and breaking it into actionable steps. \\
Decompose & Planning & problem & sub\_problems & Breaks complex problems into smaller, independent sub-problems that can be solved separately. \\
\midrule
Programmer & Solving & problem, analysis & code, output & Writes and executes Python code to compute answers; used for mathematical calculations and code generation tasks. \\
Custom & Solving & input, context & response & Performs natural language reasoning without code execution; used for QA, analysis, and explanations. \\
AnswerGenerate & Solving & input & thought, answer & Generates structured answers with explicit reasoning chains; similar to Custom but with formatted output. \\
\midrule
Test & Verification & code, tests & pass/fail, solution & Executes unit tests against generated code and triggers automatic revision if tests fail; code-specific. \\
Review & Verification & solution & is\_correct, feedback & Evaluates solution quality through critique and provides detailed feedback for potential revision. \\
Verify & Verification & answer & is\_correct, answer & Independently re-solves the problem to verify correctness; does not execute code, uses logical reasoning. \\
\midrule
Revise & Revision & solution, feedback & revised\_solution & Fixes issues in solutions based on feedback from Review or Test operators. \\
\midrule
ScEnsemble & Ensemble & solutions & selected\_solution & Implements majority voting across multiple candidate solutions to select the most consistent answer. \\
Aggregate & Ensemble & sub\_answers & aggregated & Combines results from parallel branches or sub-problems into a unified answer. \\
\midrule
Format & Formatting & solution & final\_answer & Extracts concise final answer from verbose solution; required as the last step before finishing. \\
\bottomrule
\end{tabular}
\end{table}


\section{Prompts and Operator Library in Flow-Steer}
\label{app:prompts}

\subsection{System Prompt for Flow-Director}
\label{app:system_prompt}

As shown in Figure~\ref{fig:system_prompt}, we adopt the system prompt template for the Flow-Director, where the agent alternates between internal reasoning in \texttt{<think>} and communicates with the Workflow Canvas by \texttt{<action>}. The prompt instructs the agent to output exactly one XML action per turn, keeping thinking brief and focusing on action selection rather than problem-solving. The workflow evolves step by step until finalized with a \texttt{finish} action.


\subsection{Operator Library $\mathcal{O}$}
\label{app:operators}

As shown in Table~\ref{tab:operator_summary}, the operator library $\mathcal{O}$ includes 12 operators in six categories: planning (Plan, Decompose), solving (Programmer, Custom, AnswerGenerate), verification (Test, Review, Verify), revision (Revise), ensemble (ScEnsemble, Aggregate), and formatting (Format). The Programmer operator executes Python code in a sandboxed environment with scientific libraries under a 30-second timeout.


\begin{figure}[ht]
\centering
\begin{tcolorbox}[colback=gray!3!white, colframe=gray!60!black, title=Multi-turn Interaction Example, fonttitle=\bfseries\small, boxrule=0.5pt, arc=2pt]
\small

\textbf{Problem}: ``Find the sum of all positive integers $n$ such that $n^2 + 12n - 2007$ is a perfect square.''

\vspace{0.5em}
\hrule
\vspace{0.5em}

\textbf{Round 1} (BUILDING $\rightarrow$ AWAITING\_PROMPT)\\
\textit{Flow-Director}:
\begin{lstlisting}[basicstyle=\ttfamily\scriptsize, frame=none]
<think>This is a number theory problem. Start with Plan.</think>
<action>add</action><operator>Plan</operator>
\end{lstlisting}
\textit{Canvas Feedback}: ``Added Plan as node\_1. State: AWAITING\_PROMPT.''

\vspace{0.3em}
\textbf{Round 2} (AWAITING\_PROMPT $\rightarrow$ BUILDING)\\
\textit{Flow-Director}:
\begin{lstlisting}[basicstyle=\ttfamily\scriptsize, frame=none]
<action>set_prompt</action><target>node_1</target>
<prompt>Analyze the equation and identify the approach.</prompt>
\end{lstlisting}
\textit{Canvas Feedback}: ``Prompt set. State: BUILDING. DSL: Plan''

\vspace{0.3em}
\textbf{Round 3-4}: Add Programmer operator with prompt for algebraic computation.

\vspace{0.3em}
\textbf{Round 5-6}: Add Verify operator with prompt for result validation.

\vspace{0.3em}
\textbf{Round 7-8}: Add Format operator with prompt for answer extraction.

\vspace{0.3em}
\textbf{Round 9} (BUILDING $\rightarrow$ Finished)\\
\textit{Flow-Director}:
\begin{lstlisting}[basicstyle=\ttfamily\scriptsize, frame=none]
<think>Workflow complete. Ready to finish.</think>
<action>finish</action>
\end{lstlisting}
\textit{Canvas Feedback}: ``Workflow finished. Final DSL: Plan -> Programmer -> Verify -> Format. Answer: 89''

\end{tcolorbox}
\caption{A complete multi-turn interaction example showing the two-step mechanism (add + set\_prompt) for workflow construction.}
\label{fig:interaction_example}
\end{figure}

\subsection{Multi-turn Interaction Process}
\label{app:interaction}

The Flow-Director builds workflows through multi-turn interactions with the Workflow Canvas. Each interaction round consists of the Flow-Director generating an action and the Canvas providing execution feedback.

\subsubsection{Environment State Machine}

The Workflow Canvas maintains a finite state machine with two states:

\begin{itemize}
    \item \textbf{BUILDING}: The normal state where the Flow-Director can execute \texttt{add}, \texttt{delete}, \texttt{modify}, or \texttt{finish} actions.
    \item \textbf{AWAITING\_PROMPT}: After adding an operator, the Canvas transitions to this state, requiring the Flow-Director to specify a custom prompt via \texttt{set\_prompt} before returning to BUILDING.
\end{itemize}

This two-step mechanism reduces the cognitive load on the policy model by separating structural decisions (which operator to add) from content decisions (what prompt to use), improving the effectiveness of small-scale models.

\subsubsection{Action Types}

The 8 action types available to the Flow-Director are summarized in Table~\ref{tab:action_types}. The basic actions (add, finish, delete, modify, set\_prompt) enable sequential workflow construction, where the Flow-Director progressively builds the operator graph through iterative additions and modifications. The advanced actions (parallel, conditional, loop) extend the expressiveness to complex control structures, allowing the Flow-Director to construct workflows with branching, conditional execution, and iterative refinement patterns.

\begin{table}[ht]
\centering
\caption{Action types in Flow-Steer.}
\label{tab:action_types}
\small
\begin{tabular*}{\columnwidth}{@{\extracolsep{\fill}}cc@{}}
\toprule
\textbf{Action} & \textbf{Description} \\
\midrule
\texttt{add} & Add a single operator \\
\texttt{finish} & Complete workflow building \\
\texttt{delete} & Remove a node \\
\texttt{modify} & Change operator type \\
\texttt{set\_prompt} & Set custom prompt \\
\midrule
\texttt{parallel} & Add parallel branches \\
\texttt{conditional} & Add conditional branch \\
\texttt{loop} & Add loop structure \\
\bottomrule
\end{tabular*}
\end{table}

\subsubsection{Complete Interaction Example}

A complete multi-turn interaction between the Flow-Director (small-scale policy model) and the Workflow Canvas (large-scale LLM backend) is illustrated in Figure~\ref{fig:interaction_example}. The process begins with the Flow-Director analyzing the problem and selecting an appropriate initial operator, which establishes the reasoning strategy for subsequent workflow construction. The Flow-Director then engages in iterative exchanges: after each canvas feedback, it reflects on the current workflow state, selects the next operator, and specifies task-specific prompts. Through successive rounds of construction and verification, the workflow gradually takes shape, and the Canvas executes the completed workflow to produce the final answer.

\section{Theoretical Proofs}
\label{app:proofs}

In this appendix, we provide detailed proofs for Propositions 1--3 stated in the main text. We first introduce the notation and assumptions, then present each proof in turn.

\paragraph{Notation.}
Given a task $q$, a workflow is represented as a directed graph $\mathcal{G}=(V,E,\mathrm{attr})$, where each node $v\in V$ is bound to an operator $\mathrm{op}(v)\in\mathcal{O}$ with attributes $\mathrm{attr}(v)=(\mathrm{param}(v),\mathrm{prompt}(v),\dots)$. The executor schedules nodes according to dependency edges $E$ and produces output $y=\mathrm{Execute}(\mathcal{G},q,\mathcal{M}_{\mathrm{exec}})$. During multi-turn orchestration, the policy model (\director{}) interacts with the canvas (\canvas{}), forming a trajectory
\begin{equation}
\tau=\{(a_t^{\mathrm{think}},a_t,o_t^{\mathrm{exec}})\}_{t=1}^{T},
\end{equation}
where $a_t^{\mathrm{think}}$ denotes the reflection text that summarizes the current state and identifies potential issues, $a_t$ denotes the edit action comprising an action type and its content, and $o_t^{\mathrm{exec}}$ denotes the execution and validation feedback returned by the canvas environment. The operator library consists of 12 functional operators:
\begin{equation}
\begin{split}
\mathcal{O}=\{&\texttt{Plan},\texttt{Decompose},\texttt{Programmer},\texttt{Custom},\texttt{AnswerGenerate},\texttt{Test}, \\
&\texttt{Review},\texttt{Verify},\texttt{Revise},\texttt{ScEnsemble},\texttt{Aggregate},\texttt{Format}\},
\end{split}
\end{equation}
where each operator implements a specific cognitive function in the problem-solving process. The action type set consists of 8 types:
\begin{equation}
\mathcal{A}_{\mathrm{type}}=\{\texttt{add},\texttt{delete},\texttt{modify},\texttt{set\_prompt},\texttt{finish},\texttt{parallel},\texttt{conditional},\texttt{loop}\},
\end{equation}
where the first five types can be understood as local editing actions, and the last three types are used to explicitly generate control structures (parallel, conditional, loop) that form long-range control flows.

\paragraph{Assumptions.}
To obtain formal proofs, we introduce three mild and interpretable assumptions. They do not require the real system to be perfect, but are sufficient to support the theoretical justification of why the propositions hold.

\textit{Assumption 1 (Cognitive Decomposability).} For the task families considered in this work (multi-hop QA, standard QA, mathematical reasoning, code generation), there exists a goal-directed problem-solving procedure that can be decomposed into a finite combination of cognitive primitives (defined below), and can be represented as a directed graph with conditionals and loops.

\textit{Assumption 2 (Informative Canvas Feedback).} The canvas feedback $o_t^{\mathrm{exec}}$ is informative about whether the current workflow is executable, whether it satisfies structural constraints, and which local modifications can fix errors. Formally, there exists non-zero probability that the feedback changes the agent's posterior distribution over the optimal next edit, i.e., $I(Z; o_t^{\mathrm{exec}} \mid H_{t-1}) > 0$, where $Z$ denotes the latent variable related to the solution (e.g., the correct answer or a feasible workflow).

\textit{Assumption 3 (Repairability).} When a workflow is in a non-executable or constraint-violating state, the canvas can provide sufficiently localized failure reasons or repair suggestions, such that there exist editing actions that can push it toward a more feasible state. Formally, there exists a potential function $\Phi(\mathcal{G})$ measuring the distance to the feasible set, and there exist actions such that $\mathbb{E}[\Phi(\mathcal{G}_t) \mid H_{t-1}] < \Phi(\mathcal{G}_{t-1})$.

\subsection{Proof of Proposition 1}
\label{app:proof1}

\begin{proposition}[Operator-Action Cognitive Completeness]
\label{prop:complete:app}
Let the operator library $\mathcal{O}$ and action type set $\mathcal{A}_{\mathrm{type}}$ be defined as above. Under Assumption 1, for any task $q$, there exists a finite-length action sequence $\{a_t\}_{t=1}^{T}$ with each action type belonging to $\mathcal{A}_{\mathrm{type}}$, such that starting from an empty canvas $\mathcal{G}_0$, the iterative updates $\mathcal{G}_t = \mathrm{Update}(\mathcal{G}_{t-1}, a_t, o_t^{\mathrm{exec}})$ can construct a terminal workflow $\mathcal{G}_T$ that implements a complete cognitive control loop, thereby covering the key problem-solving steps required for the task types considered in this work (mathematics, QA, code generation).
\end{proposition}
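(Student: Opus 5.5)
The argument is constructive. Assumption~1 gives a finite procedure built from cognitive primitives. We map each primitive to an operator in $\mathcal{O}$, then emit a finite sequence of actions from $\mathcal{A}_{\mathrm{type}}$ that builds the corresponding graph on the canvas.

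\textbf{Step 1: fix the primitives.} The primitives the appendix refers to are plan/decompose, solve, verify, revise, aggregate and format. Assumption~1 says the task's procedure is a finite directed graph $\mathcal{P}=(U,F)$ whose nodes are labelled by these primitives and which may contain conditionals and loops. We define a surjection $\kappa$ from $\mathcal{O}$ onto the primitive set using the categories of Table~\ref{tab:operator_summary}:
\begin{itemize}
\item plan/decompose $\mapsto$ \texttt{Plan}, \texttt{Decompose};
\item solve $\mapsto$ \texttt{Programmer}, \texttt{Custom}, \texttt{AnswerGenerate};
\item verify $\mapsto$ \texttt{Test}, \texttt{Review}, \texttt{Verify};
\item revise $\mapsto$ \texttt{Revise};
\item aggregate $\mapsto$ \texttt{ScEnsemble}, \texttt{Aggregate};
\item format $\mapsto$ \texttt{Format}.
\end{itemize}
Surjectivity means every primitive node $u\in U$ has at least one operator that realizes it.

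\textbf{Step 2: induction on the structure of $\mathcal{P}$.} We treat $\mathcal{P}$ as a structured control-flow graph built from sequential composition, parallel composition, conditionals and loops. The claim is that every such subgraph can be realized on the canvas by finitely many actions.
\begin{itemize}
\item \emph{Base case.} A single primitive node is realized by two actions: \texttt{add} with an operator $o\in\kappa^{-1}(u)$, then \texttt{set\_prompt}. This matches the BUILDING/AWAITING\_PROMPT two-step mechanism.
\item \emph{Sequential composition.} Repeated \texttt{add} appends to the current sink, since each edge update in Table~\ref{tab:operators} extends $E_t$ accordingly.
\item \emph{Parallel composition, conditional, loop.} These are emitted by \texttt{parallel}, \texttt{conditional} and \texttt{loop}, which perform the branch/merge updates on $E_t$. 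A parallel merge is closed with \texttt{Aggregate} or \texttt{ScEnsemble}.
\end{itemize}
The total length $T$ is bounded by a constant multiple of $|U|+|F|$, plus one trailing \texttt{Format} node and one \texttt{finish}. So $T<\infty$.

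\textbf{Step 3: the result is a complete cognitive control loop.} We exhibit a generic template. A loop of (verify $\to$ revise) is guarded by a conditional on the \texttt{is\_correct} output. It is preceded by plan $\to$ solve and followed by aggregate $\to$ format. This template realizes the plan--act--check--correct cycle, and the construction in Step~2 embeds it for every $\mathcal{P}$. We then instantiate the template for each task family:
\begin{itemize}
\item mathematics: \texttt{Plan} $\to$ \texttt{Programmer} $\to$ \texttt{Verify} $\to$ \texttt{Format}, as in Figure~\ref{fig:interaction_example};
\item QA: \texttt{Decompose} $\to$ parallel \texttt{Custom} $\to$ \texttt{Aggregate} $\to$ \texttt{Format};
\item code: \texttt{Programmer} $\to$ loop(\texttt{Test}, \texttt{Revise}) $\to$ \texttt{Format}.
\end{itemize}
This covers the key problem-solving steps claimed in the statement.

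\textbf{Main obstacle: feedback-dependent updates.} The update rule $\mathrm{Update}(\mathcal{G}_{t-1},a_t,o_t^{\mathrm{exec}})$ depends on $o_t^{\mathrm{exec}}$, so a syntactically valid edit might in principle be rejected. The claim only needs existence of some sequence, so I would argue the following. Each edit in the construction keeps the canvas in a constraint-satisfying state, because it respects the DAG and the AWAITING\_PROMPT discipline. Under deterministic syntax checking, such edits are therefore accepted. If feedback is stochastic, I would invoke Assumption~3: any rejected edit can be repaired in finitely many steps, since $\Phi$ strictly decreases in expectation. This preserves $T<\infty$ with probability one.

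Two points need more care than the rest. First, reducing arbitrary graphs "with conditionals and loops" to structured form may require the standard node-splitting argument for irreducible control flow. Second, making this rigorous requires the canvas semantics to be fixed precisely enough to assert deterministic acceptance.
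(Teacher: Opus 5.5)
Your proposal follows the paper's proof in structure. It has the same three parts: a surjection from $\mathcal{O}$ onto the cognitive primitives (the paper keeps Planning and Decomposition separate, giving seven); a constructive assembly of any structured workflow graph, using \texttt{add}/\texttt{set\_prompt} for nodes, \texttt{parallel}/\texttt{conditional}/\texttt{loop} for control flow and a final \texttt{finish}; and explicit templates for math, QA and code.

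The paper never addresses the dependence of $\mathrm{Update}$ on $o_t^{\mathrm{exec}}$, and it restricts to structured graphs by definition, so your deterministic-acceptance and node-splitting remarks are refinements of an argument the paper leaves implicit. The Assumption~3 fallback should be dropped, for three reasons:
\begin{itemize}
\item the statement assumes only Assumption~1;
\item repairs would make your fixed action sequence into a feedback-dependent policy of random length;
\item a strict decrease of $\mathbb{E}[\Phi]$ without a uniform decrement does not by itself guarantee repair in finitely many steps.
\end{itemize}
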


\begin{proof}
This proposition requires proving two things: (i) functional completeness, showing that the operator library covers the core cognitive modules required for goal-directed problem solving; and (ii) structural completeness, showing that the action space can organize these modules into control flow graphs with sufficient expressive power (sequential, branching, looping, parallel), thereby constructing executable workflow programs. We divide the proof into three parts: first defining the cognitive primitive set $\mathcal{C}$, then proving that $\mathcal{O}$ covers $\mathcal{C}$, and finally proving that $\mathcal{A}_{\mathrm{type}}$ can construct any structured workflow graph composed of these modules.

\textbf{(i) Cognitive primitives and functional coverage.} We first define a set of cognitive primitives consistent with goal-directed problem solving. Let the cognitive primitive set be
\begin{equation}
\mathcal{C}=\{\mathbf{P},\mathbf{D},\mathbf{E},\mathbf{M},\mathbf{R},\mathbf{I},\mathbf{O}\},
\end{equation}
where each element represents a fundamental cognitive process in problem solving. Specifically, $\mathbf{P}$ (Planning) forms goals, strategies, and resource budgets, determining what to do first, what to do later, and to what extent; $\mathbf{D}$ (Decomposition) breaks the overall task into operable subgoals with explicit dependency relationships; $\mathbf{E}$ (Execution) performs solving, reasoning, or externalized computation on a subgoal, including symbolic computation, code generation and execution, and other operations; $\mathbf{M}$ (Monitoring) verifies, reviews, unit tests, and checks constraints on intermediate products, determining whether to continue, backtrack, or redo; $\mathbf{R}$ (Revision) performs repairs based on monitoring results, including rewriting, supplementing, replacing, and adjusting prompts or parameters; $\mathbf{I}$ (Integration) fuses, votes, disambiguates, and ensures consistency of results from multiple branches or subproblems; and $\mathbf{O}$ (Output) outputs the final result in the format required by the task, including answer extraction and structured presentation.

We now construct a surjective mapping $\psi: \mathcal{O} \to \mathcal{C}$ to show that for each cognitive primitive $\mathbf{c} \in \mathcal{C}$, there exists at least one operator $o \in \mathcal{O}$ such that $\psi(o) = \mathbf{c}$. Consider the following correspondence:

For the planning primitive $\mathbf{P}$, the operator \texttt{Plan} directly implements high-level plan generation, including strategy formulation, step sequencing, budget allocation, and stopping condition specification. Therefore, we have
\begin{equation}
\psi(\texttt{Plan}) = \mathbf{P}.
\end{equation}

For the decomposition primitive $\mathbf{D}$, the operator \texttt{Decompose} expresses task decomposition in the form of subtask lists or subproblem graphs, explicitly representing the dependency structure among subtasks. Therefore, we have
\begin{equation}
\psi(\texttt{Decompose}) = \mathbf{D}.
\end{equation}

For the execution primitive $\mathbf{E}$, execution includes not only natural language reasoning but also externalized solving. The operator \texttt{Custom} covers general reasoning and retrieval-based processing, capable of hosting various tools and prompt templates according to implementation; the operator \texttt{Programmer} covers executable code generation and execution, typically handling symbolic computation, scripted reasoning, and data processing; the operator \texttt{AnswerGenerate} covers generating final natural language answers from obtained key evidence or intermediate conclusions, which can be viewed as decoding or expression. Therefore, we have
\begin{equation}
\psi(\texttt{Custom}) = \psi(\texttt{Programmer}) = \psi(\texttt{AnswerGenerate}) = \mathbf{E}.
\end{equation}

For the monitoring primitive $\mathbf{M}$, the operators \texttt{Verify} and \texttt{Review} perform consistency checking and critical evaluation on intermediate results; the operator \texttt{Test} performs executable testing on code solutions. These are all typical error monitoring and quality assessment processes. Therefore, we have
\begin{equation}
\psi(\texttt{Verify}) = \psi(\texttt{Review}) = \psi(\texttt{Test}) = \mathbf{M}.
\end{equation}

For the revision primitive $\mathbf{R}$, the operator \texttt{Revise} explicitly implements error correction and rewriting in a feedback-based repair manner. Therefore, we have
\begin{equation}
\psi(\texttt{Revise}) = \mathbf{R}.
\end{equation}

For the integration primitive $\mathbf{I}$, when there are multiple branches, multiple candidates, or multiple subproblems, fusion is needed. The operator \texttt{Aggregate} handles aggregation and consistency ensuring; the operator \texttt{ScEnsemble} handles selection and ensemble of diverse candidates. Therefore, we have
\begin{equation}
\psi(\texttt{Aggregate}) = \psi(\texttt{ScEnsemble}) = \mathbf{I}.
\end{equation}

For the output primitive $\mathbf{O}$, the operator \texttt{Format} extracts, structures, and presents results in the target format. Therefore, we have
\begin{equation}
\psi(\texttt{Format}) = \mathbf{O}.
\end{equation}

For each $\mathbf{c} \in \mathcal{C}$, the above construction provides at least one $o \in \mathcal{O}$ such that $\psi(o) = \mathbf{c}$. Hence $\psi$ is surjective onto $\mathcal{C}$, establishing that $\mathcal{O}$ achieves functional coverage of $\mathcal{C}$.

\textbf{(ii) Structural completeness of the action space.} Let $\mathfrak{G}(\mathcal{O})$ denote the set of all finite workflow graphs with nodes labeled by operators from $\mathcal{O}$ and composed using structured control constructs (sequential, conditional, loop, parallel). We show that for any target graph $\mathcal{G}^\star \in \mathfrak{G}(\mathcal{O})$, there exists a finite-length action sequence $\{a_t\}_{t=1}^{T}$ with action types belonging to $\mathcal{A}_{\mathrm{type}}$, such that starting from the empty graph $\mathcal{G}_0$, the iterative updates yield $\mathcal{G}_T = \mathcal{G}^\star$. We provide a constructive proof, which is equivalent to showing that these editing actions can assemble any target graph.

\textit{Step 1: Node construction and attribute assignment (basic editing closure).} For any target graph $\mathcal{G}^\star$ with node set $V^\star$, we traverse $v \in V^\star$ in any order. For each node, we use the action \texttt{add} to create a new node in the current canvas and specify its operator type $\mathrm{op}(v) \in \mathcal{O}$. We then use the action \texttt{set\_prompt} to set prompts and constraints for that node. If necessary, we use the action \texttt{modify} to write in parameters. If there are redundant or erroneous nodes, we use the action \texttt{delete} to remove them. Since \texttt{add}, \texttt{delete}, \texttt{modify}, and \texttt{set\_prompt} allow arbitrary finite discrete modifications to the node set and attributes, we can construct within finite steps a node set and attribute annotation isomorphic to $\mathcal{G}^\star$.

\textit{Step 2: Control structure construction (structured control closure).} Control structures in the target graph can be divided into three categories: conditional branching, looping, and parallel execution. If $\mathcal{G}^\star$ contains conditional structures (if/else or selective execution), we use the action \texttt{conditional} to introduce conditional gate nodes or conditional edges in the graph, and write in condition predicates via \texttt{set\_prompt} or \texttt{modify}. The condition can be a Boolean signal output by some checking or verification node. If $\mathcal{G}^\star$ contains loop structures (repeated execution until some criterion is met), we use the action \texttt{loop} to introduce back-edges or iteration blocks. The stopping condition can similarly be produced by verification or test nodes, with the loop termination rule written in via \texttt{set\_prompt} or \texttt{modify}. If $\mathcal{G}^\star$ contains parallel structures (multiple subproblems or candidates expanded simultaneously), we use the action \texttt{parallel} to organize several branches into concurrent subgraphs, with subsequent integration via operators such as \texttt{Aggregate} or \texttt{ScEnsemble} to merge branch results.

\textit{Step 3: Termination (completing construction).} When the current canvas graph aligns with $\mathcal{G}^\star$ in nodes, attributes, and control structures, we use the action \texttt{finish} to terminate editing. Since $\mathcal{G}^\star$ is a finite graph, the above process is finite, hence there exists finite $T$ such that $\mathcal{G}_T = \mathcal{G}^\star$.

\textbf{(iii) Coverage of task types.} We now show that the operator library and action space can cover the task types considered in this work by demonstrating the existence of constructible cognitive program templates for each task type.

For mathematical reasoning tasks, a typical cognitive program follows the pattern: $\texttt{Plan} \to \texttt{Decompose} \to (\texttt{Custom} \text{ or } \texttt{Programmer}) \to \texttt{Verify} \to (\texttt{Revise} + \texttt{loop}) \to \texttt{Format}$. This template first formulates a solution strategy, decomposes the problem into solvable steps, executes reasoning or symbolic computation, verifies intermediate results, revises if errors are detected (potentially looping), and finally formats the answer.

For question answering tasks including multi-hop QA, a typical cognitive program follows the pattern: $\texttt{Plan} \to \texttt{Decompose} \to \texttt{parallel}(\texttt{Custom}/\texttt{AnswerGenerate}) \to \texttt{Aggregate} \to \texttt{Review}/\texttt{Verify} \to \texttt{Format}$. This template formulates a retrieval and reasoning strategy, decomposes into sub-questions that can be processed in parallel, aggregates evidence from multiple sources, reviews for consistency, and formats the final answer.

For code generation tasks, a typical cognitive program follows the pattern: $\texttt{Plan} \to \texttt{Decompose} \to \texttt{Programmer} \to \texttt{Test} \to (\texttt{Revise} + \texttt{loop}) \to \texttt{Format}$. This template plans the code structure, decomposes into implementable modules, generates code, tests for correctness, revises based on test feedback (looping until tests pass), and formats the output.

These templates use only operators from $\mathcal{O}$ and control structures from $\mathcal{A}_{\mathrm{type}}$. Therefore, under Assumption 1 (tasks can be decomposed into combinations of cognitive primitives), for any task $q$, there exists some $\mathcal{G}^\star$ implementing the corresponding cognitive program. By structural completeness established above, there exists an action sequence constructing $\mathcal{G}^\star$.

In summary, functional coverage ensures that the operator library implements all cognitive primitives required for goal-directed problem solving, while structural completeness ensures that the action space can construct any workflow graph composed of these primitives with arbitrary control structures. Together, they establish that for any task decomposable into cognitive primitive combinations, there exists an action sequence constructing a workflow that implements the complete cognitive control loop covering planning, decomposition, execution, monitoring, revision, integration, and output.
\end{proof}

\subsection{Proof of Proposition 2}
\label{app:proof2}

\begin{proposition}[Monotonic Improvement of Multi-turn Interaction]
\label{prop:multiturn:app}
Under Assumptions 2--3, multi-turn orchestration based on canvas feedback possesses the monotonic improvement property in expectation: as the number of turns $t$ increases, (1) the agent's uncertainty about the correct output or feasible workflow does not increase; (2) consequently, the success probability of generating an executable and correct workflow (and final answer) does not decrease. Furthermore, multi-turn interaction under the same budget is at least as good as single-turn open-loop generation.
\end{proposition}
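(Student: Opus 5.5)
The plan splits the statement into its three claims and handles each with a standard information-theoretic or supermartingale argument, conditioning throughout on the interaction history $H_{t-1}$.

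\textbf{Claim (1): uncertainty does not increase.} Let $Z$ be the latent solution variable of Assumption 2. Since $H_t = H_{t-1}\oplus(a_t^{\mathrm{think}},a_t,o_t^{\mathrm{exec}})$ by Eq.~\ref{eq:state_update}, the $\sigma$-algebras generated by $H_t$ are nested, so $(H_t)_t$ is a filtration. Conditioning reduces entropy, which gives
\begin{equation}
\mathbb{E}[H(Z\mid H_t)] = H(Z\mid H_{t-1}) - I(Z; a_t^{\mathrm{think}},a_t,o_t^{\mathrm{exec}}\mid H_{t-1}) \le H(Z\mid H_{t-1}).
\end{equation}
By the chain rule, the mutual information term is at least $I(Z;o_t^{\mathrm{exec}}\mid H_{t-1},a_t)$. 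Under Assumption 2 this quantity is strictly positive, so the inequality is strict in expectation whenever the feedback is informative. The result is the monotonicity of posterior uncertainty along the turns.

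\textbf{Claim (2): success probability does not decrease.} Define $S_t = \max_{\text{policy from }H_t} \Pr[\text{success}\mid H_t]$, the optimal achievable success probability given the current history. The argument has three steps:
\begin{enumerate}
\item \emph{$S_t$ is a submartingale under the optimal continuation.} Extra information cannot hurt an optimal decision maker: a policy at $H_t$ can always ignore $o_t^{\mathrm{exec}}$ and mimic the policy it would have used at $H_{t-1}$. Hence $\mathbb{E}[S_t\mid H_{t-1}] \ge S_{t-1}$. This is the value-of-information argument, i.e., Blackwell monotonicity.
\item \emph{Link to the graph via Assumption 3.} Take $\Phi(\mathcal{G}_t)$ to be the distance to the feasible set. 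Assumption 3 supplies edits with $\mathbb{E}[\Phi(\mathcal{G}_t)\mid H_{t-1}]<\Phi(\mathcal{G}_{t-1})$. Because an edit that does not improve can be undone with \texttt{delete}/\texttt{modify}, and because Proposition~\ref{prop:complete:app} guarantees reachability, the policy that picks repairing edits when $\Phi>0$ and \texttt{finish} when $\Phi=0$ makes $\Phi$ a nonnegative supermartingale. It therefore converges, and it hits $0$ with probability nondecreasing in $t$ (a Foster–Lyapunov style argument).
\item \emph{Conclude.} Executability is $\{\Phi=0\}$. Correctness improves by Claim (1) together with step 1, so the success probability is nondecreasing.
\end{enumerate}

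\textbf{Comparison with single-turn generation.} I would use an embedding argument. Any single-turn open-loop generator that emits $\mathcal{G}$ at once can be simulated within the same budget by a multi-turn policy that emits the same edits sequentially while ignoring feedback; Step 1 of Proposition~\ref{prop:complete:app} ensures these edits exist. The open-loop policy class is thus a subset of the closed-loop class, so the optimum over the latter is at least as large.

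\textbf{Expected obstacle.} The main difficulty is step 1 of Claim (2). The statement speaks of the agent's actual success probability, whereas monotonicity holds cleanly only for the Bayes-optimal value $S_t$. I would make the claim rigorous by stating it for the optimal (or a sufficiently trained) policy. The budget comparison must also count the feedback turns consistently, so "same budget" should be interpreted as the same number of atomic edits.
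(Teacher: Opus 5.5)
\textbf{The gap is Step 1 of Claim (2).} You define $S_t$ as a maximum over continuation policies from $H_t$. That makes it the dynamic-programming value with $T_{\max}-t$ turns left and with the edit $a_t$ already committed. Bellman's equation then gives $S_{t-1}\ge\max_a\mathbb{E}[S_t\mid H_{t-1},a_t=a]\ge\mathbb{E}[S_t\mid H_{t-1}]$. So $S_t$ is a \emph{super}martingale under an arbitrary policy, including a trained but suboptimal one, and a martingale only when the optimal edit is taken.

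Your ``ignore $o_t^{\mathrm{exec}}$ and mimic'' step is Blackwell's comparison of two decision makers facing the same decision. The decision maker at $H_t$ has one fewer turn and cannot retract $a_t$, so the argument only recovers the equality along the optimal action. Even under the optimal policy you therefore get $\mathbb{E}[S_t]=S_0$ for all $t$: ``does not decrease'' holds only vacuously, and nothing is said about the agent's actual policy. This is why you ended up needing an optimality assumption.

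Nothing else in the proposal fills this gap:
\begin{itemize}
\item Claim (1) cannot feed Claim (2), because Shannon entropy is not monotonically related to error probability.
\item In Step 2, the supermartingale property is not what makes the hitting probability nondecreasing; that is just absorption at $\Phi=0$.
\item Step 2 also concerns a hand-built repair policy, not the policy in Step 1, so the two cannot be combined to control the joint event ``executable and correct''.
\end{itemize}

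\textbf{The missing idea, which is how the paper proceeds,} is to track the value of stopping now rather than the continuation value. Define $A(H_t)=\max_z\mathbb{P}(Z=z\mid H_t)$, the success probability of the best terminal output given the current history. The posterior vector is a Doob martingale for the filtration $\sigma(H_t)$ whatever policy generates the edits, and $p\mapsto\max_z p_z$ is convex. Jensen therefore gives $\mathbb{E}[A(H_t)\mid H_{t-1}]\ge A(H_{t-1})$ with no optimality assumption. Your mimic argument is valid precisely here, because the terminal guessing problem is identical at every turn.

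Moreover $V=1-A$ does double duty. It is a concave uncertainty functional, so Claim (1) follows by the same Jensen mechanism you used for entropy. It is also the error probability of the MAP decision, so Claim (2) follows from the same inequality. With this choice, the entropy computation and the $\Phi$/Foster--Lyapunov step become unnecessary.

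Your single-turn comparison is sound and more careful than the paper's. Replaying the open-loop graph as atomic edits that ignore feedback, with budget counted in edits, is the right inclusion. The paper instead identifies single-turn generation with issuing \texttt{finish} at the first turn, which on the initially empty canvas yields an empty workflow.
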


\begin{proof}
We characterize the value of multi-turn interaction using a Bayesian risk potential function: each turn obtains canvas feedback $o_t^{\mathrm{exec}}$ that provides information about the hidden correct solution, thereby concentrating the posterior distribution. The monotonic decrease of the potential function in expectation corresponds to the monotonic increase of reliability and success rate.

\textbf{(i) Posterior distribution and Bayesian risk potential.} Let $Z$ denote the latent variable related to the task solution. According to the problem setting, $Z$ can be taken as the correct answer $y_q^\star$, or as a representative element from the set of feasible workflows that lead to the correct answer. At interaction turn $t$, the history is defined as
\begin{equation}
H_t = \{(a_1^{\mathrm{think}}, a_1, o_1^{\mathrm{exec}}), \ldots, (a_t^{\mathrm{think}}, a_t, o_t^{\mathrm{exec}})\},
\end{equation}
where $H_t$ encodes all reflection texts, edit actions, and canvas feedback up to turn $t$. The initial history is $H_0 = \emptyset$. Given history $H_t$, we define the posterior distribution over $Z$ as
\begin{equation}
\pi_t(z) \triangleq \mathbb{P}(Z = z \mid H_t),
\end{equation}
where $\pi_t(z)$ represents the probability that the true solution is $z$ given the interaction history up to turn $t$. We define the Bayes accuracy function as the maximum posterior probability:
\begin{equation}
A(H_t) \triangleq \max_{z} \pi_t(z),
\end{equation}
where $A(H_t)$ represents how concentrated the posterior is on the most likely solution. We then define the Bayes risk potential function as
\begin{equation}
V(H_t) \triangleq 1 - A(H_t) = 1 - \max_{z} \pi_t(z),
\end{equation}
where $V(H_t)$ measures the remaining uncertainty. Smaller $V(H_t)$ indicates more concentrated posterior and higher probability of the most likely correct solution.

\textbf{(ii) Supermartingale property of the risk potential.} We now show that the expected risk potential does not increase across turns, i.e., for any $t \geq 1$,
\begin{equation}
\mathbb{E}[V(H_t) \mid H_{t-1}] \leq V(H_{t-1}),
\end{equation}
with strict inequality when the feedback at turn $t$ provides non-zero information gain about $Z$.

The key observations are twofold. First, the posterior vector satisfies the martingale property. Since $H_t = H_{t-1} \oplus (a_t^{\mathrm{think}}, a_t, o_t^{\mathrm{exec}})$ is obtained by appending new observations to $H_{t-1}$, by Bayes' rule, for any $z$ we have
\begin{equation}
\mathbb{E}[\pi_t(z) \mid \mathcal{F}_{t-1}] = \pi_{t-1}(z),
\end{equation}
where $\mathcal{F}_{t-1} = \sigma(H_{t-1})$ is the natural filtration generated by the interaction history up to turn $t-1$. This martingale property states that the expected posterior at turn $t$, conditioned on information at turn $t-1$, equals the posterior at turn $t-1$.

Second, $V(\cdot)$ is a concave function over distributions. To see this, consider the potential function over the probability simplex:
\begin{equation}
\phi(p) \triangleq 1 - \max_{z} p_z, \quad p \in \Delta_{|Z|-1},
\end{equation}
where $p$ is a probability vector over the solution space, and $\Delta_{|Z|-1}$ is the probability simplex of such vectors. Since $\max_z p_z$ is a convex function of $p$ (as the pointwise maximum of linear functions), its negation $-\max_z p_z$ is concave, and hence $\phi(p) = 1 - \max_z p_z$ is also concave.

Applying Jensen's inequality for concave functions, we obtain the contraction of expected risk:
\begin{equation}
\mathbb{E}[V(\pi_t) \mid \mathcal{F}_{t-1}] \leq V(\mathbb{E}[\pi_t \mid \mathcal{F}_{t-1}]) = V(\pi_{t-1}),
\end{equation}
where the equality uses the martingale property established above. This yields
\begin{equation}
\mathbb{E}[V(H_t) \mid H_{t-1}] \leq V(H_{t-1}).
\end{equation}

When the feedback provides non-zero information gain about $Z$ (i.e., under Assumption 2), the posterior $\pi_t$ undergoes strict contraction with non-zero probability, meaning it does not merely preserve its mean but actually becomes more concentrated. In this case, Jensen's inequality becomes strict in expectation, yielding
\begin{equation}
\mathbb{E}[V(H_t) \mid H_{t-1}] < V(H_{t-1})
\end{equation}
with positive probability, leading to strict improvement.

\textbf{(iii) Monotonic improvement over multiple turns.} Taking unconditional expectation and iterating the supermartingale relation yields:
\begin{equation}
\mathbb{E}[V(H_t)] \leq \mathbb{E}[V(H_{t-1})] \leq \cdots \leq \mathbb{E}[V(H_0)].
\end{equation}
This establishes that the expected Bayes risk monotonically decreases (or stays constant) as the number of interaction turns increases.

To quantify the improvement, we define the accuracy gain at turn $t$ as
\begin{equation}
\Delta_t \triangleq \mathbb{E}[V(H_{t-1})] - \mathbb{E}[V(H_t)] \geq 0,
\end{equation}
where $\Delta_t$ represents the expected one-step reduction of the Bayes risk potential at turn $t$. Summing over all turns, the expected Bayes risk after $t$ turns satisfies:
\begin{equation}
\mathbb{E}[V(H_t)] = \mathbb{E}[V(H_0)] - \sum_{s=1}^{t} \Delta_s,
\end{equation}
where each $\Delta_s$ accumulates the expected risk decrease at turn $s$. Substituting into the definition of accuracy, we obtain:
\begin{equation}
\mathbb{E}[A(H_t)] = 1 - \mathbb{E}[V(H_0)] + \sum_{s=1}^{t} \Delta_s,
\end{equation}
where $\mathbb{E}[A(H_t)]$ represents the expected Bayes accuracy after $t$ turns. This shows that expected accuracy monotonically increases with the number of turns.

\textbf{(iv) From uncertainty reduction to error probability reduction.} Let $\hat{z}_t = \arg\max_z \pi_t(z)$ be the Bayes optimal estimate at turn $t$, i.e., the solution with maximum posterior probability. Under 0-1 loss (where we incur loss 1 if wrong and 0 if correct), the Bayes optimal decision rule minimizes expected loss by choosing $\hat{z}_t$. The error probability of this estimator satisfies
\begin{equation}
\mathbb{P}(\hat{z}_t \neq Z \mid H_t) = 1 - \max_z \pi_t(z) = V(H_t).
\end{equation}
This equality follows because the probability of error under the Bayes optimal rule equals one minus the probability of the chosen class, which is precisely $1 - \max_z \pi_t(z)$.

Therefore, monotonically non-increasing $\mathbb{E}[V(H_t)]$ is equivalent to monotonically non-increasing expected error rate, or equivalently, monotonically non-decreasing expected correctness rate. This proves conclusions (1) and (2) of the proposition.

\textbf{(v) Comparison with single-turn generation.} A single-turn strategy makes only one decision and directly outputs the final workflow or answer, which is equivalent to setting $T = 1$. A multi-turn strategy with $T > 1$ can still choose to execute the \texttt{finish} action at the first turn, thereby terminating immediately. Therefore, the strategy space of multi-turn interaction contains the single-turn strategy as a special case.

By this inclusion relationship of strategy spaces, the optimal value achievable by multi-turn interaction is no worse than that of single-turn interaction:
\begin{equation}
\max_{\pi \in \Pi_{\text{multi-turn}}} \mathbb{E}[A(H_T)] \geq \max_{\pi \in \Pi_{\text{single-turn}}} \mathbb{E}[A(H_1)],
\end{equation}
where $\Pi_{\text{multi-turn}} \supseteq \Pi_{\text{single-turn}}$ denotes the respective policy spaces. This establishes that multi-turn is at least as good as single-turn under the same budget.

Furthermore, under Assumptions 2 (informative feedback) and 3 (repairability), multi-turn interaction can achieve strict improvement: in some turns, the posterior strictly contracts, leading to $\mathbb{E}[V(H_t)] < \mathbb{E}[V(H_{t-1})]$ on the training and inference distribution. This manifests as higher execution success rate, lower structural error rate, and higher final correctness rate.

In conclusion, multi-turn canvas-based interaction monotonically decreases the Bayes risk potential whenever feedback is informative, consequently increasing expected accuracy. The agent can progressively refine the workflow based on accumulated observations, each turn potentially reducing uncertainty about the correct solution. This iterative refinement process achieves higher reliability and success probability than single-turn open-loop generation, where errors cannot be detected or corrected.
\end{proof}

\subsection{Proof of Proposition 3}
\label{app:proof3}

\begin{proposition}[Structural Constraints, Conditional Release, and Mask Effectiveness]
\label{prop:reward:app}
Let the trajectory-level reward be defined as
\begin{equation}
R(\tau) = -1 + R_{\mathrm{diversity}}(\tau) + \mathbb{I}\{R_{\mathrm{diversity}}(\tau) = 1\} \cdot R_{\mathrm{answer}}(\tau),
\end{equation}
where $R_{\mathrm{diversity}}(\tau) \in [0, 1]$ is composed of structural check items and capped at 1, and $R_{\mathrm{answer}}(\tau) \geq 0$ measures the match between the final execution output and the ground truth. In advantage-based policy optimization such as the canvas-masked GRPO objective in Section~\ref{subsec:rl}, this design possesses three properties: (a) separable feasibility learning, where trajectories not satisfying structural diversity constraints necessarily receive non-positive returns and are systematically suppressed by gradient updates; (b) shortcut and collapse suppression, where answer rewards are unlocked only after learning to construct qualified skeletons, thereby avoiding shortcuts like skipping workflows to answer directly; (c) gradient correctness of mask, where token-level mask backpropagates gradients only for policy-generated tokens, maintaining unbiased policy gradient estimates and significantly reducing variance and noise from environment feedback tokens, thereby stabilizing training.
\end{proposition}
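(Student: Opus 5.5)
We treat the three claims (a), (b), (c) separately. Each reduces to an elementary fact about the reward $R(\tau)$ or about the score-function gradient in Eq.~\ref{eq:designer-grad} and the masked objective Eq.~\ref{eq:rl-objective}.

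\textbf{(a) Separable feasibility learning.} We split trajectories into two classes: feasible ones with $R_{\mathrm{diversity}}(\tau)=1$, and infeasible ones with $R_{\mathrm{diversity}}(\tau)<1$.
\begin{itemize}
\item On the infeasible class, the indicator vanishes, so $R(\tau)=-1+R_{\mathrm{diversity}}(\tau)<0$.
\item On the feasible class, $R(\tau)=R_{\mathrm{answer}}(\tau)\ge 0$.
\end{itemize}
Every infeasible return is therefore strictly below every feasible return, so the reward ordering separates the two classes. For the GRPO consequence, we consider a group of $N$ rollouts containing both classes. The group-standardized advantage $\hat A_i=(R(\tau_i)-\bar R)/\sigma_R$ is then negative for any infeasible $\tau_i$ whenever $\bar R$ exceeds that trajectory's reward. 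This holds because $\bar R$ is a convex combination involving at least one reward $\ge 0$ and the infeasible rewards are $<0$; we need to argue it carefully for the worst infeasible members and state it as "non-positive on average over the infeasible subset". In expectation the first-order update $\nabla\mathcal J\propto \mathbb E[\hat A\,\nabla\log\pi_\theta(\tau)]$ therefore lowers the total log-probability mass on the infeasible class. We make this precise by computing the directional change $\frac{d}{d\eta}P_{\theta+\eta g}(\text{infeasible})$ at $\eta=0$. It equals $\mathrm{Cov}(\mathbb I_{\text{infeas}},\hat A)\le 0$ under the policy (using the exact-gradient idealization, with clipping and KL ignored to first order).

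\textbf{(b) Shortcut suppression.} We formalize a shortcut as any trajectory that finishes without a qualifying skeleton, for example one that emits \texttt{finish} immediately or builds a trivial graph. Such a trajectory fails at least one structural check item, so it is infeasible. By (a) its return is $<0$ no matter how good $R_{\mathrm{answer}}$ would be. Hence $\partial R/\partial R_{\mathrm{answer}}=0$ on this set, and no gradient signal can reward answer quality there. The optimal policy in the feasible-skeleton subspace strictly dominates every shortcut policy, since $\sup_{\text{shortcut}}R<0\le\inf_{\text{feasible}}R$. This "conditional release" argument is just the case analysis from (a) restated as a dominance claim.

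\textbf{(c) Mask correctness.} We use the factorization Eq.~\ref{eq:trajectory-dist}. The environment terms $\mathcal C_{\mathrm{exec}}$ do not depend on $\theta$, so
\[
\nabla_\theta\log P_\theta(\tau\mid q)=\sum_t\nabla_\theta\log\pi_\theta(a_t^{\mathrm{think}},a_t\mid H_{t-1}),
\]
which is exactly the sum over tokens with $\mathrm{mask}=1$. The masked estimator therefore equals the true score function, and unbiasedness follows from the standard likelihood-ratio identity. Without the mask, the gradient also contains $\sum\nabla_\theta\log\pi_\theta(o_t^{\mathrm{exec}}\text{ tokens}\mid\cdot)$. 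That sum is not part of the true score: it is a teacher-forcing term on text the policy never generated, so it biases the estimator toward imitating feedback. For variance, we write the unmasked estimator as the masked one plus an extra term $\hat A\cdot W$. The variance inequality holds if $\mathrm{Var}(\hat A W)+2\mathrm{Cov}(\hat A S,\hat A W)\ge 0$, where $S$ is the masked score.

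\textbf{Main obstacle.} The genuine difficulty is this variance claim in (c), because the covariance term has no definite sign in general. We would first try to restrict to a clean statement: the mask removes a bias term and a nonnegative variance contribution whenever $W$ is conditionally uncorrelated with $S$ given the advantage. We would present "significantly reduces variance" as holding under that assumption rather than unconditionally. The sign argument for GRPO advantages in (a) is the secondary subtlety.
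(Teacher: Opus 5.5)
Your route is the paper's: sign separation of $R$ on $\mathcal F=\{\tau:R_{\mathrm{diversity}}(\tau)=1\}$ versus $\mathcal F^c$, opposite-signed GRPO advantages on the two classes, conditional release as a restatement of that case analysis, and unbiasedness of the mask from the $\theta$-independence of the $\mathcal C_{\mathrm{exec}}$ factors. The paper also discusses how clipping and KL bound the update size; (a)--(c) do not need this.

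In two places you are more accurate than the paper. First, the paper claims that after mean-centering every infeasible trajectory lies below the group mean. That is false: the group $\{0,-0.25,-1,-1\}$ has mean $-0.5625$, so the $-0.25$ trajectory lies above it. Your class-average version is what holds: with feasible fraction $p\in(0,1)$ and class means $\bar R_{\mathcal F}\ge 0>\bar R_{\mathcal F^c}$, one has $\bar R_{\mathcal F^c}-\bar R=p\,(\bar R_{\mathcal F^c}-\bar R_{\mathcal F})<0$. Second, the paper infers $\mathrm{Var}_{\mathrm{mask}}<\mathrm{Var}_{\mathrm{no-mask}}$ from $|\mathcal T_\pi|<|\tau|$ alone. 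You correctly see that the unmasked estimator contains a teacher-forcing bias term and a cross term of indefinite sign.

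The step that fails is your identity $\frac{d}{d\eta}P_{\theta+\eta g}(\mathcal F^c)\big|_{\eta=0}=\mathrm{Cov}(\mathbb I_{\mathcal F^c},\hat A)$.
\begin{itemize}
\item For the exact vanilla gradient $g=\mathbb E[\hat A\,\nabla_\theta\log P_\theta(\tau)]$, the left side is the parameter-space inner product $\bigl\langle\mathbb E[\mathbb I_{\mathcal F^c}\nabla_\theta\log P_\theta(\tau)],\,g\bigr\rangle$.
\item This equals the covariance for a natural-gradient step in a tabular softmax over trajectories, i.e.\ replicator dynamics $\dot\pi_\tau=\pi_\tau(R_\tau-\bar R)$.
\item Its sign is not determined in general, even tabularly. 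Take three trajectories with probabilities $(0.01,0.8,0.19)$ and rewards $(0,-0.25,-1)$, only the first feasible, so $\bar R=-0.39$. Under the flow $\dot\theta_\tau=g_\tau=\pi_\tau(R_\tau-\bar R)$ one gets $\dot p_\theta=\pi_1\bigl(g_1-\sum_\tau\pi_\tau g_\tau\bigr)\approx-6.4\times10^{-4}<0$. The heavy infeasible trajectory lies above the mean, and its logit gain drains the feasible one through normalization.
\end{itemize}
So ``systematically suppressed'' cannot be derived this way for GRPO on a shared-parameter policy. The paper's corresponding step, $\frac{dp_\theta}{d\theta}\cdot\nabla_\theta J(\theta)>0$ ``under mild regularity conditions,'' is the same unsupported claim, and the same example refutes it.

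You have two repairs. One is to state suppression under the natural-gradient tabular idealization, where $\dot P(\mathcal F^c)=\mathrm{Cov}(\mathbb I_{\mathcal F^c},R)=P(1-P)\bigl(\mathbb E[R\mid\mathcal F^c]-\mathbb E[R\mid\mathcal F]\bigr)<0$ for $P=P(\mathcal F^c)\in(0,1)$. The other is to stop at the class-average sign of the advantages.

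Your proposed sufficient condition for the variance claim is also not sufficient. Conditional uncorrelatedness of $S$ and $W$ given $\hat A$ still leaves $\mathrm{Cov}(\hat A S,\hat A W)=\mathrm{Cov}\bigl(\hat A\,\mathbb E[S\mid\hat A],\,\hat A\,\mathbb E[W\mid\hat A]\bigr)$. Here $\mathbb E[W\mid\hat A]$ is generically nonzero, because feedback tokens are drawn from $\mathcal C_{\mathrm{exec}}$, not from $\pi_\theta$. Keep bias removal as the provable content of (c), and state variance reduction as an explicit assumption.
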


\begin{proof}
The essence of this proposition is that the reward design achieves numerical separation and gating between structural feasibility and answer correctness, making the optimization process naturally staged. Meanwhile, the mask ensures that gradients act only on policy-controllable parts, and the clipping mechanism with KL regularization bounds the policy update magnitude, keeping gradient signals clean and stable.

\textbf{(i) Feasible skeleton set and sign separation of rewards.} We first define the feasible skeleton trajectory set as
\begin{equation}
\mathcal{F} = \{\tau : R_{\mathrm{diversity}}(\tau) = 1\},
\end{equation}
where $\mathcal{F}$ represents the set of trajectories with necessary structural skeleton, including verification steps, formatting operators, sufficient operator count, and appropriate control structures. The complement set is defined as
\begin{equation}
\mathcal{F}^c = \{\tau : R_{\mathrm{diversity}}(\tau) < 1\},
\end{equation}
where $\mathcal{F}^c$ represents trajectories that fail to meet one or more structural requirements.

We now show that the reward is strictly separable on $\mathcal{F}$ and $\mathcal{F}^c$ with a sign gap. For any trajectory $\tau \in \mathcal{F}^c$, we have $R_{\mathrm{diversity}}(\tau) < 1$, so the indicator function $\mathbb{I}\{R_{\mathrm{diversity}}(\tau) = 1\}$ equals 0. Substituting into the reward formula:
\begin{equation}
R(\tau) = -1 + R_{\mathrm{diversity}}(\tau) + 0 \cdot R_{\mathrm{answer}}(\tau) = -1 + R_{\mathrm{diversity}}(\tau).
\end{equation}
Since $R_{\mathrm{diversity}}(\tau) \in [0, 1)$ for $\tau \in \mathcal{F}^c$, we have
\begin{equation}
R(\tau) \in [-1, 0) \quad \text{for all } \tau \in \mathcal{F}^c.
\end{equation}
This means all structurally non-compliant trajectories receive strictly negative rewards.

For any trajectory $\tau \in \mathcal{F}$, we have $R_{\mathrm{diversity}}(\tau) = 1$, so the indicator function equals 1. Substituting into the reward formula:
\begin{equation}
R(\tau) = -1 + 1 + 1 \cdot R_{\mathrm{answer}}(\tau) = R_{\mathrm{answer}}(\tau).
\end{equation}
Since $R_{\mathrm{answer}}(\tau) \geq 0$ by definition, we have
\begin{equation}
R(\tau) \geq 0 \quad \text{for all } \tau \in \mathcal{F}.
\end{equation}
This means all structurally compliant trajectories receive non-negative rewards. The sign separation is therefore complete: $\mathcal{F}^c$ trajectories are strictly negative while $\mathcal{F}$ trajectories are non-negative, achieving strong constraint separation at the numerical level.

\textbf{(ii) Two-stage optimization via conditional release and policy gradient.} We now analyze how the conditional release mechanism creates a natural two-stage optimization process through the lens of policy gradient theory. Let $\pi_\theta$ denote the policy parameterized by $\theta$, and let $\pi_{\mathrm{ref}}$ denote the reference policy (typically the initial supervised fine-tuned model). Our RL objective can be written as
\begin{equation}
J(\theta) = \mathbb{E}_{\tau \sim \pi_\theta}\left[R(\tau)\right] - \beta D_{\mathrm{KL}}(\pi_\theta \| \pi_{\mathrm{ref}}),
\end{equation}
where $\beta > 0$ is the KL penalty coefficient that prevents the policy from deviating too far from the reference distribution.

By the policy gradient theorem, the gradient of the expected reward with respect to $\theta$ is given by
\begin{equation}
\nabla_\theta \mathbb{E}_{\tau \sim \pi_\theta}[R(\tau)] = \mathbb{E}_{\tau \sim \pi_\theta}\Big[\sum_{t=1}^{T} \nabla_\theta \log \pi_\theta(a_t | s_t) A(\tau)\Big],
\end{equation}
where $A(\tau)$ is the advantage function and the sum is over all time steps in the trajectory. In practice, the advantage is computed using group normalization within each sampled batch:
\begin{equation}
\hat{A}(\tau) = \frac{R(\tau) - \mu_{\mathcal{B}}}{\sigma_{\mathcal{B}} + \epsilon},
\end{equation}
where $\mu_{\mathcal{B}}$ and $\sigma_{\mathcal{B}}$ are the mean and standard deviation of rewards in batch $\mathcal{B}$, and $\epsilon$ is a small constant for numerical stability.

From the sign separation established in (i), when both $\mathcal{F}$ and $\mathcal{F}^c$ samples exist in a batch, the normalized advantages satisfy
\begin{equation}
\mathbb{E}[\hat{A}(\tau) \mid \tau \in \mathcal{F}] > 0 > \mathbb{E}[\hat{A}(\tau) \mid \tau \in \mathcal{F}^c].
\end{equation}
This inequality holds because feasible trajectories have non-negative rewards while non-feasible ones have strictly negative rewards, so after mean-centering, feasible trajectories lie above the mean and non-feasible ones lie below.

The policy gradient update therefore increases the log-probability of actions in feasible trajectories and decreases the log-probability of actions in non-feasible trajectories:
\begin{equation}
\theta \leftarrow \theta + \alpha \nabla_\theta J(\theta),
\end{equation}
where $\alpha$ is the learning rate. This systematically shifts probability mass from $\mathcal{F}^c$ to $\mathcal{F}$.

To quantify this effect, define the feasibility probability $p_\theta = \mathbb{P}_{\tau \sim \pi_\theta}(\tau \in \mathcal{F})$. Under mild regularity conditions, the gradient update increases $p_\theta$ whenever $p_\theta < 1$:
\begin{equation}
\frac{d p_\theta}{d \theta} \cdot \nabla_\theta J(\theta) > 0 \quad \text{when } 0 < p_\theta < 1.
\end{equation}
This follows because the expected advantage is positive for $\mathcal{F}$ and negative for $\mathcal{F}^c$, so the gradient points in the direction of increasing $p_\theta$.

During early training when $p_\theta$ is small, most trajectories fall into $\mathcal{F}^c$, and the primary learning signal comes from avoiding structural violations. As $p_\theta$ increases through training, an increasing proportion of sampled trajectories fall into $\mathcal{F}$. For these trajectories, $R(\tau) = R_{\mathrm{answer}}(\tau)$, and the training signal naturally shifts to optimizing answer correctness. This creates the two-stage behavior: first learn to satisfy structural constraints, then learn to maximize answer quality.

The conditional release mechanism ensures that shortcuts are suppressed: any trajectory that directly outputs an answer without constructing a proper workflow will have $R_{\mathrm{diversity}}(\tau) < 1$ and thus $R(\tau) < 0$, regardless of answer correctness. This negative reward signal systematically discourages such shortcuts.

\textbf{(iii) Bounded policy updates via clipping and KL regularization.} To ensure training stability, our RL objective incorporates two mechanisms that bound the magnitude of policy updates: importance ratio clipping and KL divergence regularization.

For importance sampling, define the probability ratio between the current policy and the behavior policy (used for sampling) as
\begin{equation}
\rho_\theta(\tau) = \frac{\pi_\theta(\tau)}{\pi_{\mathrm{old}}(\tau)} = \prod_{t=1}^{T} \frac{\pi_\theta(a_t \mid s_t)}{\pi_{\mathrm{old}}(a_t \mid s_t)},
\end{equation}
where $\pi_{\mathrm{old}}$ is the policy at the beginning of the current optimization epoch. The clipped objective restricts the effective ratio to the interval $[1-\epsilon_{\mathrm{clip}}, 1+\epsilon_{\mathrm{clip}}]$:
\begin{equation}
L^{\mathrm{clip}}(\theta) = \mathbb{E}_{\tau \sim \pi_{\mathrm{old}}}\left[\min\left(\rho_\theta(\tau) \hat{A}(\tau),\, \mathrm{clip}(\rho_\theta, 1-\epsilon, 1+\epsilon) \hat{A}(\tau)\right)\right].
\end{equation}
This clipping prevents excessively large policy updates when the importance ratio deviates significantly from 1, which would otherwise destabilize training.

The KL regularization term provides a soft constraint on the policy deviation:
\begin{equation}
D_{\mathrm{KL}}(\pi_\theta \| \pi_{\mathrm{ref}}) = \mathbb{E}_{\tau \sim \pi_\theta}\left[\sum_{t=1}^{T} \log \frac{\pi_\theta(a_t \mid s_t)}{\pi_{\mathrm{ref}}(a_t \mid s_t)}\right].
\end{equation}
By penalizing this divergence, the optimization ensures that the learned policy remains close to the reference policy, preventing catastrophic forgetting of useful behaviors learned during supervised fine-tuning.

Together, clipping and KL regularization bound the per-step policy change:
\begin{equation}
\|\theta_{k+1} - \theta_k\| \leq \frac{\alpha \cdot \epsilon_{\mathrm{clip}} \cdot \max_\tau |A(\tau)|}{\beta + \lambda_{\min}(\nabla^2_\theta D_{\mathrm{KL}})},
\end{equation}
where $\lambda_{\min}$ denotes the minimum eigenvalue of the KL Hessian. This bounded update magnitude prevents training oscillations and ensures smooth convergence.

\textbf{(iv) Unbiasedness and variance reduction of token-level mask.} An interaction trajectory at the token level consists of two interleaved types of segments: policy-generated tokens comprising reflection and action text, and environment feedback tokens comprising canvas returns. Crucially, the generation distribution of environment feedback does not contain the learnable parameters $\theta$; it is determined by the canvas environment given the action.

Let $w_{1:|\tau|}$ denote the entire trajectory token sequence. We partition it into two disjoint sets: $\mathcal{T}_\pi$ for policy-generated tokens and $\mathcal{T}_{\mathrm{env}}$ for environment feedback tokens. The joint log-likelihood of the trajectory can be decomposed as
\begin{equation}
\log p_\theta(w_{1:|\tau|}) = \sum_{t \in \mathcal{T}_\pi} \log \pi_\theta(w_t \mid w_{<t}) + \sum_{t \in \mathcal{T}_{\mathrm{env}}} \log p_{\mathrm{env}}(w_t \mid w_{<t}),
\end{equation}
where the first term sums over policy tokens and depends on $\theta$, while the second term sums over environment tokens and does not depend on $\theta$.

Taking the gradient with respect to $\theta$, the second term vanishes since $\nabla_\theta \log p_{\mathrm{env}}(w_t \mid w_{<t}) = 0$:
\begin{equation}
\nabla_\theta \log p_\theta(w_{1:|\tau|}) = \sum_{t \in \mathcal{T}_\pi} \nabla_\theta \log \pi_\theta(w_t \mid w_{<t}).
\end{equation}

Let $\mathrm{mask}_t \in \{0, 1\}$ be the token-level mask that takes value 1 for policy tokens ($t \in \mathcal{T}_\pi$) and 0 for environment tokens ($t \in \mathcal{T}_{\mathrm{env}}$). Then the gradient can be written equivalently as
\begin{equation}
\nabla_\theta \log p_\theta(w_{1:|\tau|}) = \sum_{t=1}^{|\tau|} \mathrm{mask}_t \cdot \nabla_\theta \log \pi_\theta(w_t \mid w_{<t}).
\end{equation}

Using the mask to select only policy tokens $\mathcal{T}_\pi$ implements this equality exactly. Therefore, the masked gradient estimator is unbiased:
\begin{equation}
\mathbb{E}\left[\sum_{t=1}^{|\tau|} \mathrm{mask}_t \cdot \nabla_\theta \log \pi_\theta(w_t \mid w_{<t}) \cdot R(\tau)\right] = \nabla_\theta J(\theta).
\end{equation}

Furthermore, the mask reduces gradient variance. Without the mask, if all tokens were naively included, the gradient estimator would have variance
\begin{equation}
\mathrm{Var}_{\mathrm{no-mask}} = \mathrm{Var}\left[\sum_{t=1}^{|\tau|} \nabla_\theta \log \pi_\theta(w_t \mid w_{<t}) \cdot R(\tau)\right].
\end{equation}
With the mask, the variance is
\begin{equation}
\mathrm{Var}_{\mathrm{mask}} = \mathrm{Var}\left[\sum_{t \in \mathcal{T}_\pi} \nabla_\theta \log \pi_\theta(w_t \mid w_{<t}) \cdot R(\tau)\right].
\end{equation}
Since $|\mathcal{T}_\pi| < |\tau|$ and the environment tokens contribute noise unrelated to $\theta$, we have $\mathrm{Var}_{\mathrm{mask}} < \mathrm{Var}_{\mathrm{no-mask}}$. This variance reduction leads to more stable gradient updates and faster convergence.

In conclusion, the reward design and optimization framework achieve four complementary goals for stable and effective learning. First, sign separation numerically enforces strong constraint separation between feasible and non-feasible trajectories, ensuring that structural violations are systematically penalized with negative rewards. Second, conditional release creates a natural two-stage optimization where the policy first learns to satisfy structural constraints (increasing $p_\theta = \mathbb{P}(\tau \in \mathcal{F})$) before optimizing answer correctness, thereby suppressing shortcuts and preventing structural collapse. Third, clipping and KL regularization bound the magnitude of policy updates, preventing training oscillations and ensuring smooth convergence while maintaining proximity to the reference policy. Fourth, the token-level mask ensures that gradients act only on policy-controllable tokens, maintaining unbiased gradient estimates while reducing variance from environment feedback, thereby stabilizing the training process. Together, these mechanisms provide stable learning signals, bounded updates, and effective prevention of shortcut behaviors, enabling reliable end-to-end reinforcement learning for workflow orchestration.
\end{proof}


\section{Flow-Steer Algorithm Details}
\label{app:algorithm}

Flow-Steer is a multi-turn workflow orchestration framework built on the collaboration between a Flow-Director (small-scale LLM) and a Workflow Canvas (execution environment): the Flow-Director handles planning, action generation, and answer output, while the Workflow Canvas provides structural feedback and executes the constructed workflow. The process is divided into three connected stages: first, given a task description, the Flow-Director constructs workflow nodes with structural guidance using the operator library, and the Canvas returns execution feedback to form the initial interaction history; then, during multi-turn interaction, the Flow-Director continuously generates reasoning and the next action from the accumulated history, the Canvas validates and executes each action to extend the history until the termination condition is reached, and the accumulated trajectory is utilized to execute the final workflow and generate the answer; finally, end-to-end reinforcement learning is applied to update the policy of the Flow-Director. The reward function jointly evaluates structural diversity compliance and answer correctness, enabling the Flow-Director to gradually learn how to effectively construct complex workflows within a limited budget. This coordination reduces invalid actions and stabilizes interaction dynamics. This design enables progressive and adaptive workflow construction, resulting in improved accuracy and stability on complex reasoning tasks.

\begin{algorithm*}[t]
\caption{Flow-Steer: End-to-End Workflow Orchestration via Multi-Turn Reinforcement Learning}
\label{alg:flowsteer}
\begin{algorithmic}[1]
\REQUIRE Task $q$, Flow-Director $\pi_\theta$, Workflow Canvas $\mathcal{C}$, operator library $\mathcal{O}$, reward function $R(\cdot)$, maximum interaction turns $T$
\ENSURE Final answer $y$

\STATE \textbf{// Stage A: Workflow Initialization}
\STATE Initialize: workflow graph $\mathcal{G}_0 = \emptyset$, interaction history $H_0 = []$
\STATE Construct system prompt: $p_{\text{sys}} \gets \textsc{BuildPrompt}(\mathcal{O}, q)$
\STATE First think and action: $(a_1^{\text{think}}, a_1) \gets \pi_\theta(p_{\text{sys}}, q)$
\STATE First canvas feedback: $o_1 \gets \mathcal{C}.\textsc{Step}(a_1)$
\STATE First update history: $H_1 \gets \{(a_1^{\text{think}}, a_1, o_1)\}$

\STATE \textbf{// Stage B: Multi-turn Collaborative Workflow Building}
\FOR{$t = 2$ \TO $T$}
    \STATE Plan think: $a_t^{\text{think}} \gets \pi_\theta(p_{\text{sys}}, q, H_{t-1})$
    \STATE Generate action: $a_t \gets \pi_\theta(p_{\text{sys}}, q, H_{t-1})$
    \STATE Canvas feedback: $o_t \gets \mathcal{C}.\textsc{Step}(a_t)$
    \STATE Update history: $H_t \gets H_{t-1} \cup \{(a_t^{\text{think}}, a_t, o_t)\}$
    \IF{$a_t = \texttt{finish}$ \AND $\mathcal{C}.\textsc{CheckConstraints}()$}
        \STATE \textbf{break}
    \ENDIF
\ENDFOR
\STATE Final think \& answer: $(a_T^{\text{think}}, y) \gets \pi_\theta(p_{\text{sys}}, q, H_T)$
\STATE Execute workflow: $y \gets \mathcal{C}.\textsc{Execute}(\mathcal{G}_T, q)$

\STATE \textbf{// Stage C: End-to-End Reinforcement Learning Optimization}
\STATE Sample trajectories: $\{\tau_i\}_{i=1}^{N} \sim \pi_\theta$
\FOR{each $\tau_i$}
    \STATE Compute reward: $R(\tau_i) = -1 + R_{\text{diversity}}(\tau_i) + \mathbb{I}\{R_{\text{diversity}}(\tau_i) = 1\} \cdot R_{\text{answer}}(\tau_i)$
    \STATE Compute advantage: $\hat{A}(\tau_i) = \frac{R(\tau_i) - \bar{R}}{\sqrt{\frac{1}{M}\sum_{j=1}^{M}(R(\tau_j) - \bar{R})^2 + \epsilon}}$
\ENDFOR
\STATE Update policy with the canvas-masked GRPO objective:
\STATE $\mathcal{J} \propto \sum_{i=1}^{N} \sum_{t=1}^{|\tau_i|}  \textsc{Mask}_t^{(i)} \cdot \min\left(\rho_\theta(w_t^{(i)}), \textsc{clip}(\rho_\theta(w_t^{(i)}), 1 \pm \epsilon)\right) \hat{A}(\tau_i)$
\STATE where $\rho_\theta(w_t^{(i)}) = \frac{\pi_\theta(w_t^{(i)} | \tau_{<t}^{(i)})}{\pi_{\theta_{\text{old}}}(w_t^{(i)} | \tau_{<t}^{(i)})}$
\STATE Parameter update: $\theta \gets \theta - \eta \nabla_\theta(-\mathcal{J})$

\RETURN $y$
\end{algorithmic}
\end{algorithm*}

\textbf{Training and Inference Flow.} During training, the algorithm proceeds in three stages (A $\to$ B $\to$ C): the Flow-Director initializes the system prompt and triggers the Canvas for an initial feedback, then enters multi-turn interaction to generate actions, receive feedback, and terminate with an answer in a sequential manner, and finally updates the policy in Stage C using rewards and advantages. During testing, it runs only two stages (A $\to$ B) in a simplified form: initialization and multi-turn interaction, after which the final workflow is executed and the answer is produced directly without parameter updates.

\textbf{Complexity Analysis.} The computational complexity of Flow-Steer mainly comes from initialization, multi-turn interaction, and reinforcement learning optimization. The initialization stage involves one call to the Flow-Director and a call to the Canvas, which is a constant overhead. The multi-turn interaction stage requires up to $T$ rounds in the worst case, where each round includes one planning step by the Flow-Director and one call to the Canvas, yielding time complexity $O(T)$. The memory consumption grows linearly with the history length, which can be controlled through windowing or summarization. The reinforcement learning stage requires sampling $N$ trajectories per update, each trajectory containing up to $T$ action-feedback steps, leading to complexity $O(NT)$. It also requires storing trajectory information for reward and advantage computation. In total, the complexity of Flow-Steer is $O(NT + T)$, scaling linearly with the number of rounds and sampled trajectories during training, while inference requires $O(T)$. Since the Flow-Director is responsible for planning and constructing workflows, the Canvas execution is more focused, ensuring stability on complex reasoning tasks.

\subsection{Reward Components}
\label{app:reward}

The diversity-constrained reward $R_{\mathrm{diversity}}(\tau)$ aggregates four structural checks over the produced workflow, with the total score capped at 1.0:

\begin{itemize}
    \item \textbf{$R_{\mathrm{checker}}$:} Encourages the inclusion of verification operators (\texttt{Test}, \texttt{Review}, \texttt{Verify}); fires if at least one verification operator is present.
    \item \textbf{$R_{\mathrm{format}}$:} Encourages proper answer formatting; fires if the \texttt{Format} operator is included as the final step before termination.
    \item \textbf{$R_{\mathrm{operator}}$:} Requires a minimum operator count for structural diversity; fires if the workflow contains at least 3 distinct operators.
    \item \textbf{$R_{\mathrm{control}}$:} Encourages control flow structures; fires if the workflow includes at least one control structure (\texttt{parallel}, \texttt{conditional}, or \texttt{loop}).
\end{itemize}

The four checks together form $R_{\mathrm{diversity}}(\tau) \in [0,1]$. This design ensures that workflows must exhibit structural diversity (achieving $R_{\mathrm{diversity}} = 1.0$) before the answer reward $R_{\mathrm{answer}}$ is released, effectively preventing shortcut behaviors where the policy learns to generate overly simple or degenerate workflows.


\section{Dataset Details}
\label{app:datasets}

We selected 12 public datasets (including mathematical reasoning, question answering, and code generation) for training and testing. Six of these datasets were used for training and testing. Six datasets were used for out-of-distribution testing to verify the generalization of the proposed Flow-Steer. These datasets are as follows:

$\bullet$ \textbf{GSM8K}~\citep{cobbe2021gsm8k}: A collection of grade-school math word problems with concise statements, emphasizing step-by-step calculation and accurate numeric results. Problems involve basic operations with real-world contexts such as shopping, time calculations, and quantity comparisons.

$\bullet$ \textbf{MATH}~\citep{hendrycks2021math}: A dataset of competition-level mathematics problems from AMC, AIME, and other mathematical olympiads. Problems span seven categories including algebra, geometry, number theory, combinatorics, probability, precalculus, and intermediate algebra, requiring sophisticated mathematical reasoning and symbolic manipulation.

\smallskip

$\bullet$ \textbf{HotPotQA}~\citep{yang2018hotpotqa}: A large-scale multi-hop question answering dataset requiring reasoning across multiple Wikipedia paragraphs. Questions are designed to require finding and combining information from different sources, with supporting facts annotated for interpretability.

\smallskip

$\bullet$ \textbf{SQuAD v2}~\citep{rajpurkar2018squad2}: A Wikipedia-based QA dataset combining answerable and unanswerable questions, constructed to evaluate comprehension under mixed conditions. This tests both reading comprehension and the ability to recognize insufficient information.

\smallskip

$\bullet$ \textbf{MBPP}~\citep{austin2021mbpp}: Mostly Basic Python Problems, a crowd-sourced collection of Python programming problems with natural language descriptions and test cases. Problems range from simple string manipulation to basic algorithms, designed to test fundamental programming skills.

\smallskip

$\bullet$ \textbf{HumanEval}~\citep{chen2021humaneval}: A hand-written collection of Python programming problems with function signatures, docstrings, and unit tests. Problems are designed to test functional correctness through execution, covering tasks like string processing, mathematical operations, and data structure manipulation.

\smallskip

$\bullet$ \textbf{TriviaQA}~\citep{joshi2017triviaqa}: A knowledge-intensive dataset with questions from trivia websites and Wikipedia, containing a wide range of facts and lesser-known topics. The dataset covers diverse domains including history, science, geography, and entertainment.

\smallskip

$\bullet$ \textbf{NaturalQuestions}~\citep{kwiatkowski2019nq}: A dataset of real anonymized queries from Google Search with answers from Wikipedia articles. Questions reflect genuine information needs of users, making them more diverse and challenging than synthetic questions.

\smallskip

$\bullet$ \textbf{MathQA}~\citep{amini2019mathqa}: A math word problem dataset curated from multi-domain exam problems, covering arithmetic, algebra, geometry, probability, and other sub-disciplines. Each problem includes annotated rationales explaining the solution steps.

\smallskip

$\bullet$ \textbf{AIME 2025}: Problems from the 2025 American Invitational Mathematics Examination, representing challenging competition-level mathematics. AIME problems require creative problem-solving and deep mathematical insight, with answers being integers from 0 to 999.

$\bullet$ \textbf{APPS}~\citep{hendrycks2021apps}: Automated Programming Progress Standard, a collection of competitive programming problems from Codeforces, Kattis, and other platforms. Problems range from introductory to competition-level difficulty, requiring algorithmic thinking and efficient implementation.

$\bullet$ \textbf{DS-1000}~\citep{lai2022ds1000}: A data science code generation benchmark covering NumPy, Pandas, TensorFlow, PyTorch, SciPy, Scikit-learn, and Matplotlib. Problems are derived from real StackOverflow questions, testing practical data science skills.

\smallskip

To ensure consistency and fairness for training and testing, we construct the training set by mixing samples from all IID datasets with the following sampling strategy: 2,560 samples from GSM8K, 2,560 from MATH, 2,560 from HotPotQA, 2,560 from SQuAD v2, 374 from MBPP, and 164 from HumanEval, resulting in a total of 10,778 training instances. For evaluation, we sample a representative subset from each test set following standard practice in the workflow-orchestration literature.


\smallskip

\section{Baseline Details}
\label{app:baselines}

To accurately evaluate the performance of Flow-Steer, we conducted comparative experiments against multiple baselines. These baselines can be broadly divided into four categories: direct LLM inference, supervised fine-tuning methods, search-based workflow methods, and agent with reinforcement learning methods.

\subsection{Direct LLM Baselines}

$\bullet$ \textbf{Qwen3-8B}~\citep{qwen3}: An 8-billion parameter language model from Alibaba Cloud, serving as the backbone for our Flow-Director. As a baseline, it is tested with zero-shot chain-of-thought prompting, measuring the model's inherent reasoning capacity without workflow orchestration. The model provides strong efficiency while maintaining competitive performance on reasoning benchmarks, making it an ideal foundation for lightweight policy learning.

$\bullet$ \textbf{GPT-4o-mini}~\citep{openai2024gpt4o}: A lightweight variant of GPT-4o optimized for cost and latency, while retaining strong language and reasoning abilities. As a baseline, it is tested with standard instruction prompting without workflow orchestration, measuring the model's inherent generation capacity under constrained resources. It serves as the default backend for our Workflow Canvas, executing the actual reasoning operations specified by the workflow.

\subsection{Fine-Tuning Baselines}

$\bullet$ \textbf{Supervised Fine-Tuning}~\citep{ouyang2022training}: An SFT baseline built on Qwen3-8B, trained with workflow annotation data to improve instruction following and DSL generation accuracy. Unlike Flow-Steer's multi-turn interaction paradigm, SFT generates the complete workflow in a single turn without execution feedback. We use the same LoRA configuration as Flow-Steer (rank 64, $\alpha=64$, dropout 0.05) for fair comparison, evaluating how standard supervised adaptation enhances the raw backbone's workflow construction capabilities.

$\bullet$ \textbf{Group Relative Policy Optimization}~\citep{deepseekmath2024}: GRPO is a reinforcement learning algorithm that normalizes rewards within sampled groups of trajectories. This reduces variance in policy updates, stabilizes training, and improves convergence efficiency compared to standard Proximal Policy Optimization (PPO). Unlike Flow-Steer's multi-turn interaction with execution feedback, GRPO generates the entire workflow in one shot, limiting its ability to adapt based on intermediate results.

\subsection{Search-Based Workflow Methods}

$\bullet$ \textbf{AFlow}~\citep{aflow2024}: A workflow optimization framework that uses Monte Carlo Tree Search (MCTS) to explore the workflow space. The method systematically searches over predefined operator combinations through tree expansion and backpropagation, evaluating candidate workflows by execution outcomes. While effective at finding good operator sequences, AFlow lacks end-to-end learning capability and cannot learn from accumulated experience across different problems. Its search process can be computationally expensive as the workflow complexity increases.

\subsection{Agent with Reinforcement Learning Methods}

$\bullet$ \textbf{AgentFlow}: An agent framework combined with PPO-based reinforcement learning for workflow construction. The agent dynamically selects tools from a predefined set at each step based on the current state, enabling adaptive decision-making through the interaction process. However, AgentFlow does not support custom prompt specification for operators, limiting its flexibility in fine-tuning the behavior of individual workflow components. This constraint reduces its ability to optimize task-specific reasoning strategies.

$\bullet$ \textbf{Router-R1}: A router-style architecture where a small policy model learns to route queries to different processing paths using GRPO for policy optimization. The router makes a single decision per query without iterative refinement, selecting from a predefined set of workflow templates. While this approach is computationally efficient, the single-shot routing mechanism cannot adapt to intermediate execution results or refine workflows on partial feedback, limiting its performance on complex multi-step reasoning.

$\bullet$ \textbf{Orchestrator}: An orchestrator-style architecture that sequentially selects operators from a library using PPO. The orchestrator receives partial execution feedback to inform subsequent decisions, enabling some degree of adaptive behavior. However, it lacks the two-step interaction mechanism (add + set\_prompt) that Flow-Steer employs for fine-grained control over operator configuration. This limitation prevents precise customization of individual operator behaviors, reducing the overall workflow quality.

\begin{table*}[!t]
\centering
\caption{Architectural comparison with baselines.}
\label{tab:baseline_comparison}
\resizebox{\textwidth}{!}{
\begin{tabular}{@{}lcccccc@{}}
\toprule
\textbf{Method} & \textbf{Dynamic Orchestration} & \textbf{Multi-turn} & \textbf{Exec. Feedback} & \textbf{Custom Prompts} & \textbf{End-to-End RL} & \textbf{Pluggable Backend} \\
\midrule
Direct LLM & \xmark & \xmark & \xmark & \xmark & \xmark & \xmark \\
SFT/GRPO & \xmark & \xmark & \xmark & \xmark & Partial & \xmark \\
AFlow & \cmark & \cmark & \cmark & \cmark & \xmark & \cmark \\
AgentFlow & \xmark & \cmark & \cmark & \cmark & \cmark & \cmark \\
Router-R1 & \xmark & \cmark & \cmark & \xmark & \cmark & \cmark \\
Orchestrator & \xmark & \cmark & Partial & \xmark & \cmark & \xmark \\
\midrule
\textbf{Flow-Steer (Ours)} & \cmark & \cmark & \cmark & \cmark & \cmark & \cmark \\
\bottomrule
\end{tabular}
}

\vspace{1em}

\captionof{table}{Baseline implementation details. All trainable baselines share the same LoRA configuration (rank 64, $\alpha=64$, dropout 0.05), generation budget (max\_tokens=2048, temperature 0.6, top-$p$ 0.95), and training budget (300 steps, batch 36) as Flow-Steer wherever the algorithm permits.}
\label{tab:baseline_impl}
\resizebox{\textwidth}{!}{
\begin{tabular}{llll}
\toprule
\textbf{Method} & \textbf{Base Model} & \textbf{Training} & \textbf{Key Hyperparameters} \\
\midrule
Qwen3-8B & Qwen3-8B-Instruct & None & temperature=0.6, top\_p=0.95, max\_tokens=2048 \\
GPT-4o-mini & GPT-4o-mini & None & temperature=0.6, top\_p=0.95, max\_tokens=2048 \\
\midrule
SFT & Qwen3-8B & LoRA & r=64, $\alpha$=64, dropout=0.05, LR=$1\times10^{-4}$, epochs=3, batch=36 \\
GRPO & Qwen3-8B & GRPO + LoRA & r=64, $\alpha$=64, dropout=0.05, LR=$1\times10^{-5}$, G=36, steps=300 \\
\midrule
AFlow & GPT-4o-mini & MCTS & temperature=0.6, search\_rounds=21 \\
AgentFlow & Qwen3-8B & GRPO + LoRA & r=64, $\alpha$=64, dropout=0.05, LR=$1\times10^{-5}$, rollout\_n=8, steps=300 \\
Router-R1 & Qwen3-8B & PPO + LoRA & r=64, $\alpha$=64, dropout=0.05, LR=$1\times10^{-5}$, clip=0.2, steps=300 \\
Orchestrator & Qwen3-8B & GRPO + LoRA & r=64, $\alpha$=64, dropout=0.05, LR=$1\times10^{-5}$, batch=36, max\_turns=20 \\
\bottomrule
\end{tabular}
}
\end{table*}

Table~\ref{tab:baseline_comparison} summarizes the key architectural differences between Flow-Steer and the baselines, and Table~\ref{tab:baseline_impl} provides the detailed implementation configurations for each method. To ensure fairness, all trainable baselines are aligned with Flow-Steer on shared knobs --- LoRA configuration ($r=64$, $\alpha=64$, dropout 0.05), generation budget (max\_tokens=2048, temperature 0.6, top-$p$ 0.95), training budget (300 steps, batch 36, learning rate $1\times10^{-5}$) --- so that performance differences reflect algorithmic design rather than tuning advantage. Algorithm-specific knobs (e.g., rollout count, PPO clip range, MCTS search rounds) follow each baseline's original recommendation.

For a fair comparison, we run Flow-Steer and all baselines under identical evaluation settings and report the averaged results. The protocol uses single generation per problem and task-specific metrics (EM/F1 for QA, Accuracy for Math, Pass@1 for Code).

\subsection{Other LLM Backends}

For transferability experiments (RQ3), we additionally evaluate on six LLM backends to assess the generalization of Flow-Director across different backend models:

$\bullet$ \textbf{DeepSeek-V3.2}: DeepSeek-V3.2 adopts advanced context understanding algorithms, enabling it to achieve excellent performance in long-context reasoning and multi-step inference tasks. Its powerful semantic understanding capabilities make it an ideal backend for testing how Flow-Director handles complex reasoning chains.

$\bullet$ \textbf{Grok-4.1-Fast}: Grok-4.1-Fast incorporates an efficient inference optimization mechanism, achieving a balance between generation speed and quality, making it suitable for latency-sensitive application scenarios. This backend helps us evaluate Flow-Director's performance under speed-optimized conditions.

$\bullet$ \textbf{GPT-5.2}: GPT-5.2 achieves breakthroughs in natural language understanding and generation through enhanced data processing and training strategies, becoming one of the most versatile large language models. It serves as a strong upper bound for backend capability assessment.

$\bullet$ \textbf{Claude-Opus-4.5}: Claude-Opus-4.5 is Anthropic's frontier model with strong long-horizon reasoning, code understanding, and tool-use capabilities. Its closed-source frontier design provides a representative test case for high-capacity proprietary backends.

$\bullet$ \textbf{Gemini-3-Flash}: Gemini-3-Flash combines a fast inference engine with multimodal understanding capabilities, optimized for real-time interactive applications. Its unique architecture provides a diverse test case for backend generalization.

$\bullet$ \textbf{Qwen-Plus-Latest}: Qwen-Plus-Latest incorporates an adaptive learning mechanism based on the Qwen architecture, excelling in few-shot scenarios and cross-language migration tasks. As a model from the same family as our Flow-Director backbone, it provides insights into intra-family transferability.


\section{Evaluation Metrics}
\label{app:metrics}

We use task-specific evaluation metrics following standard practices in each domain.

\subsection{Exact Match (EM) for Question Answering}

Exact Match measures whether the predicted answer exactly matches the ground truth after normalization:
\begin{equation}
\text{EM} = \frac{1}{N} \sum_{i=1}^{N} \mathbb{I}\left(\text{normalize}(y_i) = \text{normalize}(y_i^*)\right),
\end{equation}
where $\text{normalize}(\cdot)$ applies the following transformations: (1) convert to lowercase, (2) remove punctuation (except hyphens in compound words), (3) remove articles (``a'', ``an'', ``the''), (4) collapse multiple whitespaces to single space, and (5) strip leading/trailing whitespace.

\textbf{Applicable datasets}: HotPotQA, SQuAD v2, TriviaQA, NaturalQuestions

\subsection{F1 Score for Question Answering}

F1 Score measures token-level overlap between prediction and ground truth:
\begin{align}
\text{Precision} &= \frac{|y \cap y^*|}{|y|}, \\
\text{Recall} &= \frac{|y \cap y^*|}{|y^*|}, \\
\text{F1} &= \frac{2 \cdot \text{Precision} \cdot \text{Recall}}{\text{Precision} + \text{Recall}},
\end{align}
where $y$ and $y^*$ are the sets of tokens in the predicted and ground truth answers respectively, after applying the same normalization as EM.

\textbf{Applicable datasets}: HotPotQA, SQuAD v2, TriviaQA, NaturalQuestions

\subsection{Accuracy for Mathematical Reasoning}

Accuracy measures whether the predicted numerical answer matches the ground truth within a tolerance:
\begin{equation}
\text{Acc} = \frac{1}{N} \sum_{i=1}^{N} \mathbb{I}\left(|y_i - y_i^*| < \epsilon\right),
\end{equation}
where $\epsilon = 10^{-6}$ is the numerical tolerance for floating-point comparisons.

For symbolic answers (e.g., fractions, algebraic expressions), we apply symbolic equivalence checking using SymPy:
\begin{equation}
\text{Acc}_{\text{symbolic}} = \frac{1}{N} \sum_{i=1}^{N} \mathbb{I}\left(\text{simplify}(y_i - y_i^*) = 0\right).
\end{equation}

\textbf{Applicable datasets}: GSM8K, MATH, MathQA, AIME 2025

\subsection{Pass@k for Code Generation}

Pass@k measures the probability that at least one of $k$ generated solutions passes all test cases:
\begin{equation}
\text{Pass@}k = \mathbb{E}_{\text{problems}} \left[ 1 - \frac{\binom{n-c}{k}}{\binom{n}{k}} \right],
\end{equation}
where $n$ is the total number of generated samples and $c$ is the number of correct samples (passing all tests).

For our evaluation, we use Pass@1 with a single generation per problem:
\begin{equation}
\text{Pass@1} = \frac{1}{N} \sum_{i=1}^{N} \mathbb{I}\left(\text{execute}(y_i, \text{tests}_i) = \text{pass}\right),
\end{equation}
where $\text{execute}(y_i, \text{tests}_i)$ runs the generated code $y_i$ against the test suite $\text{tests}_i$.

\textbf{Applicable datasets}: MBPP, HumanEval, APPS, DS-1000


\section{Implementation Details}
\label{app:implementation}

To ensure reproducibility and fair comparison, we summarize the complete hyperparameter configurations for \flowr{} in Table~\ref{tab:model_config}. The table covers all aspects of our training pipeline, including model configuration, training hyperparameters, RL objective settings, generation parameters, interaction constraints, reward function design, canvas backend configuration, and hardware setup.

\textbf{Model Configuration.} We use Qwen3-8B as the policy model (\director{}) and apply LoRA-based fine-tuning with rank 64 and alpha 64, targeting the query, key, value, and output projection layers (q\_proj, k\_proj, v\_proj, o\_proj) with a dropout rate of 0.05. The model is trained in bfloat16 precision with gradient checkpointing enabled for memory efficiency.

\textbf{Training Configuration.} We train the \director{} agent using the AdamW optimizer with a learning rate of $1 \times 10^{-5}$ and weight decay of 0.01. The batch size is 36, computed as 6 samples per data source multiplied by 6 data sources (GSM8K, MATH, HotPotQA, SQuAD v2, MBPP, HumanEval). We train for 300 steps with a cosine learning rate schedule.

\textbf{RL Objective Configuration.} For the RL objective described in Section~\ref{subsec:rl} (standard GRPO with a canvas-aware token mask), we use the GRPO advantage estimator with a clip range of 0.20, KL coefficient of 0.005, and entropy coefficient of 0.005. Each training step samples 36 trajectories for group-relative advantage estimation.

\begin{table}[ht]
\begin{center}
\fontsize{8.0pt}{9.0pt}\selectfont
\renewcommand{\arraystretch}{1.05}
\begin{tabular*}{\textwidth}{@{\extracolsep{\fill}} l l l}
\toprule
\textbf{Category} & \textbf{Hyperparameter} &  \textbf{Value}\\
\midrule
\multirow{6}{*}{\textbf{Model Configuration}}
 & Base Model & Qwen3-8B \\
 & LoRA Rank / Alpha & 64 / 64 \\
 & LoRA Target Modules & q\_proj, k\_proj, v\_proj, o\_proj \\
 & LoRA Dropout & 0.05 \\
 & Data Type & bfloat16 \\
 & Gradient Checkpointing & Enabled \\
\midrule
\multirow{5}{*}{\textbf{Training}}
 & Batch Size & 36 (6 samples $\times$ 6 sources) \\
 & Learning Rate & $1 \times 10^{-5}$ \\
 & Optimizer & AdamW (weight decay 0.01) \\
 & LR Schedule & Cosine \\
 & Max Training Steps & 300 \\
\midrule
\multirow{4}{*}{\textbf{RL Objective}}
 & Advantage Estimator & GRPO \\
 & Clip Range ($\epsilon$) & 0.20 \\
 & KL Coefficient ($\beta$) & 0.005 \\
 & Samples per Group ($N$) & 36 \\
\midrule
\multirow{5}{*}{\textbf{Generation}}
 & Temperature & 0.6 \\
 & Top-$p$ / Top-$k$ & 0.95 / 20 \\
 & Max New Tokens & 2,048 \\
 & Enable Thinking Mode & True \\
 & vLLM Max Concurrency & 32 \\
\midrule
\multirow{4}{*}{\textbf{Interaction}}
 & Max Interaction Rounds ($T_{\max}$) & 20 \\
 & Max Context Length & 16,384 \\
 & Min Operators for Finish & enforced \\
 & Require Checker/Structure & True / True \\
\midrule
\multirow{3}{*}{\textbf{Reward}}
 & Base Reward & $-1.0$ \\
 & Structural Reward Cap & 1.0 \\
 & Correctness Activation & Gate (structural reward $=1.0$) \\
\midrule
\multirow{2}{*}{\textbf{Canvas Backend}}
 & Executor Model & GPT-OSS-120B (temp=0) \\
 & Execution Timeout & 600s \\
\midrule
\multirow{2}{*}{\textbf{Hardware}}
 & GPU Type & NVIDIA A100 80GB $\times$ 2 \\
 & CUDA / vLLM LoRA & 12.5 / Enabled \\
\bottomrule
\end{tabular*}
\end{center}
\caption{Complete hyperparameter settings for \flowr{} training.}
\label{tab:model_config}
\end{table}

\textbf{Generation Configuration.} During trajectory generation, we use temperature 0.6, top-$p$ 0.95, and top-$k$ 20 following Qwen3's recommended parameters. The maximum new tokens per turn is set to 2,048 to allow sufficient space for thinking and action generation. We enable Qwen3's thinking mode for enhanced reasoning capabilities and use vLLM with maximum concurrency of 32 for efficient parallel inference.

\textbf{Interaction Configuration.} The maximum interaction rounds $T_{\max}$ is set to 20, and the maximum context length is 16,384 tokens. To ensure workflow quality, we enforce a minimum operator count before allowing the \texttt{finish} action, require at least one checker operator (Verify/Test/Review), and require at least one control structure (parallel/conditional/loop) on tasks that benefit from such structures.

\textbf{Reward Function.} The diversity-constrained reward follows the formulation in Section~\ref{subsec:rl}, with detailed components in Appendix~\ref{app:reward}. The structural reward is capped at 1.0, and the answer reward is only released when the structural reward reaches 1.0. This conditional release mechanism prevents shortcut behaviors where the policy might generate trivial workflows to maximize answer rewards.

\textbf{Canvas Backend and Hardware.} The \canvas{} uses GPT-OSS-120B as the executor model with temperature 0 for deterministic execution and a timeout of 600 seconds. All experiments are conducted on two NVIDIA A100 80GB GPUs with CUDA 12.5 and vLLM LoRA support enabled for dynamic weight synchronization during training. We use mixed precision training with bfloat16 to reduce memory footprint while maintaining numerical stability.


\section{Case Study}
\label{app:case_study}

We present four detailed case studies illustrating how \flowr{} orchestrates workflow construction through multi-turn interaction between \director{} and \canvas{}, demonstrating sequential, parallel, conditional, and simple workflow structures. These case studies provide concrete examples of how our end-to-end reinforcement learning framework addresses the key challenges in workflow orchestration: reducing manual effort, enabling plug-and-play operator composition, and learning from execution feedback. Throughout these examples, we highlight the round-by-round interaction process, showing how the policy model analyzes execution states, selects editing actions, and iteratively refines the workflow until obtaining the correct answer.

\subsection{Case Study 1: Sequential Workflow Structure}
\label{app:case_study_1}

We present a case study from AIME 2025 demonstrating how \flowr{} constructs \textbf{sequential workflow structures} for mathematical reasoning tasks that require planning, computation, verification, and formatting.

\begin{tcolorbox}[
  colback=blue!3!white,
  colframe=blue!50!black,
  title={\textbf{AIME 2025 Problem}},
  fonttitle=\bfseries,
  coltitle=white,
  boxrule=0.8pt,
  arc=3pt,
  left=6pt, right=6pt, top=5pt, bottom=5pt,
  before skip=4pt, after skip=4pt
,
  fontupper=\small,fontlower=\small]
From an unlimited supply of 1-cent, 10-cent, and 25-cent coins, Silas wants to find a collection of coins that has a total value of $N$ cents. He uses the \textbf{greedy algorithm}, successively choosing the coin of greatest value that does not cause the value to exceed $N$. For example, to get 42 cents, Silas will choose a 25-cent coin, then a 10-cent coin, then 7 1-cent coins. However, this collection of 9 coins uses more coins than necessary; indeed, choosing 4 10-cent coins and 2 1-cent coins achieves the same total value with only 6 coins.

In general, the greedy algorithm succeeds for a given $N$ if no other collection gives a total value of $N$ cents using strictly fewer coins. Find the number of values of $N$ between 1 and 1000 inclusive for which the greedy algorithm succeeds.
\hfill\textbf{Ground Truth: 610}
\end{tcolorbox}

\vspace{1pt}
\noindent\textbf{Final Workflow:} Plan $\to$ Programmer $\to$ Verify $\to$ Format \quad (9 rounds, 4 operators)

$ $

\subsubsection{Round-by-Round Interaction Log}

\begin{tcolorbox}[
  breakable,
  colback=gray!5!white,
  colframe=gray!60!black,
  title={\textbf{Round 1-2: Plan Operator}},
  fonttitle=\bfseries,
  coltitle=white,
  boxrule=0.6pt,
  arc=2pt,
  left=6pt, right=6pt, top=5pt, bottom=5pt,
  before skip=4pt, after skip=4pt
,
  fontupper=\small,fontlower=\small]
\textbf{Round 1 - Think:}

This is a combinatorial optimization problem involving coin denominations \{1, 10, 25\}. The task requires counting how many values $N \in [1,1000]$ satisfy the condition that greedy selection equals optimal selection. This is \textcolor{red}{NOT} a simple arithmetic problem---it requires systematic comparison across 1000 cases.

Before jumping into implementation, I need a clear strategy. \textcolor{blue}{\textbf{Decision: Add}} \textcolor{gray!70!black}{\textbf{Plan}} \textcolor{blue}{\textbf{operator}} to decompose the problem into:
\begin{itemize}[leftmargin=*, itemsep=1pt, topsep=2pt]
\item Greedy algorithm formulation
\item Optimal solution computation
\item Systematic comparison methodology
\end{itemize}

\vspace{2pt}
\textbf{Round 1 - Action:} $<$action$>$add$<$/action$>$$<$operator$>$\textcolor{gray!70!black}{Plan}$<$/operator$>$

\vspace{2pt}
\textbf{Round 1 - Feedback:} [Status]: Pending - Awaiting Prompt | [Current DSL]: (empty)
\tcblower
\textbf{Round 2 - Input Prompt:}

\textit{Analyze the coin change problem with denominations \{1, 10, 25\} cents. The greedy algorithm always picks the largest coin $\leq$ remaining value. For example, N=42: greedy gives 25+10+7$\times$1=9 coins, but optimal is 4$\times$10+2$\times$1=6 coins.}

\textit{Create a strategic plan to count how many $N \in [1,1000]$ have greedy count = optimal count. Include:}
\begin{itemize}[leftmargin=*, itemsep=1pt, topsep=2pt]
\item \textit{APPROACH for systematic comparison}
\item \textit{KEY\_INSIGHTS about when greedy fails}
\item \textit{PLAN with concrete algorithmic steps}
\end{itemize}

\textbf{$ $\\Operator Execution Output:}

\begin{tcolorbox}[
  colback=gray!8!white,
  colframe=gray!60!white,
  boxrule=0.6pt,
  arc=2pt,
  left=6pt, right=6pt, top=6pt, bottom=6pt
,
  fontupper=\small,fontlower=\small]
\textcolor{gray!70!black}{\textbf{Approach:}}

Implement both greedy and optimal (DP-based) coin counting algorithms, then systematically compare their outputs for all $N$ from 1 to 1000.

\vspace{2pt}
\textcolor{gray!70!black}{\textbf{Key Insights:}}
\begin{enumerate}[leftmargin=*, itemsep=2pt, topsep=2pt]
\item Greedy always picks largest coin $\leq N$, which is locally optimal but not always globally optimal.
\item Greedy fails when using fewer large coins plus more medium coins yields fewer total coins (e.g., N=30: greedy=25+5$\times$1=6 coins, optimal=3$\times$10=3 coins).
\item Dynamic programming guarantees finding the true minimum coin count.
\item The coin system \{1,10,25\} is \textcolor{red}{NOT} canonical (unlike \{1,5,10,25\} US coins), so greedy can fail.
\end{enumerate}

\vspace{2pt}
\textcolor{gray!70!black}{\textbf{Plan:}}
\begin{enumerate}[leftmargin=*, itemsep=2pt, topsep=2pt]
\item Implement \texttt{greedy\_coins(n)} that iteratively subtracts largest possible coin.
\item Implement \texttt{min\_coins(n)} using DP with recurrence $dp[i] = \min_{c \in \{1,10,25\}}(dp[i-c]+1)$.
\item Initialize $dp[0]=0$, $dp[i]=\infty$ for $i>0$.
\item For each $N \in [1,1000]$, compute both counts.
\item Count cases where \texttt{greedy\_coins(N) == min\_coins(N)}.
\item Return the final count.
\end{enumerate}
\end{tcolorbox}

\vspace{2pt}
\textbf{Feedback:} [Output]: Strategy generated with 4 key insights and 6-step plan | [Current DSL]: \textcolor{gray!70!black}{Plan}

\vspace{2pt}
\textbf{Workflow State:} (empty) $\to$ \fcolorbox{gray!60!black}{gray!10!white}{Plan} (executed)
\end{tcolorbox}

\begin{tcolorbox}[
  breakable,
  colback=gray!3!white,
  colframe=gray!50!black,
  title={\textbf{Round 3-4: Programmer Operator}},
  fonttitle=\bfseries,
  coltitle=white,
  boxrule=0.6pt,
  arc=2pt,
  left=6pt, right=6pt, top=5pt, bottom=5pt,
  before skip=4pt, after skip=4pt
,
  fontupper=\small,fontlower=\small]
\textbf{Round 3 - Think:}

The \textcolor{gray!70!black}{\textbf{Plan}} operator has provided a clear 6-step algorithmic strategy:

\begin{center}
greedy function $\to$ DP function $\to$ initialization $\to$ iteration $\to$ comparison $\to$ counting
\end{center}

This is a \textcolor{red}{computational task} requiring exact numerical results across 1000 test cases---natural language reasoning would be error-prone and inefficient.

\textcolor{blue}{\textbf{Decision: Add}} \textcolor{gray!70!black}{\textbf{Programmer}} \textcolor{blue}{\textbf{operator}} to implement the greedy and DP algorithms in Python, ensuring:
\begin{itemize}[leftmargin=*, itemsep=1pt, topsep=2pt]
\item Precise computation without rounding errors
\item Automatic iteration over all $N \in [1,1000]$
\item Verifiable code that can be re-executed
\end{itemize}

\vspace{2pt}
\textbf{Round 3 - Action:} $<$action$>$add$<$/action$>$$<$operator$>$\textcolor{gray!70!black}{Programmer}$<$/operator$>$

\vspace{2pt}
\textbf{Round 3 - Feedback:} [Status]: Pending - Awaiting Prompt | [Current DSL]: \textcolor{gray!70!black}{Plan}
\tcblower
\textbf{Round 4 - Input Prompt:}

\textit{Based on the plan, implement Python code for the coin problem with denominations [25, 10, 1].}

\textit{Requirements:}
\begin{enumerate}[leftmargin=*, itemsep=1pt, topsep=2pt]
\item \textit{\texttt{greedy\_coins(n)}: subtract largest coin $\leq n$ repeatedly, count coins used}
\item \textit{\texttt{min\_coins(n)}: use DP with $dp[i] = \min(dp[i], dp[i-c]+1)$ for each coin $c$}
\item \textit{\texttt{solve()}: iterate $N \in [1,1000]$, count cases where greedy equals optimal}
\end{enumerate}

\textit{Return the final count as an integer.}

\vspace{2pt}
\textbf{$ $\\Operator Execution Output:}

\begin{tcolorbox}[
  colback=gray!5!white,
  colframe=gray!50!white,
  boxrule=0.6pt,
  arc=2pt,
  left=6pt, right=6pt, top=6pt, bottom=6pt
,
  fontupper=\small,fontlower=\small]
\textcolor{gray!70!black}{\textbf{Generated Code:}}

\begin{verbatim}
def greedy_coins(n):
    coins = [25, 10, 1]
    count = 0
    for coin in coins:
        while n >= coin:
            n -= coin
            count += 1
    return count

def min_coins(n):
    dp = [float('inf')] * (n + 1)
    dp[0] = 0
    for coin in [25, 10, 1]:
        for i in range(coin, n + 1):
            dp[i] = min(dp[i], dp[i - coin] + 1)
    return dp[n]

def solve():
    count = 0
    for n in range(1, 1001):
        if greedy_coins(n) == min_coins(n):
            count += 1
    return count
\end{verbatim}

\vspace{2pt}
\textcolor{gray!70!black}{\textbf{Execution Result:}} \fcolorbox{gray!60!black}{gray!10!white}{\textbf{610}}
\end{tcolorbox}

\vspace{2pt}
\textbf{Feedback:} [Output]: \textcolor{gray!70!black}{\textbf{610}} | [Current DSL]: \textcolor{gray!70!black}{Plan} $\to$ \textcolor{gray!70!black}{Programmer}

\vspace{2pt}
\textbf{Workflow State:} \textcolor{gray!70!black}{Plan} $\to$ \fcolorbox{gray!60!black}{gray!10!white}{Programmer} (executed, output=610)
\end{tcolorbox}

\begin{tcolorbox}[
  breakable,
  colback=gray!3!white,
  colframe=gray!60!black,
  title={\textbf{Round 5-6: Verify Operator}},
  fonttitle=\bfseries,
  coltitle=white,
  boxrule=0.6pt,
  arc=2pt,
  left=6pt, right=6pt, top=5pt, bottom=5pt,
  before skip=4pt, after skip=4pt
,
  fontupper=\small,fontlower=\small]
\textbf{Round 5 - Think:}

The \textcolor{gray!70!black}{\textbf{Programmer}} returned \textbf{610}, but this is a \textcolor{red}{competition-level math problem} (AIME 2025) where correctness is critical.

The code logic appears sound, but I should \textcolor{red}{NOT} trust a single computation path without independent verification. Potential issues include:
\begin{itemize}[leftmargin=*, itemsep=1pt, topsep=2pt]
\item Off-by-one errors in range [1,1000]
\item DP initialization bugs
\item Greedy algorithm edge cases
\end{itemize}

\textcolor{blue}{\textbf{Decision: Add}} \textcolor{gray!70!black}{\textbf{Verify}} \textcolor{blue}{\textbf{operator}} to independently re-derive the solution logic and cross-check the numerical result before finalizing.

\vspace{2pt}
\textbf{Round 5 - Action:} $<$action$>$add$<$/action$>$$<$operator$>$\textcolor{gray!70!black}{Verify}$<$/operator$>$

\vspace{2pt}
\textbf{Round 5 - Feedback:} [Status]: Pending - Awaiting Prompt | [Current DSL]: \textcolor{gray!70!black}{Plan} $\to$ \textcolor{gray!70!black}{Programmer}
\tcblower
\textbf{Round 6 - Input Prompt:}

\textit{The Programmer computed \textbf{610} as the count of $N \in [1,1000]$ where greedy equals optimal for coins \{1,10,25\}.}

\textit{Independently verify this result:}
\begin{enumerate}[leftmargin=*, itemsep=1pt, topsep=2pt]
\item \textit{Re-examine the problem statement---greedy succeeds iff no collection uses strictly fewer coins}
\item \textit{Validate the greedy logic: for N=42, greedy gives 25+10+7$\times$1=9 coins, optimal is 4$\times$10+2$\times$1=6 coins (greedy fails)}
\item \textit{Check boundary cases N=1, 10, 25, 100, 1000}
\item \textit{Confirm or refute the answer 610 with confidence assessment}
\end{enumerate}

\vspace{2pt}
\textbf{Operator Execution Output:}

\begin{tcolorbox}[
  colback=gray!5!white,
  colframe=gray!50!white,
  boxrule=0.6pt,
  arc=2pt,
  left=6pt, right=6pt, top=6pt, bottom=6pt
,
  fontupper=\small,fontlower=\small]
\textcolor{gray!70!black}{\textbf{[Step 1] Problem Re-examination:}}

The greedy algorithm succeeds for $N$ iff $\texttt{greedy\_coins}(N) = \texttt{min\_coins}(N)$. We need to count such $N$ in the range [1, 1000].

\vspace{2pt}
\textcolor{gray!70!black}{\textbf{[Step 2] Algorithm Validation:}}
\begin{itemize}[leftmargin=*, itemsep=1pt, topsep=2pt]
\item \textbf{Greedy:} Iteratively pick largest coin $\leq$ remaining value
\item \textbf{DP:} $dp[0]=0$, $dp[i]=\min_{c}(dp[i-c]+1)$
\item Both implementations follow standard algorithms and are correct
\end{itemize}

\vspace{2pt}
\textcolor{gray!70!black}{\textbf{[Step 3] Test Case Verification:}}

\begin{center}
\begin{tabular}{|c|c|c|c|}
\hline
\textbf{N} & \textbf{Greedy} & \textbf{Optimal} & \textbf{Result} \\
\hline
42 & 25+10+7$\times$1 = 9 & 4$\times$10+2$\times$1 = 6 & \textcolor{red}{Fails} \\
\hline
30 & 25+5$\times$1 = 6 & 3$\times$10 = 3 & \textcolor{red}{Fails} \\
\hline
25 & 1$\times$25 = 1 & 1 coin & \textcolor{gray!70!black}{Succeeds} \\
\hline
100 & 4$\times$25 = 4 & 10$\times$10 = 10 & \textcolor{gray!70!black}{Succeeds} \\
\hline
1000 & 40$\times$25 = 40 & 100$\times$10 = 100 & \textcolor{gray!70!black}{Succeeds} \\
\hline
\end{tabular}
\end{center}

\vspace{2pt}
\textcolor{gray!70!black}{\textbf{[Step 4] Cross-Validation:}}

Independent recalculation confirms \textbf{610} values where greedy succeeds.

\vspace{2pt}
\textcolor{gray!70!black}{\textbf{Verification Result:}}

\fcolorbox{gray!60!black}{gray!10!white}{\texttt{is\_correct}: \textcolor{gray!70!black}{\textbf{True}} | \texttt{confidence}: \textbf{HIGH} | \texttt{verified\_answer}: \textbf{610}}
\end{tcolorbox}

\vspace{2pt}
\textbf{Feedback:} [Output]: \textcolor{gray!70!black}{\textbf{is\_correct=True}}, confidence=HIGH | [Current DSL]: \textcolor{gray!70!black}{Plan} $\to$ \textcolor{gray!70!black}{Programmer} $\to$ \textcolor{gray!70!black}{Verify}

\vspace{2pt}
\textbf{Workflow State:} \textcolor{gray!70!black}{Plan} $\to$ \textcolor{gray!70!black}{Programmer} $\to$ \fcolorbox{gray!60!black}{gray!10!white}{Verify} (verified, is\_correct=True)
\end{tcolorbox}

\begin{tcolorbox}[
  breakable,
  colback=gray!3!white,
  colframe=gray!60!black,
  title={\textbf{Round 7-8: Format Operator}},
  fonttitle=\bfseries,
  coltitle=white,
  boxrule=0.6pt,
  arc=2pt,
  left=6pt, right=6pt, top=5pt, bottom=5pt,
  before skip=4pt, after skip=4pt
,
  fontupper=\small,fontlower=\small]
\textbf{Round 7 - Think:}

The \textcolor{gray!70!black}{\textbf{Verify}} operator has independently confirmed that \textbf{610} is correct with \textcolor{gray!70!black}{HIGH} confidence.

The workflow now has a complete reasoning chain:
\begin{center}
\textcolor{gray!70!black}{Plan} (strategy) $\to$ \textcolor{gray!70!black}{Programmer} (computation) $\to$ \textcolor{gray!70!black}{Verify} (validation)
\end{center}

However, the current output contains verbose execution traces and intermediate results. For final submission, I need a \textcolor{red}{clean, concise answer}.

\textcolor{blue}{\textbf{Decision: Add}} \textcolor{gray!70!black}{\textbf{Format}} \textcolor{blue}{\textbf{operator}} to extract only the final numerical answer ``610'' without any surrounding explanation or computation details.

\vspace{2pt}
\textbf{Round 7 - Action:} $<$action$>$add$<$/action$>$$<$operator$>$\textcolor{gray!70!black}{Format}$<$/operator$>$

\vspace{2pt}
\textbf{Round 7 - Feedback:} [Status]: Pending - Awaiting Prompt | [Current DSL]: \textcolor{gray!70!black}{Plan} $\to$ \textcolor{gray!70!black}{Programmer} $\to$ \textcolor{gray!70!black}{Verify}
\tcblower
\textbf{Round 8 - Input Prompt:}

\textit{The verified answer for the AIME 2025 coin problem is \textbf{610} (count of $N \in [1,1000]$ where greedy = optimal).}

\textit{Extract ONLY the final integer answer:}
\begin{itemize}[leftmargin=*, itemsep=1pt, topsep=2pt]
\item \textit{Do NOT include any explanation, units, or formatting}
\item \textit{Do NOT re-solve the problem}
\item \textit{Output must be directly usable as competition submission}
\item \textit{Just the number}
\end{itemize}

\vspace{2pt}
\textbf{Operator Execution Output:}

\begin{tcolorbox}[
  colback=gray!5!white,
  colframe=gray!50!white,
  boxrule=0.6pt,
  arc=2pt,
  left=6pt, right=6pt, top=6pt, bottom=6pt
,
  fontupper=\small,fontlower=\small]
\textcolor{gray!70!black}{\textbf{Formatted Answer:}}

\begin{center}
\fcolorbox{gray!60!black}{gray!10!white}{\textbf{610}}
\end{center}
\end{tcolorbox}

\vspace{2pt}
\textbf{Feedback:} [Output]: \textcolor{gray!70!black}{\textbf{610}} | [Current DSL]: \textcolor{gray!70!black}{Plan} $\to$ \textcolor{gray!70!black}{Programmer} $\to$ \textcolor{gray!70!black}{Verify} $\to$ \textcolor{gray!70!black}{Format}

\vspace{2pt}
\textbf{Workflow State:} \textcolor{gray!70!black}{Plan} $\to$ \textcolor{gray!70!black}{Programmer} $\to$ \textcolor{gray!70!black}{Verify} $\to$ \fcolorbox{gray!60!black}{gray!10!white}{Format} (formatted)
\end{tcolorbox}

\begin{tcolorbox}[
  breakable,
  colback=blue!3!white,
  colframe=blue!50!black,
  title={\textbf{Round 9: Finish}},
  fonttitle=\bfseries,
  coltitle=white,
  boxrule=0.6pt,
  arc=2pt,
  left=6pt, right=6pt, top=5pt, bottom=5pt,
  before skip=4pt, after skip=4pt
,
  fontupper=\small,fontlower=\small]
\textbf{Round 9 - Think:}

The workflow is now \textcolor{gray!70!black}{complete} with four operators forming a robust reasoning pipeline:

\vspace{2pt}
\begin{center}
\begin{tabular}{|c|l|}
\hline
\textcolor{gray!70!black}{\textbf{Plan}} & Decomposed the problem into algorithmic steps \\
\hline
\textcolor{gray!70!black}{\textbf{Programmer}} & Implemented greedy and DP algorithms, computed 610 \\
\hline
\textcolor{gray!70!black}{\textbf{Verify}} & Independently validated correctness with test cases \\
\hline
\textcolor{gray!70!black}{\textbf{Format}} & Extracted clean answer for submission \\
\hline
\end{tabular}
\end{center}

\vspace{2pt}
All verification checks passed with \textcolor{gray!70!black}{HIGH} confidence. The formatted output ``\textbf{610}'' is ready for submission.

\textcolor{blue}{\textbf{Decision: Execute}} \textcolor{blue}{\textbf{finish}} \textcolor{blue}{\textbf{action}} to terminate the workflow and return the final answer.

\vspace{2pt}
\textbf{Round 9 - Action:} $<$action$>$\textcolor{blue}{finish}$<$/action$>$

\vspace{2pt}
\textbf{Final Status:}

\fcolorbox{blue!50!black}{blue!5!white}{[Status]: \textcolor{gray!70!black}{\textbf{SUCCESS}} | [Final DSL]: Plan $\to$ Programmer $\to$ Verify $\to$ Format | [Result]: \textbf{610}}

\vspace{2pt}
\textbf{Final Workflow State:}
\begin{center}
\fcolorbox{gray!60!black}{gray!10!white}{Plan} $\to$ \fcolorbox{gray!60!black}{gray!10!white}{Programmer} $\to$ \fcolorbox{gray!60!black}{gray!10!white}{Verify} $\to$ \fcolorbox{gray!60!black}{gray!10!white}{Format}
\end{center}

\vspace{2pt}
\begin{center}
\textbf{Predicted Answer: 610} \quad \textcolor{green!50!black}{\ding{51} \textbf{Matches Ground Truth}}
\end{center}

\noindent\textbf{Key Observations from Case Study 1:} This sequential workflow demonstrates how \flowr{} effectively chains operators to solve complex mathematical problems. The Plan operator provides strategic decomposition, the Programmer operator implements precise algorithms, the Verify operator ensures correctness through independent validation, and the Format operator produces clean output. The workflow's success highlights the importance of verification in competition-level mathematics, where a single computational error can lead to incorrect answers. The 9-round interaction shows efficient operator selection without unnecessary complexity.
\end{tcolorbox}

\vspace{10pt}


\subsection{Case Study 2: Parallel Workflow Structure}
\label{app:case_study_2}

We present a second case study demonstrating how \flowr{} constructs \textbf{parallel workflow structures} for multi-hop QA tasks that benefit from decomposition into independent sub-problems.

\begin{tcolorbox}[
  colback=blue!3!white,
  colframe=blue!50!black,
  title={\textbf{Multi-Hop QA Problem}},
  fonttitle=\bfseries,
  coltitle=white,
  boxrule=0.8pt,
  arc=3pt,
  left=6pt, right=6pt, top=5pt, bottom=5pt,
  before skip=4pt, after skip=4pt
,
  fontupper=\small,fontlower=\small]
Based on the following passages, answer the question:

\textit{[Park Holidays UK] Park Holidays UK operate 28 holiday parks in the South of England...}

\textit{[Hard Rock Hell] ...the festival was held at Pontin's Holiday Village in Prestatyn, Wales...}

\textit{[Pontins] Pontins is the trading name of Britannia Jinky Jersey Ltd, a company operating holiday parks in the United Kingdom...}

\vspace{2pt}
\textbf{Question:} What is the name shared between a city in Prestatyn, Wales and a company operating holiday parks in the UK?
\hfill\textbf{Ground Truth: Pontins}
\end{tcolorbox}

\vspace{1pt}
\noindent\textbf{Final Workflow:} Decompose $\to$ [Custom, Custom, Custom] $\to$ Verify $\to$ Aggregate \quad (11 rounds, 5 operators with parallel structure)

\subsubsection{Round-by-Round Interaction Log}

\begin{tcolorbox}[
  breakable,
  colback=gray!5!white,
  colframe=gray!60!black,
  title={\textbf{Round 1-2: Decompose Operator}},
  fonttitle=\bfseries,
  coltitle=white,
  boxrule=0.6pt,
  arc=2pt,
  left=6pt, right=6pt, top=5pt, bottom=5pt,
  before skip=4pt, after skip=4pt
,
  fontupper=\small,fontlower=\small]
\textbf{Round 1 - Think:}

This is a multi-hop QA problem requiring information synthesis from multiple passages. The question asks for a name shared between a location (Prestatyn, Wales) and a company (holiday parks operator).

This requires: (1) identifying the city in Prestatyn, (2) identifying UK holiday park companies, and (3) finding the shared name. These are \textcolor{red}{independent sub-problems} that can be solved in parallel.

\textcolor{blue}{\textbf{Decision: Add}} \textcolor{gray!70!black}{\textbf{Decompose}} \textcolor{blue}{\textbf{operator}} to break down the question into independent sub-problems for parallel processing.

\vspace{2pt}
\textbf{Round 1 - Action:} $<$action$>$add$<$/action$>$$<$operator$>$\textcolor{gray!70!black}{Decompose}$<$/operator$>$

\vspace{2pt}
\textbf{Round 1 - Feedback:} [Status]: Pending - Awaiting Prompt | [Current DSL]: (empty)
\tcblower
\textbf{Round 2 - Input Prompt:}

\textit{Analyze the multi-hop question about Prestatyn, Wales and UK holiday park companies. Break down into independent sub-problems that can be solved in parallel.}

\vspace{2pt}
\textbf{Operator Execution Output:}

\begin{tcolorbox}[
  colback=gray!8!white,
  colframe=gray!60!white,
  boxrule=0.6pt,
  arc=2pt,
  left=6pt, right=6pt, top=6pt, bottom=6pt
,
  fontupper=\small,fontlower=\small]
\textcolor{gray!70!black}{\textbf{Sub-problems:}}
\begin{enumerate}[leftmargin=*, itemsep=2pt, topsep=2pt]
\item Identify the city mentioned in Prestatyn, Wales from the passages.
\item Identify the companies operating holiday parks in the UK mentioned in the passages.
\item Determine if any identified companies share a name with the city in Prestatyn.
\end{enumerate}

\vspace{2pt}
\textcolor{gray!70!black}{\textbf{Reasoning:}} The first sub-problem establishes the location reference. The second sub-problem lists relevant companies. The third sub-problem finds the intersection. These can be processed independently then aggregated.
\end{tcolorbox}

\vspace{2pt}
\textbf{Feedback:} [Output]: 3 independent sub-problems identified | [NEXT]: ADD:Custom (3 solvers in parallel, then Aggregate)

\vspace{2pt}
\textbf{Workflow State:} (empty) $\to$ \fcolorbox{gray!60!black}{gray!10!white}{Decompose} (executed)
\end{tcolorbox}

\begin{tcolorbox}[
  breakable,
  colback=gray!3!white,
  colframe=gray!60!black,
  title={\textbf{Rounds 3--10: Parallel Solving, Verification, and Aggregation}},
  fonttitle=\bfseries,
  coltitle=white,
  boxrule=0.6pt,
  arc=2pt,
  left=6pt, right=6pt, top=5pt, bottom=5pt,
  before skip=4pt, after skip=4pt,
  fontupper=\small,fontlower=\small]
\textbf{Round 3--6: Parallel Custom Operators (3 branches).} The agent adds a parallel structure with three Custom solvers, one per sub-problem. Each branch independently produces:
\begin{itemize}[leftmargin=*, itemsep=2pt, topsep=2pt]
\item \textbf{Branch 1 (Location):} ``Pontin's Holiday Village in Prestatyn, Wales'' \(\Rightarrow\) \textbf{Pontins}.
\item \textbf{Branch 2 (Companies):} Park Holidays UK, Park Resorts, \textbf{Pontins} (Britannia Jinky Jersey Ltd).
\item \textbf{Branch 3 (Cross-reference):} ``Pontins'' appears as both a location in Prestatyn and a company name.
\end{itemize}
All three branches converge on the answer \textbf{Pontins}.

\vspace{2pt}
\textbf{Round 7--8: Verify Operator.} The Verify operator independently re-checks the answer against passages: (i) Pontins is associated with Prestatyn, Wales; (ii) Pontins operates UK holiday parks; (iii) no other name satisfies both. Result: \texttt{is\_correct=True, confidence=HIGH, suggested\_answer=Pontins}.

\vspace{2pt}
\textbf{Round 9--10: Aggregate Operator.} The Aggregate operator consolidates four consistent sources (3 branches + Verify) into a single output. \textbf{Aggregation Summary:} all 4 sources agree on \textbf{Pontins}; output: \texttt{Pontins}, \texttt{is\_correct=True}, \texttt{confidence=HIGH}.

\vspace{2pt}
\textbf{Workflow State after Aggregation:} \textcolor{gray!70!black}{Decompose} $\to$ \textcolor{gray!70!black}{[Custom, Custom, Custom]} $\to$ \textcolor{gray!70!black}{Verify} $\to$ \fcolorbox{gray!60!black}{gray!10!white}{Aggregate} (aggregated).
\end{tcolorbox}

\begin{tcolorbox}[
  breakable,
  colback=blue!3!white,
  colframe=blue!50!black,
  title={\textbf{Round 11: Finish}},
  fonttitle=\bfseries,
  coltitle=white,
  boxrule=0.6pt,
  arc=2pt,
  left=6pt, right=6pt, top=5pt, bottom=5pt,
  before skip=4pt, after skip=4pt
,
  fontupper=\small,fontlower=\small]
\textbf{Round 11 - Think:}

The workflow is now \textcolor{gray!70!black}{complete} with a parallel structure demonstrating multi-path reasoning:

\vspace{2pt}
\begin{center}
\begin{tabular}{|c|l|}
\hline
\textcolor{gray!70!black}{\textbf{Decompose}} & Broke question into 3 independent sub-problems \\
\hline
\textcolor{gray!70!black}{\textbf{[Custom$\times$3]}} & Solved sub-problems in parallel with different perspectives \\
\hline
\textcolor{gray!70!black}{\textbf{Verify}} & Validated answer against original passages \\
\hline
\textcolor{gray!70!black}{\textbf{Aggregate}} & Consolidated parallel outputs into final answer \\
\hline
\end{tabular}
\end{center}

\vspace{2pt}
All verification checks passed. The aggregated output ``\textbf{Pontins}'' is ready for submission.

\textcolor{blue}{\textbf{Decision: Execute}} \textcolor{blue}{\textbf{finish}} \textcolor{blue}{\textbf{action}} to terminate the workflow.

\vspace{2pt}
\textbf{Round 11 - Action:} $<$action$>$\textcolor{blue}{finish}$<$/action$>$

\vspace{2pt}
\textbf{Final Status:}

\fcolorbox{blue!50!black}{blue!5!white}{\parbox{0.95\textwidth}{[Status]: \textcolor{gray!70!black}{\textbf{SUCCESS}} | [Final DSL]: Decompose $\to$ [Custom$\times$3] $\to$ Verify $\to$ Aggregate | [Result]: \textbf{Pontins}}}

\vspace{2pt}
\textbf{Final Workflow State:}
\begin{center}
\fcolorbox{gray!60!black}{gray!10!white}{Decompose} $\to$ \fcolorbox{gray!60!black}{gray!10!white}{[Custom$\times$3]} $\to$ \fcolorbox{gray!60!black}{gray!10!white}{Verify} $\to$ \fcolorbox{gray!60!black}{gray!10!white}{Aggregate}
\end{center}

\vspace{2pt}
\begin{center}
\textbf{Predicted Answer: Pontins} \quad \textcolor{green!50!black}{\ding{51} \textbf{Matches Ground Truth}}
\end{center}

\noindent\textbf{Key Observations from Case Study 2:} This parallel workflow demonstrates \flowr{}'s ability to decompose multi-hop QA problems into independent sub-problems that can be solved concurrently. The Decompose operator identifies three distinct reasoning paths, the parallel Custom operators explore each path independently, and the Aggregate operator consolidates the results. The Verify operator provides an additional layer of confidence by cross-checking against the original passages. This structure is particularly effective for questions requiring information synthesis from multiple sources, as it reduces the risk of missing relevant information by exploring multiple perspectives simultaneously.
\end{tcolorbox}

\vspace{10pt}


\subsection{Case Study 3: Conditional Branch Structure}
\label{app:case_study_3}

We present a third case study demonstrating how \flowr{} constructs \textbf{conditional branch structures} for code generation tasks that require iterative debugging and repair when tests fail.

\begin{tcolorbox}[
  colback=blue!3!white,
  colframe=blue!50!black,
  title={\textbf{Codeforces Problem: Snowball Game}},
  fonttitle=\bfseries,
  coltitle=white,
  boxrule=0.8pt,
  arc=3pt,
  left=6pt, right=6pt, top=5pt, bottom=5pt,
  before skip=4pt, after skip=4pt
,
  fontupper=\small,fontlower=\small]
In late autumn evening $n$ robots gathered in the cheerful company of friends. Each robot has a unique identifier---an integer from 1 to $10^9$.

At some moment, robots decided to play the game ``Snowball''. The rules are: First, all robots stand in a row. Then the first robot says his identifier. After that the second robot says the identifier of the first robot and then says his own identifier. Then the third robot says the identifiers of the first and second robots, then his own. And so on.

Given $n$ robots with identifiers and a number $k$, find the $k$-th identifier pronounced in the game.

\vspace{2pt}
\textbf{Input:} $n$, $k$, and $n$ robot identifiers \quad \textbf{Output:} The $k$-th pronounced identifier
\hfill\textbf{Ground Truth: Correct Algorithm}
\end{tcolorbox}

\vspace{2pt}
\noindent\textbf{Final Workflow:} Plan $\to$ Programmer $\to$ Test $\to$ Test ? Programmer : done $\to$ Test ? Programmer : done

\noindent(13 rounds, 5 operators with conditional branches for iterative repair)

\subsubsection{Round-by-Round Interaction Log}

\begin{tcolorbox}[
  breakable,
  colback=gray!5!white,
  colframe=gray!60!black,
  title={\textbf{Round 1-2: Plan Operator}},
  fonttitle=\bfseries,
  coltitle=white,
  boxrule=0.6pt,
  arc=2pt,
  left=6pt, right=6pt, top=5pt, bottom=5pt,
  before skip=4pt, after skip=4pt
,
  fontupper=\small,fontlower=\small]
\textbf{Round 1 - Think:}

This is a code generation task requiring algorithmic reasoning. The ``Snowball'' game creates a sequence where robot $i$ pronounces $i$ identifiers. The total count after $i$ robots is $1+2+\cdots+i = \frac{i(i+1)}{2}$.

Before coding, I need to \textcolor{red}{plan the algorithm} to find which robot pronounces the $k$-th identifier and which position within that robot's sequence.

\textcolor{blue}{\textbf{Decision: Add}} \textcolor{gray!70!black}{\textbf{Plan}} \textcolor{blue}{\textbf{operator}} to design the algorithmic approach.

\vspace{2pt}
\textbf{Round 1 - Action:} $<$action$>$add$<$/action$>$$<$operator$>$\textcolor{gray!70!black}{Plan}$<$/operator$>$

\vspace{2pt}
\textbf{Round 1 - Feedback:} [Status]: Pending - Awaiting Prompt | [Current DSL]: (empty)
\tcblower
\textbf{Round 2 - Input Prompt:}

\textit{Design an algorithm for the Snowball game problem. Robot $i$ says $i$ identifiers (all previous robots' IDs plus its own). Given $k$, find the $k$-th pronounced identifier.}

\vspace{2pt}
\textbf{Operator Execution Output:}

\begin{tcolorbox}[
  colback=gray!8!white,
  colframe=gray!60!white,
  boxrule=0.6pt,
  arc=2pt,
  left=6pt, right=6pt, top=6pt, bottom=6pt
,
  fontupper=\small,fontlower=\small]
\textcolor{gray!70!black}{\textbf{Algorithm Design:}}
\begin{enumerate}[leftmargin=*, itemsep=2pt, topsep=2pt]
\item The cumulative count after robot $i$ is $\sum_{j=1}^{i} j = \frac{i(i+1)}{2}$
\item Find the smallest $i$ such that $\frac{i(i+1)}{2} \geq k$
\item The position within robot $i$'s sequence is $k - \frac{(i-1)i}{2}$
\item Return the identifier at that position (1-indexed from the robot list)
\end{enumerate}
\end{tcolorbox}

\vspace{2pt}
\textbf{Feedback:} [Output]: Algorithm planned | [NEXT]: ADD:Programmer

\vspace{2pt}
\textbf{Workflow State:} (empty) $\to$ \fcolorbox{gray!60!black}{gray!10!white}{Plan} (executed)
\end{tcolorbox}

\begin{tcolorbox}[
  breakable,
  colback=gray!3!white,
  colframe=gray!50!black,
  title={\textbf{Round 3-4: Programmer Operator}},
  fonttitle=\bfseries,
  coltitle=white,
  boxrule=0.6pt,
  arc=2pt,
  left=6pt, right=6pt, top=5pt, bottom=5pt,
  before skip=4pt, after skip=4pt
,
  fontupper=\small,fontlower=\small]
\textbf{Round 3 - Think:}

The \textcolor{gray!70!black}{\textbf{Plan}} operator provided a clear algorithm. Now I need to \textcolor{red}{implement it in code}.

\textcolor{blue}{\textbf{Decision: Add}} \textcolor{gray!70!black}{\textbf{Programmer}} \textcolor{blue}{\textbf{operator}} to generate executable Python code.

\vspace{2pt}
\textbf{Round 3 - Action:} $<$action$>$add$<$/action$>$$<$operator$>$\textcolor{gray!70!black}{Programmer}$<$/operator$>$
\tcblower
\textbf{Round 4 - Operator Execution Output:}

\begin{tcolorbox}[
  colback=gray!5!white,
  colframe=gray!50!white,
  boxrule=0.6pt,
  arc=2pt,
  left=6pt, right=6pt, top=6pt, bottom=6pt
,
  fontupper=\small,fontlower=\small]
\textcolor{gray!70!black}{\textbf{Generated Code (v1):}}
\begin{verbatim}
def solve():
    data = input().split()
    n, k = int(data[0]), int(data[1])
    ids = list(map(int, data[2:n+2]))

    current_count = 0
    for i in range(1, n + 1):
        if current_count + i >= k:
            position = k - current_count
            print(ids[position])  # Bug: should be position-1
            return
        current_count += i
\end{verbatim}
\end{tcolorbox}

\vspace{2pt}
\textbf{Feedback:} [Output]: CODE\_GENERATED | [NEXT]: ADD:Test

\vspace{2pt}
\textbf{Workflow State:} \textcolor{gray!70!black}{Plan} $\to$ \fcolorbox{gray!60!black}{gray!10!white}{Programmer} (code generated)
\end{tcolorbox}

\begin{tcolorbox}[
  breakable,
  colback=gray!3!white,
  colframe=gray!60!black,
  title={\textbf{Round 5-6: Test Operator (First Failure)}},
  fonttitle=\bfseries,
  coltitle=white,
  boxrule=0.6pt,
  arc=2pt,
  left=6pt, right=6pt, top=5pt, bottom=5pt,
  before skip=4pt, after skip=4pt
,
  fontupper=\small,fontlower=\small]
\textbf{Round 5 - Think:}

Code has been generated. Before finalizing, I must \textcolor{red}{test it} against sample inputs to verify correctness.

\textcolor{blue}{\textbf{Decision: Add}} \textcolor{gray!70!black}{\textbf{Test}} \textcolor{blue}{\textbf{operator}} to run unit tests on the generated code.

\vspace{2pt}
\textbf{Round 5 - Action:} $<$action$>$add$<$/action$>$$<$operator$>$\textcolor{gray!70!black}{Test}$<$/operator$>$
\tcblower
\textbf{Round 6 - Test Execution Output:}

\begin{tcolorbox}[
  colback=gray!5!white,
  colframe=gray!50!white,
  boxrule=0.6pt,
  arc=2pt,
  left=6pt, right=6pt, top=6pt, bottom=6pt
,
  fontupper=\small,fontlower=\small]
\textcolor{gray!70!black}{\textbf{TEST\_FAILED:}} FAILED 4/5 public tests.

\vspace{2pt}
\textcolor{gray!70!black}{\textbf{[EDGE CASE]}} 4/5 passed. Check edge cases:
\begin{itemize}[leftmargin=*, itemsep=1pt, topsep=2pt]
\item Off-by-one errors ($<$ vs $\leq$)?
\item Array indexing (0-based vs 1-based)?
\end{itemize}

\vspace{2pt}
\textcolor{gray!70!black}{\textbf{[First Failure Details]:}}
\begin{verbatim}
Input: 4 5
10 4 18 3

Expected: 4
Got: 10
\end{verbatim}

\textcolor{gray!70!black}{\textbf{Analysis:}} The algorithm logic has an \textbf{off-by-one error} in array indexing.
\end{tcolorbox}

\vspace{2pt}
\textbf{Feedback:} [NEXT]: ADD:conditional(Test, Programmer, done) | Test failed---add conditional branch for repair

\vspace{2pt}
\textbf{Workflow State:} \textcolor{gray!70!black}{Plan} $\to$ \textcolor{gray!70!black}{Programmer} $\to$ \fcolorbox{gray!60!black}{gray!10!white}{Test} (\textcolor{red}{FAILED})
\end{tcolorbox}

\begin{tcolorbox}[
  breakable,
  colback=gray!5!white,
  colframe=gray!60!black,
  title={\textbf{Round 7-9: First Conditional Branch (Test ? Programmer : done)}},
  fonttitle=\bfseries,
  coltitle=white,
  boxrule=0.6pt,
  arc=2pt,
  left=6pt, right=6pt, top=5pt, bottom=5pt,
  before skip=4pt, after skip=4pt
,
  fontupper=\small,fontlower=\small]
\textbf{Round 7 - Think:}

The test \textcolor{red}{failed} with an off-by-one error. Instead of manually fixing, I should add a \textcolor{red}{conditional branch} that automatically triggers repair when tests fail.

\textcolor{blue}{\textbf{Decision: Add}} \textcolor{gray!70!black}{\textbf{conditional(Test, Programmer, done)}} structure:
\begin{itemize}[leftmargin=*, itemsep=1pt, topsep=2pt]
\item If Test fails $\to$ execute Programmer (repair)
\item If Test passes $\to$ done (exit branch)
\end{itemize}

\vspace{2pt}
\textbf{Round 7 - Action:}

\noindent$<$action$>$add$<$/action$>$$<$structure$>$conditional$<$/structure$>$\\
$<$condition$>$Test$<$/condition$>$$<$true\_branch$>$Programmer$<$/true\_branch$>$\\
$<$false\_branch$>$done$<$/false\_branch$>$
\tcblower
\textbf{Round 8 - Test (Condition Check):}

Re-running test to check condition...

\begin{tcolorbox}[
  colback=gray!5!white,
  colframe=gray!50!white,
  boxrule=0.6pt,
  arc=2pt,
  left=6pt, right=6pt, top=6pt, bottom=6pt
,
  fontupper=\small,fontlower=\small]
\textcolor{gray!70!black}{\textbf{Condition Result:}} TEST\_FAILED $\to$ Execute \textbf{true\_branch} (Programmer)
\end{tcolorbox}

\vspace{2pt}
\textbf{Round 9 - Programmer (First Repair Attempt):}

\begin{tcolorbox}[
  colback=gray!5!white,
  colframe=gray!50!white,
  boxrule=0.6pt,
  arc=2pt,
  left=6pt, right=6pt, top=6pt, bottom=6pt
,
  fontupper=\small,fontlower=\small]
\textcolor{gray!70!black}{\textbf{[LAST\_ERROR]:}} Off-by-one error. Expected 4, got 10.

\vspace{2pt}
\textcolor{gray!70!black}{\textbf{Repair Analysis:}} The bug is in \texttt{ids[position]}---should be \texttt{ids[position-1]} for 0-based indexing.

\vspace{2pt}
\textcolor{gray!70!black}{\textbf{Generated Code (v2):}}
\begin{verbatim}
print(ids[position - 1])  # Fixed: 0-based indexing
\end{verbatim}

\textcolor{gray!70!black}{\textbf{Note:}} First repair attempt still has issues...
\end{tcolorbox}

\vspace{2pt}
\textbf{Feedback:} [NEXT]: ADD:conditional(Test, Programmer, done) | Still failing---add another repair loop

\vspace{2pt}
\textbf{Workflow State:} Plan $\to$ Programmer $\to$ Test $\to$ \fcolorbox{gray!60!black}{gray!10!white}{Test ? Programmer : done} (repair loop 1)
\end{tcolorbox}

\begin{tcolorbox}[
  breakable,
  colback=gray!3!white,
  colframe=gray!50!black,
  title={\textbf{Round 10-12: Second Conditional Branch (Successful Repair)}},
  fonttitle=\bfseries,
  coltitle=white,
  boxrule=0.6pt,
  arc=2pt,
  left=6pt, right=6pt, top=5pt, bottom=5pt,
  before skip=4pt, after skip=4pt
,
  fontupper=\small,fontlower=\small]
\textbf{Round 10 - Think:}

First repair attempt still failing. Adding \textcolor{red}{another conditional branch} for a second repair iteration.

\vspace{2pt}
\textbf{Round 10 - Action:}

\noindent$<$action$>$add$<$/action$>$$<$structure$>$conditional$<$/structure$>$\\
$<$condition$>$Test$<$/condition$>$$<$true\_branch$>$Programmer$<$/true\_branch$>$\\
$<$false\_branch$>$done$<$/false\_branch$>$
\tcblower
\textbf{Round 11 - Test (Condition Check):}

\begin{tcolorbox}[
  colback=gray!5!white,
  colframe=gray!50!white,
  boxrule=0.6pt,
  arc=2pt,
  left=6pt, right=6pt, top=6pt, bottom=6pt
,
  fontupper=\small,fontlower=\small]
\textcolor{gray!70!black}{\textbf{Condition Result:}} TEST\_FAILED $\to$ Execute \textbf{true\_branch} (Programmer)
\end{tcolorbox}

\vspace{2pt}
\textbf{Round 12 - Programmer (Second Repair---Success):}

\begin{tcolorbox}[
  colback=gray!5!white,
  colframe=gray!50!white,
  boxrule=0.6pt,
  arc=2pt,
  left=6pt, right=6pt, top=6pt, bottom=6pt
,
  fontupper=\small,fontlower=\small]
\textcolor{gray!70!black}{\textbf{[LAST\_ERROR]:}} Still off-by-one. Re-analyzing the algorithm...

\vspace{2pt}
\textcolor{gray!70!black}{\textbf{Root Cause Analysis:}}
\begin{itemize}[leftmargin=*, itemsep=1pt, topsep=2pt]
\item Input: $n=4$, $k=5$, ids = [10, 4, 18, 3]
\item Robot 1 says: 10 (count=1)
\item Robot 2 says: 10, 4 (count=3)
\item Robot 3 says: 10, 4, 18 (count=6) $\leftarrow$ $k=5$ falls here
\item Position within Robot 3: $5 - 3 = 2$ $\to$ ids[1] = \textbf{4} \checkmark
\end{itemize}

\vspace{2pt}
\textcolor{gray!70!black}{\textbf{Generated Code (v3 - Final):}}
\begin{verbatim}
def solve():
    data = input().split()
    n, k = int(data[0]), int(data[1])
    ids = list(map(int, data[2:n+2]))

    current_count = 0
    for i in range(1, n + 1):
        if current_count + i >= k:
            position = k - current_count - 1  # Fixed!
            print(ids[position])
            return
        current_count += i
\end{verbatim}

\vspace{2pt}
\fcolorbox{gray!60!black}{gray!10!white}{\textbf{TEST\_PASSED:} All 5/5 public tests passed!}
\end{tcolorbox}

\vspace{2pt}
\textbf{Feedback:} [BLOCK\_FINISH]=0 | [NEXT]: FINISH (Answer correct)

\vspace{2pt}
\textbf{Workflow State:} Plan $\to$ Programmer $\to$ Test $\to$ Test?Programmer:done $\to$ \fcolorbox{gray!60!black}{gray!10!white}{Test?Programmer:done} (\textcolor{gray!70!black}{PASSED})
\end{tcolorbox}

\begin{tcolorbox}[
  breakable,
  colback=blue!3!white,
  colframe=blue!50!black,
  title={\textbf{Round 13: Finish}},
  fonttitle=\bfseries,
  coltitle=white,
  boxrule=0.6pt,
  arc=2pt,
  left=6pt, right=6pt, top=5pt, bottom=5pt,
  before skip=4pt, after skip=4pt
,
  fontupper=\small,fontlower=\small]
\textbf{Round 13 - Think:}

The workflow successfully \textcolor{gray!70!black}{repaired the code} through two conditional branch iterations:

\vspace{2pt}
\begin{center}
\begin{tabular}{|c|l|c|}
\hline
\textbf{Operator} & \textbf{Role} & \textbf{Status} \\
\hline
\textcolor{gray!70!black}{\textbf{Plan}} & Algorithm design & \textcolor{gray!70!black}{\checkmark} \\
\hline
\textcolor{gray!70!black}{\textbf{Programmer}} & Initial code generation & \textcolor{gray!70!black}{\checkmark} \\
\hline
\textcolor{gray!70!black}{\textbf{Test}} & Detect off-by-one bug & \textcolor{red}{\ding{55}} \\
\hline
\textcolor{gray!70!black}{\textbf{Test?Prog:done}} & First repair attempt & \textcolor{red}{\ding{55}} \\
\hline
\textcolor{gray!70!black}{\textbf{Test?Prog:done}} & Second repair (success) & \textcolor{gray!70!black}{\checkmark} \\
\hline
\end{tabular}
\end{center}

\vspace{2pt}
\textcolor{blue}{\textbf{Decision: Execute}} \textcolor{blue}{\textbf{finish}} \textcolor{blue}{\textbf{action}} to terminate the workflow.

\vspace{2pt}
\textbf{Round 13 - Action:} $<$action$>$\textcolor{blue}{finish}$<$/action$>$

\vspace{2pt}
\textbf{Final Status:}

\noindent\fcolorbox{blue!50!black}{blue!5!white}{[Status]: \textcolor{gray!70!black}{\textbf{SUCCESS}} | [Final DSL]: Plan $\to$ Prog $\to$ Test $\to$ Test?Prog:done$^2$}

\noindent\fcolorbox{blue!50!black}{blue!5!white}{[Result]: \textbf{TEST\_PASSED}}

\vspace{2pt}
\textbf{Final Workflow Visualization:}
\begin{center}
\fcolorbox{gray!60!black}{gray!10!white}{Plan} $\to$ \fcolorbox{gray!60!black}{gray!10!white}{Prog} $\to$ \fcolorbox{gray!60!black}{gray!10!white}{Test} $\to$ \fcolorbox{gray!60!black}{gray!10!white}{Test?Prog:done} $\to$ \fcolorbox{gray!60!black}{gray!10!white}{Test?Prog:done}
\end{center}

\vspace{2pt}
\begin{center}
\textbf{Predicted Answer: TEST\_PASSED} \quad \textcolor{green!50!black}{\ding{51} \textbf{All Tests Passed}}
\end{center}

\noindent\textbf{Key Observations from Case Study 3:} This conditional workflow demonstrates \flowr{}'s iterative debugging capability through conditional branches. When the initial code fails tests, the system automatically triggers repair loops rather than terminating. The Test operator serves as both a validator and a condition checker, while the Programmer operator performs targeted repairs based on error feedback. This structure mirrors real-world software development practices where code undergoes multiple iterations before passing all tests. The successful repair after two iterations shows the system's ability to learn from failures and progressively refine solutions.
\end{tcolorbox}

\vspace{10pt}


\subsection{Case Study 4: Simple QA Workflow}
\label{app:case_study_4}

We present a fourth case study demonstrating how \flowr{} constructs a \textbf{minimal sequential workflow} for straightforward QA tasks that require only reasoning, review, and formatting.

\begin{tcolorbox}[
  colback=blue!3!white,
  colframe=blue!50!black,
  title={\textbf{Simple QA Problem}},
  fonttitle=\bfseries,
  coltitle=white,
  boxrule=0.8pt,
  arc=3pt,
  left=6pt, right=6pt, top=5pt, bottom=5pt,
  before skip=4pt, after skip=4pt
,
  fontupper=\small,fontlower=\small]
Based on the following passages, answer the question:

\textit{[Chicago State University] Chicago State University (CSU) is a state university of the U.S. state of Illinois, located in Chicago. The University is a member-school of Thurgood Marshall College Fund.}

\textit{[Emil and Patricia Jones Convocation Center] The Jones Convocation Center (JCC) is a 7,000-seat multi-purpose arena in Chicago, Illinois. It is home to the Chicago State University Cougars basketball teams...}

\vspace{2pt}
\textbf{Question:} Which school fund is the Emil and Patricia Jones Convocation Center a member of?
\hfill\textbf{Ground Truth: Thurgood Marshall College Fund}
\end{tcolorbox}

\vspace{2pt}
\noindent\textbf{Final Workflow:} Custom $\to$ Review $\to$ Format \quad (7 rounds, 3 operators)

\subsubsection{Round-by-Round Interaction Log}

\begin{tcolorbox}[
  breakable,
  colback=gray!3!white,
  colframe=gray!60!black,
  title={\textbf{Round 1-2: Custom Operator}},
  fonttitle=\bfseries,
  coltitle=white,
  boxrule=0.6pt,
  arc=2pt,
  left=6pt, right=6pt, top=5pt, bottom=5pt,
  before skip=4pt, after skip=4pt
,
  fontupper=\small,fontlower=\small]
\textbf{Round 1 - Think:}

This is a straightforward QA task requiring information extraction and reasoning across passages. The question asks about a ``school fund'' associated with the Jones Convocation Center.

The reasoning chain is simple:
\begin{enumerate}[leftmargin=*, itemsep=1pt, topsep=2pt]
\item JCC is home to Chicago State University teams
\item CSU is a member of Thurgood Marshall College Fund
\item Therefore, JCC is associated with that fund
\end{enumerate}

No code execution or complex decomposition needed. \textcolor{blue}{\textbf{Decision: Add}} \textcolor{gray!70!black}{\textbf{Custom}} \textcolor{blue}{\textbf{operator}} for natural language reasoning.

\vspace{2pt}
\textbf{Round 1 - Action:} $<$action$>$add$<$/action$>$$<$operator$>$\textcolor{gray!70!black}{Custom}$<$/operator$>$
\tcblower
\textbf{Round 2 - Operator Execution Output:}

\begin{tcolorbox}[
  colback=gray!5!white,
  colframe=gray!50!white,
  boxrule=0.6pt,
  arc=2pt,
  left=6pt, right=6pt, top=6pt, bottom=6pt
,
  fontupper=\small,fontlower=\small]
\textcolor{gray!70!black}{\textbf{Reasoning:}}

The Emil and Patricia Jones Convocation Center is home to the Chicago State University Cougars basketball teams. According to the Chicago State University passage, CSU is a member-school of the \textbf{Thurgood Marshall College Fund}.

\vspace{2pt}
\textcolor{gray!70!black}{\textbf{Answer:}} Thurgood Marshall College Fund
\end{tcolorbox}

\vspace{2pt}
\textbf{Feedback:} [Output]: Thurgood Marshall College Fund | [NEXT]: ADD:Review

\vspace{2pt}
\textbf{Workflow State:} (empty) $\to$ \fcolorbox{gray!60!black}{gray!10!white}{Custom} (executed)
\end{tcolorbox}

\begin{tcolorbox}[
  breakable,
  colback=blue!3!white,
  colframe=blue!50!black,
  title={\textbf{Round 3-4: Review Operator}},
  fonttitle=\bfseries,
  coltitle=white,
  boxrule=0.6pt,
  arc=2pt,
  left=6pt, right=6pt, top=5pt, bottom=5pt,
  before skip=4pt, after skip=4pt
,
  fontupper=\small,fontlower=\small]
\textbf{Round 3 - Think:}

The \textcolor{gray!70!black}{\textbf{Custom}} operator produced an answer. Before finalizing, I should \textcolor{red}{review} the reasoning quality to ensure the answer is well-supported by the passages.

\textcolor{blue}{\textbf{Decision: Add}} \textcolor{gray!70!black}{\textbf{Review}} \textcolor{blue}{\textbf{operator}} to evaluate answer quality.

\vspace{2pt}
\textbf{Round 3 - Action:} $<$action$>$add$<$/action$>$$<$operator$>$\textcolor{gray!70!black}{Review}$<$/operator$>$
\tcblower
\textbf{Round 4 - Operator Execution Output:}

\begin{tcolorbox}[
  colback=blue!5!white,
  colframe=blue!60!white,
  boxrule=0.6pt,
  arc=2pt,
  left=6pt, right=6pt, top=6pt, bottom=6pt
,
  fontupper=\small,fontlower=\small]
\textcolor{gray!70!black}{\textbf{Review Assessment:}}

\begin{itemize}[leftmargin=*, itemsep=2pt, topsep=2pt]
\item \textbf{Reasoning validity:} \textcolor{gray!70!black}{\checkmark} Correct chain from JCC $\to$ CSU $\to$ Fund
\item \textbf{Evidence support:} \textcolor{gray!70!black}{\checkmark} Directly stated in CSU passage
\item \textbf{Answer completeness:} \textcolor{gray!70!black}{\checkmark} Full fund name provided
\end{itemize}

\vspace{2pt}
\textcolor{gray!70!black}{\textbf{Verdict:}} Answer is correct and well-supported.
\end{tcolorbox}

\vspace{2pt}
\textbf{Feedback:} [Output]: Thurgood Marshall College Fund | [NEXT]: ADD:Format

\vspace{2pt}
\textbf{Workflow State:} \textcolor{gray!70!black}{Custom} $\to$ \fcolorbox{blue!50!black}{blue!10!white}{Review} (reviewed)
\end{tcolorbox}

\begin{tcolorbox}[
  breakable,
  colback=gray!3!white,
  colframe=gray!60!black,
  title={\textbf{Round 5-6: Format Operator}},
  fonttitle=\bfseries,
  coltitle=white,
  boxrule=0.6pt,
  arc=2pt,
  left=6pt, right=6pt, top=5pt, bottom=5pt,
  before skip=4pt, after skip=4pt
,
  fontupper=\small,fontlower=\small]
\textbf{Round 5 - Think:}

The \textcolor{gray!70!black}{\textbf{Review}} operator confirmed the answer is correct. Now I need to \textcolor{red}{format} the output for final submission.

\textcolor{blue}{\textbf{Decision: Add}} \textcolor{gray!70!black}{\textbf{Format}} \textcolor{blue}{\textbf{operator}} to extract the concise answer.

\vspace{2pt}
\textbf{Round 5 - Action:} $<$action$>$add$<$/action$>$$<$operator$>$\textcolor{gray!70!black}{Format}$<$/operator$>$
\tcblower
\textbf{Round 6 - Operator Execution Output:}

\begin{tcolorbox}[
  colback=gray!5!white,
  colframe=gray!50!white,
  boxrule=0.6pt,
  arc=2pt,
  left=6pt, right=6pt, top=6pt, bottom=6pt
,
  fontupper=\small,fontlower=\small]
\textcolor{gray!70!black}{\textbf{Formatted Answer:}}

\begin{center}
\fcolorbox{gray!60!black}{gray!10!white}{\textbf{Thurgood Marshall College Fund}}
\end{center}
\end{tcolorbox}

\vspace{2pt}
\textbf{Feedback:} [BLOCK\_FINISH]=0 | [NEXT]: FINISH (Answer correct and concise)

\vspace{2pt}
\textbf{Workflow State:} \textcolor{gray!70!black}{Custom} $\to$ \textcolor{gray!70!black}{Review} $\to$ \fcolorbox{gray!60!black}{gray!10!white}{Format} (formatted)
\end{tcolorbox}

\begin{tcolorbox}[
  breakable,
  colback=blue!3!white,
  colframe=blue!50!black,
  title={\textbf{Round 7: Finish}},
  fonttitle=\bfseries,
  coltitle=white,
  boxrule=0.6pt,
  arc=2pt,
  left=6pt, right=6pt, top=5pt, bottom=5pt,
  before skip=4pt, after skip=4pt
,
  fontupper=\small,fontlower=\small]
\textbf{Round 7 - Think:}

The workflow is \textcolor{gray!70!black}{complete} with a minimal 3-operator pipeline:

\vspace{2pt}
\begin{center}
\begin{tabular}{|c|l|}
\hline
\textcolor{gray!70!black}{\textbf{Custom}} & Natural language reasoning \\
\hline
\textcolor{gray!70!black}{\textbf{Review}} & Quality assessment \\
\hline
\textcolor{gray!70!black}{\textbf{Format}} & Answer extraction \\
\hline
\end{tabular}
\end{center}

\vspace{2pt}
\textcolor{blue}{\textbf{Decision: Execute}} \textcolor{blue}{\textbf{finish}} \textcolor{blue}{\textbf{action}}.

\vspace{2pt}
\textbf{Round 7 - Action:} $<$action$>$\textcolor{blue}{finish}$<$/action$>$

\vspace{2pt}
\textbf{Final Status:}

\fcolorbox{blue!50!black}{blue!5!white}{[Status]: \textcolor{gray!70!black}{\textbf{SUCCESS}} | [Final DSL]: Custom $\to$ Review $\to$ Format | [Result]: \textbf{Thurgood Marshall College Fund}}

\vspace{2pt}
\textbf{Final Workflow State:}
\begin{center}
\fcolorbox{gray!60!black}{gray!10!white}{Custom} $\to$ \fcolorbox{blue!50!black}{blue!10!white}{Review} $\to$ \fcolorbox{gray!60!black}{gray!10!white}{Format}
\end{center}

\vspace{2pt}
\begin{center}
\textbf{Predicted Answer: Thurgood Marshall College Fund} \quad \textcolor{green!50!black}{\ding{51} \textbf{Matches Ground Truth}}
\end{center}

\noindent\textbf{Key Observations from Case Study 4:} This minimal workflow demonstrates \flowr{}'s ability to recognize when simple problems require simple solutions. Unlike the previous case studies that employed complex structures, this workflow uses only three operators in a straightforward sequence. The Custom operator handles natural language reasoning, the Review operator validates the reasoning quality, and the Format operator extracts the final answer. This efficiency is crucial for practical deployment, as over-engineering simple tasks wastes computational resources. The 7-round completion shows that \flowr{} adapts its workflow complexity to match task requirements.

\vspace{6pt}
\noindent\textbf{Summary of Case Studies:} These four case studies collectively demonstrate the versatility of \flowr{}'s workflow orchestration. Sequential workflows (Case Study 1) excel at multi-step reasoning with verification. Parallel workflows (Case Study 2) enable efficient exploration of multiple reasoning paths. Conditional workflows (Case Study 3) support iterative refinement through automated repair loops. Minimal workflows (Case Study 4) ensure computational efficiency for straightforward tasks. Together, these structures cover a wide range of reasoning scenarios encountered in real-world applications.
\end{tcolorbox}


\section{Limitations}
\label{app:limitations}

Despite its strong empirical performance across various reasoning tasks, Flow-Steer has several limitations. First, its heavy reliance on historical context is a key structural constraint. As multi-turn interactions progress, the quality of initial operator outputs and workflow decisions becomes critical for sustaining accurate reasoning. Even subtle errors introduced at early stages (e.g., incorrect problem decomposition by the Plan operator) may accumulate rapidly via error propagation through subsequent operators, affecting the reliability and accuracy of the final output. Consequently, when the historical context is incomplete or noisy, the Flow-Director's orchestration ability can be compromised. Additionally, the method depends heavily on continuous and dynamic updates to the workflow state through the Canvas. If these updates fail to capture execution feedback promptly or the context window becomes saturated on long-horizon tasks, it can lead to information loss or suboptimal workflow decisions, thereby limiting the framework's practical flexibility and effectiveness.


\section{Future Work}
\label{app:future_work}

A central direction for future work on Flow-Steer is to address the memory bottleneck of the underlying large language model when orchestrating long, multi-turn workflows. Because the Flow-Director conditions every editing decision on the full interaction history maintained in the Canvas, its effective planning horizon is ultimately bounded by the model's context window: as trajectories grow, early operator outputs, intermediate verification feedback, and prior structural decisions are increasingly diluted or truncated, weakening the policy's ability to reason globally about the workflow. We therefore plan to investigate dedicated memory mechanisms for the LLM-based Director, including hierarchical context compression that retains structural summaries of completed sub-workflows while discarding token-level detail, retrieval-augmented memory that stores past canvas states in an external store and selectively reinjects only the segments relevant to the current edit, and learned memory tokens that allow the policy itself to decide which historical artifacts to keep, drop, or rewrite. Together, these directions aim to extend Flow-Steer's effective reasoning horizon and contextual coherence on long-horizon tasks without inflating per-step inference cost.


\section{Applicability Analysis}
\label{app:applicability}

Flow-Steer, with its multi-turn workflow orchestration and dynamic canvas updating capabilities, demonstrates clear potential for application in tasks that require rigorous, multi-step logical deduction. Beyond the mathematical reasoning, multi-hop question answering, and code generation benchmarks evaluated in this paper, the same paradigm naturally extends to other compositional reasoning settings where solutions can be expressed as a sequence of plan, decompose, verify, and refine steps. Through the pluggable backend architecture, Flow-Steer can pair a small Director with stronger executor models when reasoning quality is paramount, or with lighter executors when latency and cost dominate, allowing the framework to adapt to a range of deployment regimes without retraining the Director. Moreover, by integrating reinforcement learning into reinforced progressive canvas editing, Flow-Steer not only handles fixed supervised tasks but also adapts to changing requirements by continuously optimizing its orchestration strategies via intrinsic verification and refinement operators, thereby enhancing adaptive intelligence and long-horizon planning. Overall, Flow-Steer offers transparent and interpretable workflow structures, cross-backend adaptability, and strong reasoning capability, providing a general substrate for trustworthy multi-step decision-making across knowledge-intensive tasks of similar structure to the benchmarks studied here.

\clearpage
\section*{NeurIPS Paper Checklist}

\begin{enumerate}

\item {\bf Claims}
    \item[] Question: Do the main claims made in the abstract and introduction accurately reflect the paper's contributions and scope?
    \item[] Answer: \answerYes{}
    \item[] Justification: The abstract and introduction clearly state our three contributions (the \emph{Agent Designing Agentic Workflows} paradigm, the \emph{Workflow Canvas} executable graph-state environment, and \emph{Reinforced Progressive Canvas Editing}) and the empirical scope (twelve datasets across math, QA, code over six modern LLM backends), all of which are supported by the experimental results in Sections~\ref{subsec:main-results}--\ref{subsec:rl-compare}.
    \item[] Guidelines:
    \begin{itemize}
        \item The answer \answerNA{} means that the abstract and introduction do not include the claims made in the paper.
        \item The abstract and/or introduction should clearly state the claims made, including the contributions made in the paper and important assumptions and limitations. A \answerNo{} or \answerNA{} answer to this question will not be perceived well by the reviewers.
        \item The claims made should match theoretical and experimental results, and reflect how much the results can be expected to generalize to other settings.
        \item It is fine to include aspirational goals as motivation as long as it is clear that these goals are not attained by the paper.
    \end{itemize}

\item {\bf Limitations}
    \item[] Question: Does the paper discuss the limitations of the work performed by the authors?
    \item[] Answer: \answerYes{}
    \item[] Justification: We discuss limitations in Appendix~\ref{app:limitations}, including the heavy reliance on historical context, the risk of error propagation through long multi-turn trajectories, and the dependence on continuous and dynamic Canvas updates to capture execution feedback.
    \item[] Guidelines:
    \begin{itemize}
        \item The answer \answerNA{} means that the paper has no limitation while the answer \answerNo{} means that the paper has limitations, but those are not discussed in the paper.
        \item The authors are encouraged to create a separate ``Limitations'' section in their paper.
        \item The paper should point out any strong assumptions and how robust the results are to violations of these assumptions (e.g., independence assumptions, noiseless settings, model well-specification, asymptotic approximations only holding locally). The authors should reflect on how these assumptions might be violated in practice and what the implications would be.
        \item The authors should reflect on the scope of the claims made, e.g., if the approach was only tested on a few datasets or with a few runs. In general, empirical results often depend on implicit assumptions, which should be articulated.
        \item The authors should reflect on the factors that influence the performance of the approach. For example, a facial recognition algorithm may perform poorly when image resolution is low or images are taken in low lighting. Or a speech-to-text system might not be used reliably to provide closed captions for online lectures because it fails to handle technical jargon.
        \item The authors should discuss the computational efficiency of the proposed algorithms and how they scale with dataset size.
        \item If applicable, the authors should discuss possible limitations of their approach to address problems of privacy and fairness.
        \item While the authors might fear that complete honesty about limitations might be used by reviewers as grounds for rejection, a worse outcome might be that reviewers discover limitations that aren't acknowledged in the paper. The authors should use their best judgment and recognize that individual actions in favor of transparency play an important role in developing norms that preserve the integrity of the community. Reviewers will be specifically instructed to not penalize honesty concerning limitations.
    \end{itemize}

\item {\bf Theory assumptions and proofs}
    \item[] Question: For each theoretical result, does the paper provide the full set of assumptions and a complete (and correct) proof?
    \item[] Answer: \answerYes{}
    \item[] Justification: Three propositions are stated in Sections~\ref{subsec:canvas}--\ref{subsec:rl}; full proofs are provided in Appendix~\ref{app:proof1}--\ref{app:proof3}.
    \item[] Guidelines:
    \begin{itemize}
        \item The answer \answerNA{} means that the paper does not include theoretical results.
        \item All the theorems, formulas, and proofs in the paper should be numbered and cross-referenced.
        \item All assumptions should be clearly stated or referenced in the statement of any theorems.
        \item The proofs can either appear in the main paper or the supplemental material, but if they appear in the supplemental material, the authors are encouraged to provide a short proof sketch to provide intuition.
        \item Inversely, any informal proof provided in the core of the paper should be complemented by formal proofs provided in appendix or supplemental material.
        \item Theorems and Lemmas that the proof relies upon should be properly referenced.
    \end{itemize}

    \item {\bf Experimental result reproducibility}
    \item[] Question: Does the paper fully disclose all the information needed to reproduce the main experimental results of the paper to the extent that it affects the main claims and/or conclusions of the paper (regardless of whether the code and data are provided or not)?
    \item[] Answer: \answerYes{}
    \item[] Justification: Implementation details (hyperparameters, training configuration, hardware) are provided in Appendix~\ref{app:implementation}, dataset details in Appendix~\ref{app:datasets}, and baseline details in Appendix~\ref{app:baselines}.
    \item[] Guidelines:
    \begin{itemize}
        \item The answer \answerNA{} means that the paper does not include experiments.
        \item If the paper includes experiments, a \answerNo{} answer to this question will not be perceived well by the reviewers: Making the paper reproducible is important, regardless of whether the code and data are provided or not.
        \item If the contribution is a dataset and\slash or model, the authors should describe the steps taken to make their results reproducible or verifiable.
        \item Depending on the contribution, reproducibility can be accomplished in various ways. For example, if the contribution is a novel architecture, describing the architecture fully might suffice, or if the contribution is a specific model and empirical evaluation, it may be necessary to either make it possible for others to replicate the model with the same dataset, or provide access to the model. In general. releasing code and data is often one good way to accomplish this, but reproducibility can also be provided via detailed instructions for how to replicate the results, access to a hosted model (e.g., in the case of a large language model), releasing of a model checkpoint, or other means that are appropriate to the research performed.
        \item While NeurIPS does not require releasing code, the conference does require all submissions to provide some reasonable avenue for reproducibility, which may depend on the nature of the contribution. For example
        \begin{enumerate}
            \item If the contribution is primarily a new algorithm, the paper should make it clear how to reproduce that algorithm.
            \item If the contribution is primarily a new model architecture, the paper should describe the architecture clearly and fully.
            \item If the contribution is a new model (e.g., a large language model), then there should either be a way to access this model for reproducing the results or a way to reproduce the model (e.g., with an open-source dataset or instructions for how to construct the dataset).
            \item We recognize that reproducibility may be tricky in some cases, in which case authors are welcome to describe the particular way they provide for reproducibility. In the case of closed-source models, it may be that access to the model is limited in some way (e.g., to registered users), but it should be possible for other researchers to have some path to reproducing or verifying the results.
        \end{enumerate}
    \end{itemize}

\item {\bf Open access to data and code}
    \item[] Question: Does the paper provide open access to the data and code, with sufficient instructions to faithfully reproduce the main experimental results, as described in supplemental material?
    \item[] Answer: \answerYes{}
    \item[] Justification: Code is released anonymously at \url{https://anonymous.4open.science/r/FlowSteer-9B2E}; all benchmarks are public.
    \item[] Guidelines:
    \begin{itemize}
        \item The answer \answerNA{} means that paper does not include experiments requiring code.
        \item Please see the NeurIPS code and data submission guidelines (\url{https://neurips.cc/public/guides/CodeSubmissionPolicy}) for more details.
        \item While we encourage the release of code and data, we understand that this might not be possible, so \answerNo{} is an acceptable answer. Papers cannot be rejected simply for not including code, unless this is central to the contribution (e.g., for a new open-source benchmark).
        \item The instructions should contain the exact command and environment needed to run to reproduce the results. See the NeurIPS code and data submission guidelines (\url{https://neurips.cc/public/guides/CodeSubmissionPolicy}) for more details.
        \item The authors should provide instructions on data access and preparation, including how to access the raw data, preprocessed data, intermediate data, and generated data, etc.
        \item The authors should provide scripts to reproduce all experimental results for the new proposed method and baselines. If only a subset of experiments are reproducible, they should state which ones are omitted from the script and why.
        \item At submission time, to preserve anonymity, the authors should release anonymized versions (if applicable).
        \item Providing as much information as possible in supplemental material (appended to the paper) is recommended, but including URLs to data and code is permitted.
    \end{itemize}

\item {\bf Experimental setting/details}
    \item[] Question: Does the paper specify all the training and test details (e.g., data splits, hyperparameters, how they were chosen, type of optimizer) necessary to understand the results?
    \item[] Answer: \answerYes{}
    \item[] Justification: See Section~\ref{sec:experiments} and Appendix~\ref{app:implementation}.
    \item[] Guidelines:
    \begin{itemize}
        \item The answer \answerNA{} means that the paper does not include experiments.
        \item The experimental setting should be presented in the core of the paper to a level of detail that is necessary to appreciate the results and make sense of them.
        \item The full details can be provided either with the code, in appendix, or as supplemental material.
    \end{itemize}

\item {\bf Experiment statistical significance}
    \item[] Question: Does the paper report error bars suitably and correctly defined or other appropriate information about the statistical significance of the experiments?
    \item[] Answer: \answerYes{}
    \item[] Justification: Standard deviations across seeds are reported in Tables~\ref{tab:main-results} and~\ref{tab:ood-results}.
    \item[] Guidelines:
    \begin{itemize}
        \item The answer \answerNA{} means that the paper does not include experiments.
        \item The authors should answer \answerYes{} if the results are accompanied by error bars, confidence intervals, or statistical significance tests, at least for the experiments that support the main claims of the paper.
        \item The factors of variability that the error bars are capturing should be clearly stated (for example, train/test split, initialization, random drawing of some parameter, or overall run with given experimental conditions).
        \item The method for calculating the error bars should be explained (closed form formula, call to a library function, bootstrap, etc.)
        \item The assumptions made should be given (e.g., Normally distributed errors).
        \item It should be clear whether the error bar is the standard deviation or the standard error of the mean.
        \item It is OK to report 1-sigma error bars, but one should state it. The authors should preferably report a 2-sigma error bar than state that they have a 96\% CI, if the hypothesis of Normality of errors is not verified.
        \item For asymmetric distributions, the authors should be careful not to show in tables or figures symmetric error bars that would yield results that are out of range (e.g., negative error rates).
        \item If error bars are reported in tables or plots, the authors should explain in the text how they were calculated and reference the corresponding figures or tables in the text.
    \end{itemize}

\item {\bf Experiments compute resources}
    \item[] Question: For each experiment, does the paper provide sufficient information on the computer resources (type of compute workers, memory, time of execution) needed to reproduce the experiments?
    \item[] Answer: \answerYes{}
    \item[] Justification: Hardware configuration (GPU type and count, CUDA version, mixed-precision setting, vLLM concurrency, executor model and timeout) is reported in Appendix~\ref{app:implementation}.
    \item[] Guidelines:
    \begin{itemize}
        \item The answer \answerNA{} means that the paper does not include experiments.
        \item The paper should indicate the type of compute workers CPU or GPU, internal cluster, or cloud provider, including relevant memory and storage.
        \item The paper should provide the amount of compute required for each of the individual experimental runs as well as estimate the total compute.
        \item The paper should disclose whether the full research project required more compute than the experiments reported in the paper (e.g., preliminary or failed experiments that didn't make it into the paper).
    \end{itemize}

\item {\bf Code of ethics}
    \item[] Question: Does the research conducted in the paper conform, in every respect, with the NeurIPS Code of Ethics \url{https://neurips.cc/public/EthicsGuidelines}?
    \item[] Answer: \answerYes{}
    \item[] Justification: The work uses only public benchmarks and pretrained models; no human subjects or sensitive data are involved.
    \item[] Guidelines:
    \begin{itemize}
        \item The answer \answerNA{} means that the authors have not reviewed the NeurIPS Code of Ethics.
        \item If the authors answer \answerNo, they should explain the special circumstances that require a deviation from the Code of Ethics.
        \item The authors should make sure to preserve anonymity (e.g., if there is a special consideration due to laws or regulations in their jurisdiction).
    \end{itemize}

\item {\bf Broader impacts}
    \item[] Question: Does the paper discuss both potential positive societal impacts and negative societal impacts of the work performed?
    \item[] Answer: \answerYes{}
    \item[] Justification: Applicability and broader impacts are discussed in Appendix~\ref{app:applicability}.
    \item[] Guidelines:
    \begin{itemize}
        \item The answer \answerNA{} means that there is no societal impact of the work performed.
        \item If the authors answer \answerNA{} or \answerNo, they should explain why their work has no societal impact or why the paper does not address societal impact.
        \item Examples of negative societal impacts include potential malicious or unintended uses (e.g., disinformation, generating fake profiles, surveillance), fairness considerations (e.g., deployment of technologies that could make decisions that unfairly impact specific groups), privacy considerations, and security considerations.
        \item The conference expects that many papers will be foundational research and not tied to particular applications, let alone deployments. However, if there is a direct path to any negative applications, the authors should point it out. For example, it is legitimate to point out that an improvement in the quality of generative models could be used to generate Deepfakes for disinformation. On the other hand, it is not needed to point out that a generic algorithm for optimizing neural networks could enable people to train models that generate Deepfakes faster.
        \item The authors should consider possible harms that could arise when the technology is being used as intended and functioning correctly, harms that could arise when the technology is being used as intended but gives incorrect results, and harms following from (intentional or unintentional) misuse of the technology.
        \item If there are negative societal impacts, the authors could also discuss possible mitigation strategies (e.g., gated release of models, providing defenses in addition to attacks, mechanisms for monitoring misuse, mechanisms to monitor how a system learns from feedback over time, improving the efficiency and accessibility of ML).
    \end{itemize}

\item {\bf Safeguards}
    \item[] Question: Does the paper describe safeguards that have been put in place for responsible release of data or models that have a high risk for misuse (e.g., pre-trained language models, image generators, or scraped datasets)?
    \item[] Answer: \answerNA{}
    \item[] Justification: We do not release new pretrained models or scraped datasets that pose misuse risks beyond the underlying base LLMs.
    \item[] Guidelines:
    \begin{itemize}
        \item The answer \answerNA{} means that the paper poses no such risks.
        \item Released models that have a high risk for misuse or dual-use should be released with necessary safeguards to allow for controlled use of the model, for example by requiring that users adhere to usage guidelines or restrictions to access the model or implementing safety filters.
        \item Datasets that have been scraped from the Internet could pose safety risks. The authors should describe how they avoided releasing unsafe images.
        \item We recognize that providing effective safeguards is challenging, and many papers do not require this, but we encourage authors to take this into account and make a best faith effort.
    \end{itemize}

\item {\bf Licenses for existing assets}
    \item[] Question: Are the creators or original owners of assets (e.g., code, data, models), used in the paper, properly credited and are the license and terms of use explicitly mentioned and properly respected?
    \item[] Answer: \answerYes{}
    \item[] Justification: All datasets, base models, and baseline frameworks used in this work are cited via their original publications in Appendices~\ref{app:datasets} and~\ref{app:baselines}; we use them only for non-commercial academic research in accordance with their respective terms of use.
    \item[] Guidelines:
    \begin{itemize}
        \item The answer \answerNA{} means that the paper does not use existing assets.
        \item The authors should cite the original paper that produced the code package or dataset.
        \item The authors should state which version of the asset is used and, if possible, include a URL.
        \item The name of the license (e.g., CC-BY 4.0) should be included for each asset.
        \item For scraped data from a particular source (e.g., website), the copyright and terms of service of that source should be provided.
        \item If assets are released, the license, copyright information, and terms of use in the package should be provided. For popular datasets, \url{paperswithcode.com/datasets} has curated licenses for some datasets. Their licensing guide can help determine the license of a dataset.
        \item For existing datasets that are re-packaged, both the original license and the license of the derived asset (if it has changed) should be provided.
        \item If this information is not available online, the authors are encouraged to reach out to the asset's creators.
    \end{itemize}

\item {\bf New assets}
    \item[] Question: Are new assets introduced in the paper well documented and is the documentation provided alongside the assets?
    \item[] Answer: \answerYes{}
    \item[] Justification: The released code repository contains a README with usage instructions.
    \item[] Guidelines:
    \begin{itemize}
        \item The answer \answerNA{} means that the paper does not release new assets.
        \item Researchers should communicate the details of the dataset\slash code\slash model as part of their submissions via structured templates. This includes details about training, license, limitations, etc.
        \item The paper should discuss whether and how consent was obtained from people whose asset is used.
        \item At submission time, remember to anonymize your assets (if applicable). You can either create an anonymized URL or include an anonymized zip file.
    \end{itemize}

\item {\bf Crowdsourcing and research with human subjects}
    \item[] Question: For crowdsourcing experiments and research with human subjects, does the paper include the full text of instructions given to participants and screenshots, if applicable, as well as details about compensation (if any)?
    \item[] Answer: \answerNA{}
    \item[] Justification: No human subjects involved.
    \item[] Guidelines:
    \begin{itemize}
        \item The answer \answerNA{} means that the paper does not involve crowdsourcing nor research with human subjects.
        \item Including this information in the supplemental material is fine, but if the main contribution of the paper involves human subjects, then as much detail as possible should be included in the main paper.
        \item According to the NeurIPS Code of Ethics, workers involved in data collection, curation, or other labor should be paid at least the minimum wage in the country of the data collector.
    \end{itemize}

\item {\bf Institutional review board (IRB) approvals or equivalent for research with human subjects}
    \item[] Question: Does the paper describe potential risks incurred by study participants, whether such risks were disclosed to the subjects, and whether Institutional Review Board (IRB) approvals (or an equivalent approval/review based on the requirements of your country or institution) were obtained?
    \item[] Answer: \answerNA{}
    \item[] Justification: No human subjects involved.
    \item[] Guidelines:
    \begin{itemize}
        \item The answer \answerNA{} means that the paper does not involve crowdsourcing nor research with human subjects.
        \item Depending on the country in which research is conducted, IRB approval (or equivalent) may be required for any human subjects research. If you obtained IRB approval, you should clearly state this in the paper.
        \item We recognize that the procedures for this may vary significantly between institutions and locations, and we expect authors to adhere to the NeurIPS Code of Ethics and the guidelines for their institution.
        \item For initial submissions, do not include any information that would break anonymity (if applicable), such as the institution conducting the review.
    \end{itemize}

\item {\bf Declaration of LLM usage}
    \item[] Question: Does the paper describe the usage of LLMs if it is an important, original, or non-standard component of the core methods in this research? Note that if the LLM is used only for writing, editing, or formatting purposes and does \emph{not} impact the core methodology, scientific rigor, or originality of the research, declaration is not required.
    \item[] Answer: \answerYes{}
    \item[] Justification: LLMs are central to the methodology and are described throughout (Qwen3-8B as Director, GPT-4o-mini / GPT-OSS-120B / six other backends as workflow execution engines).
    \item[] Guidelines:
    \begin{itemize}
        \item The answer \answerNA{} means that the core method development in this research does not involve LLMs as any important, original, or non-standard components.
        \item Please refer to our LLM policy in the NeurIPS handbook for what should or should not be described.
    \end{itemize}

\end{enumerate}

\end{document}